\definecolor{pearThree}{HTML}{E74C3C}
\definecolor{pearcomp}{HTML}{B97E29}
\definecolor{pearDark}{HTML}{2980B9}
\definecolor{pearDarker}{HTML}{1D2DEC}
\newtheorem{theorem}{Theorem}
\newtheorem{lemma}[theorem]{Lemma}
\newtheorem{corollary}[theorem]{Corollary}
\newtheorem{definition}[theorem]{Definition}
\newtheorem{assumption}[theorem]{Assumption}
\theoremstyle{definition}
\newtheorem{remark}{Remark}
\newenvironment{proofidea}{%
  \proof}{\endproof}
\newcommand{\II}{\mathbb{I}}
\newcommand{\VISGO}{\texttt{VISGO}\xspace}
\newcommand{\ALGO}{\texttt{EB-SSP}\xspace}
\newcommand{\PHASE}{\texttt{PHASE}\xspace}
\newcommand{\smh}{s_h^m\xspace}
\newcommand{\smhp}{s_{h+1}^m\xspace}
\newcommand{\samh}{s_h^m, a_h^m\xspace}
\newcommand{\cmh}{c(s_h^m, a_h^m)\xspace}
\definecolor{vert1}{RGB}{133,146,66} 
\definecolor{vert3}{RGB}{91,140,90} 
\definecolor{vert2}{RGB}{157,193,7} 
\definecolor{vert4}{RGB}{20,200,20} 
\newcommand{\calE}{\mathcal{E}}
\newcommand{\UCRLtwo}{{\small\textsc{UCRL2}}\xspace}
\newcommand{\epsVI}{\epsilon_{\textsc{{\tiny VI}}}\xspace}
\newcommand{\wtcalL}{\wt{\mathcal{L}}\xspace}
\newcommand\myeqi{\mathrel{\stackrel{\makebox[0pt]{\mbox{\normalfont\tiny (i)}}}{=}}}
\newcommand\myineqi{\mathrel{\stackrel{\makebox[0pt]{\mbox{\normalfont\tiny (i)}}}{\leq}}}
\newcommand\myineqii{\mathrel{\stackrel{\makebox[0pt]{\mbox{\normalfont\tiny (ii)}}}{\leq}}}
\newcommand\myineqiii{\mathrel{\stackrel{\makebox[0pt]{\mbox{\normalfont\tiny (iii)}}}{\leq}}}
\newcommand\myineqiv{\mathrel{\stackrel{\makebox[0pt]{\mbox{\normalfont\tiny (iv)}}}{\leq}}}
\newcommand\myineqv{\mathrel{\stackrel{\makebox[0pt]{\mbox{\normalfont\tiny (v)}}}{\leq}}}
\newcommand\myineqvi{\mathrel{\stackrel{\makebox[0pt]{\mbox{\normalfont\tiny (vi)}}}{\leq}}}
\newcommand\myineqvii{\mathrel{\stackrel{\makebox[0pt]{\mbox{\normalfont\tiny (vii)}}}{\leq}}}
\newcommand\mygei{\mathrel{\stackrel{\makebox[0pt]{\mbox{\normalfont\tiny (i)}}}{\ge}}}
\DeclareMathOperator*{\argmin}{arg\,min}
\newcommand{\cS}{\mathcal{S}}
\newcommand{\cA}{\mathcal{A}}
\newcommand{\SA}{\mathcal{S} \times \mathcal{A}}
\newcommand\footnoteref[1]{\protected@xdef\@thefnmark{\ref{#1}}\@footnotemark}
\newcommand{\wt}[1]{\widetilde{#1}}
\newcommand{\wh}[1]{\widehat{#1}}
\newcommand{\ov}[1]{\overline{#1}}
\DeclarePairedDelimiter\abs{\lvert}{\rvert}%
\DeclarePairedDelimiter\norm{\lVert}{\rVert}%
\let\originalleft\left
\let\originalright\right
\renewcommand{\left}{\mathopen{}\mathclose\bgroup\originalleft}
\renewcommand{\right}{\aftergroup\egroup\originalright}
\newtcolorbox{redbox}{colback=red!5!white,colframe=red!75!black}
\newtcolorbox{bluebox}{colback=blue!5!white,colframe=blue!75!black}
\newtcolorbox{yellowbox}{colback=yellow!5!white,colframe=yellow!75!black}
\definecolor{darkgreen}{rgb}{0,0.5,0}
\definecolor{darkred}{rgb}{0.7,0,0}
\definecolor{teal}{rgb}{0.3,0.8,0.8}
\newlength{\myheight}
\tikzset{labels/.style={font=\sffamily\scriptsize},
    circuit/.style={draw,minimum width=2cm,minimum height=\myheight,very thick,inner sep=1mm,outer sep=0pt,cap=round,font=\sffamily\bfseries},
    triangle 45/.tip={Triangle[angle=45:8pt]}
}
\def\@fnsymbol#1{\ensuremath{\ifcase#1\or *\or \dagger\or \ddagger\or
  \mathsection\or \mathparagraph\or \|\or \diamond \or **\or \dagger\dagger
  \or \ddagger\ddagger \else\@ctrerr\fi}}
\newcommand{\printfnsymbol}[1]{%
  \textsuperscript{\@fnsymbol{#1}}%
}
\definecolor{HighlightColor}{gray}{0.97}
\definecolor{colorm}{rgb}{0.2, 0.2, 0.6}
\definecolor{colorn}{rgb}{0.77, 0.12, 0.23}
\newcommand{\PreserveBackslash}[1]{\let\temp=\\#1\let\\=\temp}
\newcolumntype{C}[1]{>{\PreserveBackslash\centering}p{#1}}
\newcommand{\algocomment}[1]{\textcolor{gray}{\textbackslash\textbackslash \textit{#1}}}
\title{Stochastic Shortest Path: Minimax, Parameter-Free and Towards Horizon-Free Regret}
\author{%
 Jean Tarbouriech\thanks{equal contribution} \\
 Facebook AI Research \& Inria Lille\\
 \texttt{jean.tarbouriech@gmail.com} \\
  \And
 Runlong Zhou\printfnsymbol{1} \\
 Tsinghua University\\
 \texttt{zhourunlongvector@gmail.com}\\
  \AND
  \hspace{-5em}Simon S. Du\\
    \hspace{-5em}University of Washington \& Facebook AI Research \\ 
    \hspace{-5em}\texttt{ssdu@cs.washington.edu} \\
    \And
  \hspace{-3.5em}Matteo Pirotta \\
  \hspace{-3.5em}Facebook AI Research Paris\\
  \hspace{-3.5em}\texttt{pirotta@fb.com}
  \AND
  \hspace{0.5em}Michal Valko \\
  \hspace{0.5em}DeepMind Paris\\
  \hspace{0.5em}\texttt{valkom@deepmind.com} \\
  \And
  \hspace{2em}Alessandro Lazaric \\
  \hspace{2em}Facebook AI Research Paris\\
  \hspace{2em}\texttt{lazaric@fb.com}
}
\begin{document}

\maketitle

\doparttoc 
\faketableofcontents 

\begin{abstract}
  We study the problem of learning in the stochastic shortest path (SSP) setting, where an agent seeks to minimize the expected cost accumulated before reaching a goal state. We design a novel model-based algorithm \ALGO that carefully skews the empirical transitions and perturbs the empirical costs with an exploration bonus to induce an optimistic SSP problem whose associated value iteration scheme is guaranteed to converge. We prove that \ALGO achieves the minimax regret rate $\wt{O}(B_{\star} \sqrt{S A K})$, where $K$ is the number of episodes, $S$ is the number of states, $A$ is the number of actions, and $B_{\star}$ bounds the expected cumulative cost of the optimal policy from any state, thus closing the gap with the lower bound. Interestingly, \ALGO obtains this result while being parameter-free, i.e., it does not require any prior knowledge of~$B_{\star}$, nor of~$T_{\star}$, which bounds the expected time-to-goal of the optimal policy from any state. Furthermore, we illustrate various cases (e.g., positive costs, or general costs when an order-accurate estimate of~$T_{\star}$ is available) where the regret only contains a logarithmic dependence on~$T_{\star}$, thus yielding the first (nearly) horizon-free regret bound beyond the finite-horizon MDP setting. 
\end{abstract}

\captionsetup[table]{labelfont=small,font=small}

\section{Introduction}\label{sect_introduction}

Stochastic shortest path (SSP) is a goal-oriented reinforcement learning (RL) setting where the agent aims to reach a predefined goal state while minimizing its total expected cost \citep{bertsekas1995dynamic}. In particular, the interaction between the agent and the environment ends \textit{only} when (and if) the goal state is reached, so the length of an episode is not predetermined (nor bounded) and it is influenced by the agent's behavior. SSP includes both finite-horizon and discounted Markov Decision Processes (MDPs) as special cases. Moreover, many common RL problems can be cast under the SSP formulation, such as game playing (e.g., Atari games) or navigation (e.g., Mujoco mazes).

We study the online learning problem in the SSP setting (online SSP in short), where both the transition dynamics and the cost function are initially unknown and the agent interacts with the environment through multiple episodes. The learning objective is to achieve a performance as close as possible to the optimal policy $\pi^{\star}$, that is, the agent should achieve low \textit{regret} (i.e., the cumulative difference between the total cost accumulated across episodes by the agent and by the optimal policy). We identify three desirable properties for a learning algorithm in online~SSP.

\vspace{-0.05in}

\begin{itemize}[leftmargin=2mm]
 \setlength\itemsep{-0.2em}
    \item \textbf{Desired property 1: Minimax.} The information-theoretic lower bound on the regret is $\Omega(B_{\star} \sqrt{S A K})$ \citep{rosenberg2020near}, where $K$~is the number of episodes, $S$~is the number of states, $A$~is the number of actions, and $B_{\star}$~bounds the total expected cost of the optimal policy starting from any state (assuming for simplicity that $B_{\star} \geq 1$). 

\vspace{-0.04in}
\begin{center}
    \textit{An algorithm for online SSP is (nearly) minimax optimal if its regret is bounded by $\wt{O}(B_{\star} \sqrt{S A K})$, up to logarithmic factors and lower-order terms.}
\end{center}

\item \textbf{Desired property 2: Parameter-free.} Another relevant dimension is the amount of prior knowledge required by the algorithm. While the knowledge of $S$, $A,$ and the cost (or reward) range $[0, 1]$ is standard across regret-minimization settings (e.g., finite-horizon, discounted, average-reward), the complexity of learning in SSP problems may be linked to SSP-specific quantities such as $B_{\star}$ and $T_{\star}$, which denotes the expected time-to-goal of the optimal policy from any state. 

\vspace{-0.04in}
\begin{center}
\textit{An algorithm for online SSP is parameter-free if it relies neither on $T_{\star}$ nor $B_{\star}$ prior knowledge.}
\end{center}

\vspace{0.05in}

\item \textbf{Desired property 3: Horizon-free.} A core challenge in SSP is to trade off between minimizing costs and quickly reaching the goal state. This is accentuated when the instantaneous costs are small, i.e., when there is a mismatch between $B_{\star}$ and $T_{\star}$. Indeed, while $B_{\star} \leq T_{\star}$ always holds since the cost range is $[0, 1]$, the gap between the two may be arbitrarily large (see e.g., the simple example of App.\,\ref{app_Tstar}). The lower bound stipulates that the regret does depend on $B_{\star}$, while the ``time horizon'' of the problem, i.e., $T_{\star}$ should a priori not impact the regret, even as a lower-order term. 

\vspace{-0.04in}
\begin{center}
\textit{An algorithm for online SSP is (nearly)\,horizon-free if its regret depends only logarithmically on~$T_{\star}$.}
\end{center}

\vspace{-0.03in}

Our definition extends the property of so-called horizon-free bounds recently uncovered in finite-horizon MDPs with total reward bounded by~$1$ \citep{wang2020long,zhang2020reinforcement,zhang2021variance}. These bounds depend only logarithmically on the horizon~$H$, which is the number of time steps by which \textit{any} policy terminates. Such notion of horizon would clearly be too strong in the more general class of SSP, where some (even most) policies may never reach the goal, thus having unbounded time horizon. A more adequate notion of horizon in SSP is~$T_{\star}$, which bounds the \textit{expected} time of the \textit{optimal} policy to terminate the episode starting from any state.

\end{itemize}

\vspace{-0.05in}

Finally, while the previous properties focus on the learning aspects of the algorithm, another important consideration is computational efficiency. It is desirable that a learning algorithm has run-time complexity at most polynomial in~$K, S, A, B_{\star},$ and $ T_{\star}$. All existing algorithms for online SSP, including the one proposed in this paper, meet such requirement.



\newcommand{\wtO}{{\scalebox{0.92}{$\wt O$}}}
\newcommand{\Bstarexp}{{\scalebox{0.92}{$B_{\star}^{3/2}$}}}
\newcommand{\Kexp}{{\scalebox{0.92}{$K^{3/2}$}}}

\vspace{-0.04in}

\paragraph{Related Work.} Table \ref{tab:comparisons} reviews the existing work on online learning in SSP. The setting 
was first studied by \citet{tarbouriech2019no} who gave a parameter-free algorithm with a $\wtO (\Kexp)$ regret guarantee.  \citet{rosenberg2020near} then improved this result by deriving the first order-optimal algorithm with regret $\wtO (\Bstarexp S \sqrt{A K})$ in the parameter-free case and $\wtO (B_{\star} S \sqrt{A K})$ if $B_{\star}$ is known (to tune cost perturbation appropriately). 
Both approaches 
are model-optimistic,\footnote{\label{footnote1}We refer the reader to \citet{neu2020unifying} for details on the differences and interplay between model-optimistic and value-optimistic~approaches.} drawing inspiration from the ideas behind the \UCRLtwo algorithm \citep{jaksch2010near} for average-reward~MDPs. 


Concurrently to our work, \citet{cohen2021minimax} propose an algorithm for online SSP based on a black-box reduction from SSP to finite-horizon MDPs. It successively tackles finite-horizon problems with horizon set to $H = \Omega(T_{\star})$ and costs augmented by a terminal cost set to $c_H(s) = \Omega(B_{\star} \mathds{I}(s \neq g))$, where $g$ denotes the goal state. This finite-horizon construction guarantees that its optimal policy has a similar value function to the optimal policy in the original SSP instance up to a lower-order bias. Their algorithm comes with a regret bound of $O(B_{\star} \sqrt{ S A K} L + T_{\star}^4 S^2 A L^5)$, with $L = \log(K T_{\star} S A \delta^{-1})$ (with probability at least $1-\delta$). It achieves a nearly minimax-optimal rate, however it relies on both~$T_{\star}$ and $B_{\star}$ prior knowledge to tune the horizon and terminal cost in the reduction, respectively.\footnote{As mentioned by \citet[][Remark 2]{cohen2021minimax}, in the case of positive costs lower bounded by $c_{\min} > 0$, their knowledge of~$T_{\star}$ can be bypassed by replacing it with the upper bound $T_{\star} \leq B_{\star} / c_{\min}$. However, when generalizing from the $c_{\min}$ case to general costs with a perturbation argument, their regret guarantee worsens from $\wt{O}(\sqrt{K} + c_{\min}^{-4})$ to $\wt{O}(K^{4/5})$, because of the poor additive dependence on $c_{\min}^{-1}$.} 

\vspace{-0.03in}

Finally, all existing bounds contain lower-order dependencies either on $T_{\star}$ in the case of general costs, or on $B_{\star} / c_{\min}$ in the case of positive costs lower bounded by $c_{\min} > 0$ (note that $T_{\star} \leq B_{\star} / c_{\min}$, which is one of the reasons why $c_{\min}$ can show up in existing bounds). As such, no existing analysis satisfies horizon-free properties for online SSP.

\vspace{-0.04in}


\colorlet{colorast}{pearThree!80!black}

\begin{table}[t!]
		\centering
		\small
		\resizebox{0.99\columnwidth}{!}{%
			\renewcommand{\arraystretch}{2.2}
			\begin{tabular}{|c|c|c|c|c|}
                \hline 
				\textbf{Algorithm} &  \textbf{Regret} & \textbf{Minimax} & \textbf{Parameters} & \makecell{\textbf{Horizon-} \\ \textbf{Free}} \\
				\hhline{|=|=|=|=|=|}
				\citep{tarbouriech2019no}  &  $\wt{O}_{\scalebox{1}{$\scriptscriptstyle K$}}(K^{2/3})$
				&  No & None  & No \\
            	\hhline{|=|=|=|=|=|} \multirow{2}{*}{\citep{rosenberg2020near}} & $ \widetilde{O}\left( B_{\star} S \sqrt{ AK}+ T_{\star} ^{3/2} S^2 A \right)$ & No & $B_{\star}$ &  No   \\\hhline{~-|-|-|-}
				&  $ \widetilde{O}\left( B_{\star}^{3/2} S \sqrt{ AK}+ T_{\star} B_{\star} S^2 A \right)$  & No & None & No\\
				\hhline{|=|=|=|=|=|}	\multirow{1}{*}{\makecell{\citep{cohen2021minimax} \\ \textit{(concurrent work)}}} &  $ \widetilde{O}\left( B_{\star} \sqrt{S A K}+ T^4_{\star} S^2 A \right)$  & Yes & $B_{\star}$, $T_{\star}$ & No \\
				\hhline{|=|=|=|=|=|}
			 \rowcolor{HighlightColor}  & $ \widetilde{O}\left( B_{\star} \sqrt{S A K}+  B_{\star} S^2 A \right)$  & \textbf{Yes} & $B_{\star}$, $T_{\star}$ & \textbf{Yes} \\\hhline{~-|-|-|-}
			 \rowcolor{HighlightColor} & $  \widetilde{O}\left( B_{\star} \sqrt{S A K} +  B_{\star} S^2 A  + \frac{T_{\star}}{\textup{poly}(K)} \right)$ & \textbf{Yes} & $B_{\star}$ & No\textcolor{colorast}{$^{\ast}$} \\\hhline{~-|-|-|-}
		   \rowcolor{HighlightColor} \multirow{-2}{*}{This work} 	& $ \widetilde{O}\left( B_{\star} \sqrt{S A K}+ B_{\star}^3 S^3 A \right)$   & \textbf{Yes} & $T_{\star}$ & \textbf{Yes} \\\hhline{~-|-|-|-}
			\rowcolor{HighlightColor} & $ \widetilde{O}\left( B_{\star} \sqrt{S A K}+  B_{\star}^3 S^3 A + \frac{T_{\star}}{\textup{poly}(K)} \right)$   & \textbf{Yes} & \textbf{None} & No\textcolor{colorast}{$^{\ast}$} \\
				\hhline{|=|=|=|=|=|}
				Lower Bound  & $\Omega( B_{\star}\sqrt{S A K})$ & - & - & -\\
				\hline
			\end{tabular}
		}
		\vspace{0.1in}
		\caption{
			{\small Regret comparisons of algorithms for online SSP (we assume for simplicity that $B_{\star} \geq 1$). The notation
			$\widetilde{O}$ omits logarithmic factors and $\wt{O}_{\scalebox{1}{$\scriptscriptstyle K$}}$ only reports the dependence in $K$. 
			\textbf{Regret} is the performance metric of Eq.\,\ref{eq_ssp_regret_def}.
            \textbf{Minimax}: Whether the regret matches the $\Omega( B_{\star}\sqrt{S A K})$ lower bound \citep{rosenberg2020near}, up to logarithmic and lower-order terms.
            \textbf{Parameters}: The parameters  that the algorithm requires as input: either both $B_{\star}$ and $T_{\star}$, or one of them, or none (i.e., parameter-free).
			\textbf{Horizon-Free}: Whether the regret bound depends only logarithmically on~$T_{\star}$. \textcolor{colorast}{$^{\ast}$}If $K$ is known in advance, the additive term $T_{\star} / \textup{poly}(K)$ has a denominator that is polynomial in $K$, so it becomes negligible for large values of $K$ (if $K$ is unknown, the additive term is $T_{\star}$). See Sect.\,\ref{sect_main_results} for the full statements of our bounds.
			}
			\label{tab:comparisons}
		}
\end{table}

\paragraph{Contributions.} We summarize our main contributions as follows (see also Table \ref{tab:comparisons}):

\vspace{-0.05in}
\begin{itemize}[leftmargin=*]
    \item We propose \ALGO (\texttt{Exploration Bonus for SSP}), a new algorithm for online SSP. It introduces a value-optimistic scheme to efficiently compute optimistic policies for SSP, by both perturbing the empirical costs with an exploration bonus and slightly biasing the empirical transitions towards reaching the goal from \textit{each} state-action pair with positive probability. 
    Under these biased transitions, \textit{all} policies are in fact proper 
    (i.e., they eventually reach the goal with probability~$1$ starting from any state). We decay the bias over time in a way that it only contributes to a lower-order regret term. See Sect.\,\ref{sect_overview} for an overview of our algorithm and analysis. Note that \ALGO is \textit{not} based on a model-optimistic approach\footnoteref{footnote1} \citep{tarbouriech2019no, rosenberg2020near}, and it does \textit{not} rely on a reduction from SSP to finite-horizon \citep{cohen2021minimax} (i.e., we operate at the level of the non-truncated SSP model);
    \item \ALGO is the first algorithm to achieve the \textbf{minimax} regret rate of $\wt{O}(B_{\star} \sqrt{ S A K})$ while simultaneously being \textbf{parameter-free}: it does not require to know nor estimate $T_{\star}$, and it is able to bypass the knowledge of $B_{\star}$ at the cost of only logarithmic and lower-order contributions to the regret;
    \item \ALGO is the first algorithm to achieve \textbf{horizon-free} regret for SSP in various cases: i) positive costs, ii) no almost-sure zero-cost cycles, and iii) the general cost case when an order-accurate estimate of $T_{\star}$ is available (i.e., a value $\ov T_{\star}$ such that $\frac{T_{\star}}{\upsilon} \leq \ov T_{\star} \leq \lambda T_{\star}^{\zeta}$ for some unknown constants $\upsilon, \lambda, \zeta \geq 1$ is available). This property is especially relevant if $T_{\star}$ is much larger than $B_{\star}$, which can occur in SSP models with very small instantaneous costs. Moreover, \ALGO achieves its horizon-free guarantees while maintaining the minimax rate. For instance, under general costs when relying on $T_{\star}$ and $B_{\star}$, its regret is $\wt{O}(B_{\star} \sqrt{ S A K} + B_{\star} S^2 A)$.\footnote{We conjecture the optimal problem-independent regret in SSP to be $\wt{O}(B_{\star} \sqrt{ S A K} + B_{\star} S A)$ (by analogy with the conjecture of \citet{menard2021ucb} for finite-horizon MDPs), which shows the tightness of our bound up to an $S$ lower-order factor.} To the best of our knowledge, \ALGO yields the first set of (nearly) horizon-free bounds beyond the setting of finite-horizon~MDPs. 
\end{itemize}

\vspace{0.05in}

\textbf{Additional Related Work.} \textbf{\textit{~Planning in SSP:}} Early work by \citet{bertsekas1991analysis}, followed by \citep[e.g.,][]{bertsekas1995dynamic,bonet2007speed,kolobov2011heuristic,bertsekas2013stochastic, guillot2020stochastic}, examine the planning problem in SSP, i.e., how to compute an optimal policy when all parameters of the SSP model are known. Under mild assumptions, the optimal policy is deterministic and stationary and can be computed efficiently using standard planning techniques, e.g., value iteration, policy iteration or linear programming.

\textbf{\textit{Regret minimization in MDPs:}} The exploration-exploitation dilemma in tabular MDPs has been extensively studied in finite-horizon \citep[e.g.,][]{azar2017minimax,jin2018q, zanette2019tighter, efroni2019tight, simchowitz2019non, zhang2020almost, neu2020unifying, xu2021fine,menard2021ucb} and infinite-horizon \citep[e.g.,][]{jaksch2010near,bartlett2012regal, fruit2018efficient,wang2019q,jian2019exploration,wei2019model}.

\textbf{\textit{Other SSP-based settings:}} SSP with adversarial costs was investigated by \citet{rosenberg2020stochastic, chen2020minimax, chen2021finding}.
\footnote{A different line of work \citep[e.g.][]{neu2010online, neu2012adversarial,rosenberg2019onlinea, rosenberg2019onlineb, jin2020learning, jin2020simultaneously} studies finite-horizon MDPs with adversarial costs (sometimes called online loop-free SSP), where an episode ends after a fixed number of $H$ steps (as opposed to lasting as long as the goal is reached). 
} \citet{tarbouriech2021sample} study the sample complexity of SSP with a generative model, as a standard regret-to-PAC conversion may not hold in SSP (as opposed to finite-horizon). Exploration problems involving multiple goal states (i.e.,~multi-goal SSP or goal-conditioned RL) were analyzed by \citet{lim2012autonomous, tarbouriech2020improved}.

\section{Preliminaries}\label{sect_preliminaries}

An SSP problem is an MDP 
$M := \langle \mathcal{S}, \mathcal{A}, P, c, s_0, g \rangle$, where $\mathcal{S}$ is the finite state space with cardinality~$S$, $\mathcal{A}$ is the finite action space with cardinality $A$, and $s_0 \in \cS$ is the initial state. We denote by $g \notin \mathcal{S}$ the goal state, and we set $\cS' := \cS \cup \{g\}$ (thus $S' := S + 1$). Taking action $a$ in state $s$ incurs a cost drawn i.i.d.\,from a distribution on $[0, 1]$ with expectation $c(s, a)$, and the next state $s' \in \cS'$ is selected with probability $P(s' \vert s, a)$ (where $\sum_{s' \in \cS'} P(s' \vert s, a) = 1$). The goal state $g$ is absorbing and zero-cost, i.e., $P(g \vert g,a) = 1$ and $c(g,a)=0$ for any action $a$. 

For notational convenience, let $P_{s,a}:= P(\cdot \vert s,a)$, $P_{s,a,s'}:=P(s' \vert s,a)$. For any two vectors~$X,Y$ of size $S'$, we write their inner product as $X Y  := \sum_{s \in \cS'} X(s) Y(s)$, we denote by $X^2$ the vector $[X(1)^2, X(2)^2, \ldots, X(S')^2]^\top$, let $\norm{X}_{\infty} := \max_{s \in \cS'} \vert X(s) \vert$, $\norm{X}_{\infty}^{\neq g} := \max_{s \in \cS} \vert X(s) \vert$, and if~$X$ is a probability distribution on $\cS'$, then $\mathbb{V}(X, Y) := \sum_{s \in \cS'} X(s) Y(s)^2 - (\sum_{s \in \cS'} X(s) Y(s))^2$.

A stationary and deterministic policy $\pi : \cS \rightarrow \cA$ is a mapping from state $s$ to action $\pi(s)$. A policy $\pi$ is said to be proper if it reaches the goal with probability~$1$ when starting from any state in $\cS$ (otherwise it is improper). We denote by $\Pi_{\text{proper}}$ the set of proper, stationary and deterministic policies. 
We make the following basic assumption which ensures that the SSP problem is well-posed.

\begin{assumption}\label{asm_proper}
    There exists at least one proper policy, i.e., $\Pi_{\text{proper}} \neq \emptyset$.
\end{assumption}

\makeatletter
\newcommand*\mysizebis{%
   \@setfontsize\mysizebis{8.6}{9}%
}
\makeatother

The agent's objective is to minimize its expected cumulative cost incurred until the goal is reached. The value function (also called cost-to-go) of a policy $\pi$ and its associated $Q$-function are defined as 

\vspace{-0.2in}
\begin{mysizebis}
\begin{align*}
    V^{\pi}(s) := \lim_{T \rightarrow \infty} \mathbb{E}\bigg[ \sum_{t = 1}^{T} c_t(s_{t}, \pi(s_t)) \,\big\vert\,s_1 = s\bigg], ~\, Q^{\pi}(s,a) := \lim_{T \rightarrow \infty} \mathbb{E}\bigg[ \sum_{t = 1}^{T} c_t(s_{t}, \pi(s_t)) \,\big\vert\, s_1 = s, \pi(s_1) = a\bigg],
\end{align*}%
\end{mysizebis}%
where $c_t \in [0, 1]$ is the (instantaneous) cost incurred at time $t$ at state-action pair $(s_t, \pi(s_t))$, and the expectation is w.r.t.\,the random sequence of states generated by executing $\pi$ starting from state $s \in \cS$ (and taking action $a \in \cA$ in the second case). Note that $V^{\pi}$ may have unbounded components if $\pi$ never reaches the goal. For a proper policy $\pi$, $V^{\pi}(s)$ and $Q^{\pi}(s,a)$ are finite for any $s,a$. By definition of the goal, we set $V^{\pi}(g) = Q^{\pi}(g,a) = 0$ for all policies $\pi$ and actions $a$. Finally, we denote by~$T^{\pi}(s)$ the expected time that $\pi$ takes to reach $g$ starting at state $s$; in particular, if $\pi$ is proper then~$T^{\pi}(s)$ is finite for all $s$, yet if $\pi$ is improper there must exist at least one $s$ such that~$T^{\pi}(s) = \infty$.

Equipped with Asm.\,\ref{asm_proper} and an additional condition on improper policies defined below, one can derive important properties on the optimal policy $\pi^{\star}$ that minimizes the value function component-wise.

\begin{lemma}[\citealp{bertsekas1991analysis};\citealp{yu2013boundedness}] \label{lemma_wellposedproblem}
Suppose that Asm.\,\ref{asm_proper} holds and that for every improper policy $\pi'$ there exists at least one state $s \in \cS$ such that $V^{\pi'}(s) = + \infty$. Then the optimal policy $\pi^{\star}$ is stationary, deterministic, and proper. Moreover, $V^\star = V^{\scalebox{1.1}{$\scriptscriptstyle \pi^\star$}}$ is the unique solution of the optimality equations $V^\star = \mathcal{L} V^\star$ and $V^\star(s) < +\infty$ for any $s \in \mathcal{S}$, where for any vector $V \in \mathbb{R}^S$ the optimal Bellman operator $\mathcal{L}$ is defined as $\mathcal{L}V(s) := \min_{a \in \mathcal{A}} \big\{ c(s, a) + P_{s,a} V \big\}$. Also, the optimal $Q$-value, denoted by $Q^{\star}  = Q^{\scalebox{1.1}{$\scriptscriptstyle \pi^\star$}}$, is related to the optimal value function as follows: $Q^{\star}(s,a) = c(s,a) + P_{s,a} V^{\star}$ and $V^{\star}(s) = \min_{a \in \cA} Q^{\star}(s,a)$, for all $(s,a) \in \cS \times \cA$.
\end{lemma}

Since we will target the best proper policy, we will handle the second requirement of Lem.\,\ref{lemma_wellposedproblem} as follows \citep{bertsekas2013stochastic, rosenberg2020near}. First, the requirement is in particular verified if all instantaneous costs are strictly positive. To deal with the case of non-negative costs, we can introduce a small additive perturbation $\eta \in (0, 1]$ to all costs to yield a new (strictly positive) cost function $c_{\eta}(s,a) = \max \{ c(s,a), \eta\}$. In this cost-perturbed MDP, the conditions of Lem.\,\ref{lemma_wellposedproblem} hold so we get an optimal policy $\pi^{\star}_{\scalebox{1.1}{$\scriptscriptstyle \eta$}}$ that is stationary, deterministic and proper and has a finite value function $V^{\star}_{\scalebox{1.1}{$\scriptscriptstyle \eta$}}$. Taking the limit as $\eta \rightarrow 0$, we have that $\pi^{\star}_{\scalebox{1.1}{$\scriptscriptstyle \eta$}} \rightarrow \pi^{\star}$ and $V^{\star}_{\scalebox{1.1}{$\scriptscriptstyle \eta$}} \rightarrow V^{\scalebox{1.1}{$\scriptscriptstyle \pi^\star$}}$, where $\pi^{\star}$ is the optimal proper policy in the original model that is also stationary and deterministic, and $V^{\scalebox{1.1}{$\scriptscriptstyle \pi^\star$}}$ denotes its value function. This enables to circumvent the second condition of Lem.\,\ref{lemma_wellposedproblem} and only require Asm.\,\ref{asm_proper} to hold.

\paragraph{Learning formulation.} We consider the learning problem where the agent does not have any prior knowledge of the cost function $c$ or transition function $P$. Each episode starts at the initial state $s_0$ (the extension to any possibly unknown distribution of initial states is straightforward), and ends \textit{only} when the goal state $g$ is reached (note that this may never happen if the agent does not reach the goal). We evaluate the performance of the agent after $K$ episodes by its \textit{regret}, which is defined as
\begin{align}\label{eq_ssp_regret_def}
    R_K:= \sum_{k=1}^K \sum_{h=1}^{I^k} c_h^k ~ - K \cdot \min_{\pi \in \Pi_{\text{proper}}}  V^{\pi}(s_0),
\end{align}
where $I^k$ is the time needed to complete episode $k$ and $c_h^k$ is the cost incurred in the $h$-th step of episode $k$ when visiting $(s_h^k, a_h^k)$. If there exists $k$ such that $I^k$ is infinite, then we define~$R_K = \infty$. Throughout we denote the optimal proper policy by $\pi^{\star}$ and $V^{\star}(s) := V^{\pi^{\star}}(s) = \min_{\pi \in \Pi_{\text{proper}}}  V^{\pi}(s)$ and $Q^{\star}(s,a) := Q^{\pi^{\star}}(s,a) = \min_{\pi \in \Pi_{\text{proper}}}  Q^{\pi}(s,a)$ for all $(s,a)$. Let $B_{\star} > 0$ bound the values of~$V^{\star}$, i.e., $B_{\star} := \max_{s \in \cS} V^{\star}(s)$. Note that $Q^{\star}(s,a) \leq 1 + B_{\star}$. Also let $T_{\star} > 0$ bound the expected time-to-goal of the optimal policy, i.e., $T_{\star} := \max_{s \in \cS} T^{\pi^{\star}}(s)$. We see that $B_{\star} \leq T_{\star} < + \infty$.

\section{Main Algorithm}\label{sect_overview}

\newcommand{\wtP}{{\scalebox{0.92}{$\wt P$}}}
\newcommand{\whP}{{\scalebox{0.92}{$\wh P$}}}
\newcommand{\wtcalLs}{{\scalebox{0.92}{$\wtcalL$}}}

We introduce our algorithm \ALGO (\texttt{Exploration Bonus for SSP}) in Alg.\,\ref{algo}. It takes as input the state-action space $\cS \times \cA$ and confidence level $\delta \in (0, 1)$. For now it considers that an estimate $B$ such that $B \ge \max\{B_{\star}, 1\}$ is available, and we later handle the case of unknown $B_{\star}$ (Sect.\,\ref{subsect_unknownBstar} and App.\,\ref{sect_unknown_Bstar}). As explained in Sect.\,\ref{sect_preliminaries}, the algorithm enforces the conditions of Lem.\,\ref{lemma_wellposedproblem} to hold by adding a small cost perturbation $\eta \in [0, 1]$ (cf.\,lines \ref{line_input_eta}, \ref{line_cost_eta} in Alg.\,\ref{algo}) --- either $\eta = 0$ if the agent is aware that all costs are already positive, otherwise a careful choice of $\eta > 0$ is provided in Sect.\,\ref{sect_main_results}. 

Our algorithm builds on a value-optimistic approach by sequentially constructing optimistic lower bounds on the optimal $Q$-function and executing the policy that greedily minimizes them. 
Similar to the MVP algorithm of \citet{zhang2020reinforcement} designed for finite-horizon RL, we adopt the doubling update framework (first proposed by \citet{jaksch2010near}): whenever the number of visits of a state-action pair is doubled, the algorithm updates the empirical cost and transition probability of this state-action pair, and computes a new optimistic $Q$-estimate and optimistic greedy policy. Note that this slightly differs from MVP which waits for the end of its finite-horizon episode to update the policy. In SSP, however, having this delay may yield linear regret as the episode has the risk of never terminating under the current policy (e.g., if it is improper), which is why we perform the policy update instantaneously when the doubling condition is met.

\makeatletter
\newcommand*\mysizethree{%
   \@setfontsize\mysizethree{7.8}{9}%
}
\makeatother

\begin{algorithm}[t!]
        \small
        \DontPrintSemicolon
        \textbf{Input:} $\cS,\ s_0\in \cS,\ g\not\in \cS,\ \cA,\ \delta$. \\
        \textbf{Input:} an estimate $B$ guaranteeing $B \ge \max\{B_{\star}, 1\}$ (see Sect.\,\ref{subsect_unknownBstar} and App.\,\ref{sect_unknown_Bstar} if not available). \label{line-B-Bstar} \\
        \textbf{Optional input:} cost perturbation $\eta \in [0, 1]$. \label{line_input_eta} \\
        \textbf{Specify:} Trigger set $\mathcal{N} \leftarrow \{ 2^{j-1}\ :\ j=1,2,\ldots\}$. Constants $c_1 = 6,\ c_2 = 36,\ c_3 = 2 \sqrt{2},\ c_4 = 2 \sqrt{2}$. \\ 
        For $(s,a,s') \in \cS \times \cA \times \cS'$, set $N(s,a) \leftarrow 0; ~ n(s,a) \leftarrow 0; ~ N(s,a,s') \leftarrow 0; ~ \wh P_{s,a,s'} \leftarrow 0;$ ~$\theta(s,a) \leftarrow 0; ~ \wh c (s,a) \leftarrow 0; ~ Q(s,a) \leftarrow 0; ~ V(s) \leftarrow 0$. \\
        Set initial time step $t \leftarrow 1$ and trigger index $j \leftarrow 0$. \\
        \For{\textup{episode} $k=1, 2, \ldots$}
        {Set $s_t \leftarrow s_0$ \\
        \While{$s_t \neq g$}{
        Take action $a_t = \argmin_{a \in \cA} Q(s_t, a)$, incur cost $c_t$ and observe next state $s_{t+1} \sim P(\cdot \vert s_t, a_t)$.\\
        Set $(s,a,s',c) \leftarrow (s_t, a_t, s_{t+1}, \max\{c_t, \eta\})$ and $t \leftarrow t + 1$.\\
        Set $N(s,a) \leftarrow N(s,a) + 1$, $\theta(s,a) \leftarrow \theta(s,a) + c$,  $N(s,a,s') \leftarrow N(s,a,s') + 1$. \label{line_cost_eta} \\
        \If{$N(s,a) \in \mathcal{N}$ \label{line_update}}{
        \algocomment{~Update triggered: \textup{\VISGO} procedure.} \label{begin_trigger} \\
        Set $\wh{c}(s,a) \leftarrow \mathds{I}[N(s,a) \geq 2] \frac{2 \theta(s,a)}{N(s,a)} + \mathds{I}[N(s,a) = 1] \theta(s,a)$ and $\theta(s,a) \leftarrow 0$. \label{line_emp_costs} \\
        For $s' \in \mathcal{S}'$, set $\wh P_{s,a,s'} \leftarrow N(s,a,s') / N(s,a)$, $n(s,a) \leftarrow N(s,a)$, and $\wt P_{s,a,s'}$ as in Eq.\,\ref{eq_wtP}. \\
        Set $j \leftarrow j + 1$, $\epsVI \leftarrow 2^{-j} / (SA)$ and $i \leftarrow 0$, $V^{(0)} \leftarrow 0$, $V^{(-1)} \leftarrow +\infty$. \\
         For all $(s,a) \in \cS \times \cA$, set $n^+(s,a) \leftarrow \max\{n(s,a), 1\}$ and $\iota_{s,a} \gets \ln \left( \frac{12 S A S' [n^+ (s,a)]^2}{\delta} \right)$.\\
        \While{$\norm{ V^{(i)} - V^{(i-1)}}_{\infty} > \epsVI$}
        { 
        For all $(s,a) \in \cS \times \cA$, set 
        \vspace{-0.05in}
        \begin{align}
            &b^{(i+1)}(s,a) \, \leftarrow \, b(V^{(i)}, s, a), \quad \quad \textrm{\algocomment{~see Eq.\,\ref{eq_bonus} for bonus expression}} \label{update_bonus}  \\
            &Q^{(i+1)}(s,a) \, \leftarrow \, \max\big\{ \wh{c}(s,a) \,+ \, \wt P_{s,a} V^{(i)}  ~ - \,b^{(i+1)}(s,a), ~ 0 \big\}, \label{update_Q} \\
            &V^{(i+1)}(s) \, \leftarrow \, \min_a Q^{(i+1)}(s,a). \label{update_V} 
        \end{align} \\
        \vspace{-0.1in}
        Set $V^{(i+1)}(g) = 0$ and $i \leftarrow i+1$.
        }
        \vspace{-0.05in}
        Set $Q \leftarrow Q^{(i)}$, $V \leftarrow V^{(i)}$. \label{output_trigger}
        }
        }
        }
        \vspace{-0.06in}
\caption{Algorithm \ALGO}
\label{algo}
\end{algorithm}

The main algorithmic component lies in how to compute the $Q$-values (w.r.t.\,which the policy is greedy) when a doubling condition is met. To this purpose, we introduce a procedure called~\VISGO, for \texttt{Value Iteration with Slight Goal Optimism}. Starting with optimistic values $V^{(0)} = 0$, it iteratively computes $V^{(i+1)} = \wtcalLs V^{(i)}$ for a carefully defined operator $\wtcalLs$. It ends when a stopping condition is met, specifically once $\norm{V^{(i+1)} - V^{(i)}}_{\infty} \leq \epsVI$ for a precision level $\epsVI > 0$ (specified later), and it outputs the values $V^{(i+1)}$ (and $Q$-values $Q^{(i+1)}$). We now explain how we design $\wtcalLs$ and then provide some intuition. Let $\whP$ and $\wh c$ be the current empirical transition probabilities and costs, and let $n(s,a)$ be the current number of visits to state-action pair $(s,a)$ (and $n^+(s,a) = \max\{n(s,a), 1\}$). We first define transition probabilities $\wtP$ that are slightly skewed towards the goal w.r.t.\,$\whP$, as follows
%
\begin{align}\label{eq_wtP}
    \wt P_{s,a,s'} := \frac{n(s,a)}{n(s,a)+1} \wh P_{s,a,s'} + \frac{\mathds{I}[s'=g]}{n(s,a)+1}.
\end{align}
Given the estimate $B$, specific positive constants $c_1, c_2, c_3, c_4$ and a state-action dependent logarithmic term~$\iota_{s,a}$, we then define the exploration bonus function, for any state-action pair $(s,a) \in \SA$ and vector $V \in \mathbb{R}^{S'}$ such that $V(g) = 0$, as follows
\begin{align}\label{eq_bonus}
    b(V,s,a) := \max\Big\{ c_1 \sqrt{ \frac{\mathbb{V}(\wt P_{s,a}, V) \iota_{s,a} }{n^+(s,a)}} , \, c_2 \frac{B \iota_{s,a}}{n^+(s,a)}  \Big\} + c_3 \sqrt{\frac{\wh{c}(s,a) \iota_{s,a}}{n^+(s,a)}} + c_4 \frac{B \sqrt{S' \iota_{s,a}}}{n^+(s,a)}.
\end{align}%
Note that the last term in Eq.\,\ref{eq_bonus} accounts for the skewing of $\wtP$ w.r.t.\,$\whP$ (see Lem.\,\ref{lemma_properties_wtP}). Given the transitions $\wtP$ and exploration bonus $b$, we are ready to define the operator $\wtcalLs$ as
\begin{align}\label{eq_wtcalL_operator}
    \wtcalL V(s) := \max  \Big\{   \min_{a \in \cA}\big\{ \wh{c}(s,a) + \wt P_{s,a} V - b(V,s,a) \big\} , \, 0 \Big\}.
\end{align}
We see that $\wtcalLs$ promotes optimism in two different ways: 
\begin{itemize}[leftmargin=.2in,topsep=-1.5pt,itemsep=1pt,partopsep=0pt, parsep=0pt]
    \item[\textbf{(i)}] On the empirical cost function $\wh c$, via the bonus $b$ (Eq.\,\ref{eq_bonus}) that intuitively lowers the costs to $\wh c - b$;
    \item[\textbf{(ii)}] On the empirical transition function $\whP$, via the transitions $\wtP$ (Eq.\,\ref{eq_wtP}) that slightly bias $\whP$ with the addition of a non-zero probability of reaching the goal from \textit{every} state-action pair.
\end{itemize}
While the first feature~\textbf{(i)} is standard in finite-horizon approaches, the second~\textbf{(ii)} is SSP-specific, and is required to cope with the fact that the empirical model $\whP$ may \textit{not} admit any proper policy, meaning that executing value iteration for SSP on $\whP$ may diverge. Our simple transition skewing actually guarantees that \textit{all} policies are proper in $\wtP$, for any fixed and bounded cost function.\footnote{In fact this transition skewing implies that an SSP problem defined on $\wtP$ is equivalent to a discounted RL problem, with a varying state-action dependent discount factor. Also note that for different albeit mildly related purposes, a perturbation trick is sometimes used in regret minimization for average-reward MDPs \citep[e.g.,][]{ fruit2018efficient,jian2019exploration}, where a non-zero probability of reaching an arbitrary state at each state-action is added to guarantee that all policies are unichain and that value iteration variants 
nearly converge in finite-time.} 
By decaying the extra goal-reaching probability inversely with $n(s,a)$, we can tightly control the gap between $\wtP$ and $\whP$ and ensure that it only accounts for a lower-order regret term (cf.\,last term of Eq.\,\ref{eq_bonus}). 

Equipped with these two sources of optimism, as long as $B \geq B_{\star}$, we are able to prove that a \VISGO procedure verifies the following two key properties:
\begin{itemize}[leftmargin=.2in,topsep=-1.5pt,itemsep=1pt,partopsep=0pt, parsep=0pt]
    \item[\textbf{(1)}] \textbf{\textit{Optimism:}} \VISGO outputs an optimistic estimator of the optimal $Q$-function at each iteration step, i.e., $Q^{(i)}(s,a) \leq Q^{\star}(s,a), \forall i \geq 0$,
    \item[\textbf{(2)}] \textbf{\textit{Finite-time near-convergence:}} \VISGO terminates within a finite number of iteration steps (note that the final iterate $V^{(j)}$ approximates the fixed point of $\wtcalLs$ up to an error scaling with $\epsVI$). 
\end{itemize}
To satisfy~\textbf{(1)}, we derive similarly to MVP \citep{zhang2020reinforcement} a \textit{monotonicity} property for the operator $\wtcalLs$, which is achieved by carefully tuning the constants $c_1, c_2, c_3, c_4$ in the bonus of Eq.\,\ref{eq_bonus}. On the other hand, the requirement~\textbf{(2)} is SSP-specific, since it is not needed in finite-horizon where value iteration requires exactly $H$ backward induction steps. \textit{Without} bonuses, the design of $\wtP$ would have directly entailed that $\wtcalLs$ is contractive and convergent \citep{bertsekas1995dynamic}. However, our variance-aware exploration bonuses introduce a subtle correlation between value iterates (i.e., $b$ depends on $V$ in Eq.\,\ref{eq_bonus}), which leads to a cost function that varies across iterates. By directly analyzing $\wtcalLs$, we establish that it is contractive with modulus $\rho := 1 - \nu < 1$, where $\nu := \min_{s,a} \wtP_{s,a,g} > 0$. This \textit{contraction} property guarantees a polynomially bounded number of iterations before terminating, i.e.,~\textbf{(2)}. 

\vspace{0.05in}
\begin{remark}[Computational complexity]
    Denote by $T$ the accumulated time within the $K$ episodes. By the stopping condition $\vert \vert V^{(i+1)} - V^{(i)} \vert \vert_{\infty} \leq \epsilon_{\textup{\tiny{VI}}}$, the choice of $\epsilon_{\textup{\tiny{VI}}}$ and the $\rho$-contraction of the operator $\wtcalLs$ with $\rho \leq 1 - 1/T$, any \VISGO procedure is guaranteed to stop at an iteration $i \leq \log(\max\{B_{\star},1\} / \epsilon_{\textup{\tiny{VI}}}) / (1-\rho) = O(T S A \log(T \max\{B_{\star},1\} ))$. Since there are at most $O(S A \log T)$ \VISGO procedures, we see that the total computational complexity of \ALGO is near-linear in $T$, where $T$ is bounded polynomially w.r.t.\,$K$ as shown in the various cases of Sect.\,\ref{subsect_bounding_T} (see App.\,\ref{app_comput_complex} for details). Therefore \ALGO is computationally efficient. Note that its $\textrm{poly}(K)$ complexity is a limitation shared by all existing parameter-free algorithms in SSP. On the other hand, the algorithm of \citet{cohen2021minimax} can obtain a $\log(K)$ computational complexity but only with $T_{\star}$ prior knowledge: without it, using the upper bound $T_{\star} \leq B_{\star} / c_{\min}$, where $c_{\min}^{-1}$ becomes $\textrm{poly}(K)$ when applying the cost perturbation trick, also leads to $\textrm{poly}(K)$ complexity. It is an interesting open question whether it is possible in SSP to have $\log(K)$ computational complexity while staying parameter-free. \label{remark_comput_complexity_main}
\end{remark}


\section{Main Results} \label{sect_main_results}


Besides ensuring the computational efficiency of \ALGO, the properties of \VISGO lay the foundations for our regret analysis (App.\,\ref{sect_main_proof}) to yield the following general guarantee.

\begin{theorem}\label{key_lemma_Tdependent}
Assume that $B \ge \max\{B_{\star}, 1\}$ and that the conditions of Lem.\,\ref{lemma_wellposedproblem} hold. Then with probability at least $1-\delta$ the regret of \textup{\ALGO} (Alg.\,\ref{algo} with $\eta = 0$) can be bounded by 
\begin{align*}
    R_K = O\left( \sqrt{ (B_{\star}^2 + B_{\star}) S A K } \log\left( \frac{\max\{B_{\star},1\} S A T }{\delta} \right) + B S^2 A \log^2\left( \frac{\max\{B_{\star},1\} S A T }{\delta} \right) \right),
\end{align*}
with $T$ the accumulated time within the $K$ episodes.
\end{theorem}

Thm.\,\ref{key_lemma_Tdependent} is an intermediate result for the regret of \ALGO, as it depends on the \textit{random and possibly unbounded} total number of steps $T$ executed over $K$ episodes, it requires the possibly restrictive second condition of Lem.\,\ref{lemma_wellposedproblem}, and it relies on the parameter $B$ being properly tuned. Nonetheless, it already displays interesting properties: \textbf{1)} The dependence on $T$ is limited to logarithmic terms; \textbf{2)} The parameter $B$ only affects the lower order term, while the main order term naturally scales with the exact range $B_\star$; \textbf{3)} Up to dependence on $T$, the main order term displays minimax optimal dependencies on $B_\star$, $S$, $A$, and $K$.

Throughout the rest of the section, we consider for ease of exposition that $B_{\star} \geq 1$.\footnote{Otherwise, all later bounds hold by replacing $B_{\star}$ with $\max\{ B_{\star}, 1\}$, except for the $B_{\star}$ factor in the leading term that becomes $\sqrt{B_{\star}}$. This matches the lower bound of \cite{cohen2021minimax} of $\Omega(\sqrt{B_{\star} S A K})$ for $B_{\star} < 1$.} For simplicity, when tuning the cost perturbations later, we assume as in prior works \citep[e.g.,][]{rosenberg2020near, chen2020minimax, chen2021finding} that the total number of episodes $K$ is known to the agent (this knowledge can be eliminated with the standard doubling~trick).

\textbf{\textit{Proof idea of Thm.\,\ref{key_lemma_Tdependent}.}} We decompose the regret into three parts: $X_1$ (error on the optimistic $V$-values), $X_2$ (Bellman error) and $X_3$ (cost estimation error), and among them the major part is~$X_2$. Later, $X_1$ and $X_2$ introduce the intermediate quantities $X_4$ (variance of the optimistic $V$-values) and $X_5$ (variance of the differences $V^{\star} - V$), which are bounded using the recursion technique generalized from \citet{zhang2020reinforcement}, where we normalize the values by~$1/B_{\star}$ to avoid an exponential blow-up in the recursions. At a high-level, the key idea is to calculate errors of different orders, $F(1), F(2), \ldots, F(d), \ldots$ (see Lem.\,\ref{lemma_bound_X4} and \ref{lemma_bound_X5}), and recursively bound $F(i)$'s variance by a sublinear function of $F(i + 1)$. Throughout the proof, we bound quantities by solving inequalities that contain the unknown quantities on both sides, such as $X_3 \leq \widetilde{O}(\sqrt{X_3 + C_{K}})$ or $X_2 \leq \widetilde{O}(\sqrt{X_2 + C_{K}})$, where the random variable $C_K$ denotes the cumulative cost over the $K$ episodes. Indeed, the analysis at each time step $t$ brings out the instantaneous cost $c_t$ and it is important to combine them so that we can make $C_K$ appear explicitly. Ultimately, we obtain a regret bound scaling as $R_K = \wt{O}((\sqrt{B_{\star}} + 1) \sqrt{S A C_K})$. Since the regret in SSP is defined as $R_K = C_K - K V^{\star}(s_0)$, we obtain a quadratic inequality in $C_K$, which we solve to 
get the $\wt{O}(\sqrt{(B_{\star}^2 + B_{\star}) S A K})$ regret bound. 

\subsection{Regret Bounds for $B = B_{\star}$}\label{subsect_bounding_T}


First we assume that $B = B_{\star}$ (i.e., the agent has prior knowledge of $B_{\star}$) and we instantiate the regret achieved by \ALGO under various conditions on the SSP model.


$\Box$ \, \textbf{\textit{Positive Costs.}} We first focus on the case of positive costs.

\begin{assumption}\label{asm1_cminpositive}
    All costs are lower bounded by a constant $c_{\min} > 0$ which is unknown to the agent.
\end{assumption}

Asm.\,\ref{asm1_cminpositive} guarantees that the conditions of Lem.\,\ref{lemma_wellposedproblem} hold. Moreover, denoting by $C$ the cumulative cost over $K$ episodes, the total time satisfies $T \leq C / c_{\min}$. By simplifying the bound of Thm.\,\ref{key_lemma_Tdependent} as $C \le B_{\star} K + R_K \le O(B_{\star} S^2 A K \cdot \sqrt{B_{\star} T S A / \delta})$, we loosely obtain that $T = O(B_{\star}^3 S^5 A^3 K^2 / (c_{\min}^2 \delta))$.

\begin{corollary}\label{thm_regret_bound_cmin_positive}
Under Asm.\,\ref{asm1_cminpositive}, running \textup{\ALGO} (Alg.\,\ref{algo}) with $B = B_{\star}$ and $\eta = 0$ gives the following regret bound with probability at least $1-\delta$
\begin{align*}
    R_K = O\left( B_{\star} \sqrt{ S A K } \log\left( \frac{K B_{\star} S A }{c_{\min} \delta} \right) + B_{\star} S^2 A \log^2\left( \frac{K B_{\star} S A }{c_{\min} \delta} \right) \right).
\end{align*}
\end{corollary}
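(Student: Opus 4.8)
The plan is to combine the guarantee of Thm.\,\ref{key_lemma_Tdependent} with a bootstrapping argument that converts the dependence on the random accumulated time $T$ into a polynomial dependence on the problem parameters. First I would observe that Asm.\,\ref{asm1_cminpositive} makes all costs strictly positive, hence every improper policy incurs infinite value at some state, so the conditions of Lem.\,\ref{lemma_wellposedproblem} hold and Thm.\,\ref{key_lemma_Tdependent} applies verbatim with $B = B_{\star}$ and $\eta = 0$. The only quantity in that bound not yet under control is $T$, which sits inside the logarithmic factors. Since each incurred cost is at least $c_{\min}$, the cumulative cost $C := \sum_{k=1}^{K} \sum_{h=1}^{I^k} c_h^k$ satisfies $C \geq c_{\min} T$, so the total time obeys $T \leq C / c_{\min}$. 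Moreover, by the regret definition in Eq.\,\ref{eq_ssp_regret_def}, $C = R_K + K V^{\star}(s_0) \leq R_K + K B_{\star}$, which links $T$ back to the regret.

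The key step is to solve the resulting self-referential inequality for $T$. Substituting the bound of Thm.\,\ref{key_lemma_Tdependent} into $C \leq R_K + K B_{\star}$ and using the elementary estimates $\log x = O(\sqrt{x})$ and $\log^2 x = O(\sqrt{x})$ (up to universal constants), every factor of the form $\log(B_{\star} S A T / \delta)$ is loosely upper bounded by $\sqrt{B_{\star} S A T / \delta}$. After collecting the dominant factors this yields $C = O\big( B_{\star} S^2 A \sqrt{K} \cdot \sqrt{B_{\star} S A T / \delta} \big)$. Combining with $c_{\min} T \leq C$ and squaring gives $c_{\min}^2 T^2 = O(B_{\star}^3 S^5 A^3 K\, T / \delta)$; the linear-in-$T$ factor cancels, leaving the polynomial bound $T = O\big( B_{\star}^3 S^5 A^3 K / (c_{\min}^2 \delta) \big)$. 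This closure works precisely because the left-hand side is linear in $T$ while the feedback from the regret grows only as $\sqrt{T}$.

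Finally I would feed this polynomial bound on $T$ back into the logarithmic factors of Thm.\,\ref{key_lemma_Tdependent}. Since $\log T = O\big( \log(K B_{\star} S A / (c_{\min} \delta)) \big)$, every occurrence of $\log(B_{\star} S A T / \delta)$ collapses (up to constants) to $\log(K B_{\star} S A / (c_{\min} \delta))$, and recalling that $B = B_{\star}$ in this regime yields exactly the claimed bound. The main obstacle is the bootstrapping in the second paragraph: the regret bound is only useful once $T$ is controlled, yet $T$ is itself governed by the cumulative cost and hence by the regret, so the argument must be organized as a fixed-point inequality, and one must verify that the linear growth of $T$ dominates the sublinear $\sqrt{T}$ feedback for the recursion to terminate. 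Everything else reduces to routine substitution and simplification of logarithmic terms.
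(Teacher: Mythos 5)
Your proposal is correct and matches the paper's own argument almost step for step: the paper likewise verifies Lem.\,\ref{lemma_wellposedproblem} via positive costs, uses $T \leq C/c_{\min}$ together with the loose simplification $C = O\big(B_{\star} S^2 A \sqrt{K}\cdot\sqrt{B_{\star} T S A/\delta}\big)$ of Thm.\,\ref{key_lemma_Tdependent}, solves the resulting self-referential inequality to get $T = O\big(B_{\star}^3 S^5 A^3 K/(c_{\min}^2\delta)\big)$, and substitutes this back into the logarithms. Your explicit framing of the bootstrap as a fixed-point inequality (linear in $T$ on the left versus $\sqrt{T}$ feedback on the right) is exactly the mechanism the paper relies on implicitly.
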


The bound of Cor.\,\ref{thm_regret_bound_cmin_positive} only depends polynomially on $K, S, A, B_{\star}$. We note that $T_{\star} \leq B_{\star} / c_{\min}$ and that this upper bound only appears in the logarithms. Under positive costs, the regret of \ALGO is thus (nearly) \textbf{minimax} and \textbf{horizon-free}. Furthermore, in App.\,\ref{app_alt_asm_no_cycle_zero_cost} we introduce an alternative assumption on the SSP problem (which is weaker than Asm.\,\ref{asm1_cminpositive}) that considers that there are no almost-sure zero-cost cycles. In this case also, the regret of \ALGO is (nearly) minimax and horizon-free.

$\Box$ \, \textbf{\textit{General Costs and $T_{\star}$ Unknown.}} Now we handle the case of non-negative costs, with no~assumption other than Asm.\,\ref{asm_proper}. We use a cost perturbation argument to generalize the results from positive to general costs (similar to \citet{tarbouriech2019no, rosenberg2020near}). As reviewed in Sect.\,\ref{sect_preliminaries}, this circumvents the second condition of Lem.\,\ref{lemma_wellposedproblem} (which holds in the cost-perturbed MDP) and target the optimal proper policy in the original MDP up to a bias scaling with the cost perturbation. Indeed, running \ALGO with costs $c_{\eta}(s,a) \leftarrow \max\{c(s,a), \eta\}$ for $\eta \in (0, 1]$ gives the bound of Cor.\,\ref{thm_regret_bound_cmin_positive} with $c_{\min} \leftarrow \eta$, $B_{\star} \leftarrow B_{\star} + \eta T_{\star}$ and an additive bias of $\eta T_{\star} K$. We then pick $\eta$ to balance these terms.

\begin{corollary}\label{thm_regret_bound_cmin_zero_unknown_Tstar}
Let $L := \log \left( K T_{\star} S A \delta^{-1} \right)$. Running \textup{\ALGO} (Alg.\,\ref{algo}) with $B = B_{\star}$ and $\eta = K^{-n} $ for \textbf{any} choice of constant $n > 1$ gives the following regret bound with probability at least $1-\delta$
\begin{align*}
    R_K = O\Big( n B_{\star} \sqrt{ S A K } L ~+~ \frac{ T_{\star}}{K^{n - 1}} + \frac{ n T_{\star} \sqrt{SA} L}{K^{n - 1/2}}   ~+~ n^2 B_{\star} S^2 A L^2 \Big).
\end{align*}
\end{corollary}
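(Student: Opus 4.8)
The plan is to reduce to the positive-cost regime of Cor.\,\ref{thm_regret_bound_cmin_positive} through the cost-perturbation argument and then optimize the perturbation magnitude. Running \ALGO with cost input $\eta > 0$ is equivalent to running it (with $\eta = 0$) on the perturbed instance $M_\eta$ whose cost function is $c_\eta(s,a) = \max\{c(s,a),\eta\}$; this instance has all costs lower bounded by $\eta$, so it satisfies Asm.\,\ref{asm1_cminpositive} with $c_{\min} = \eta$ and the conditions of Lem.\,\ref{lemma_wellposedproblem} hold automatically. I would first record the effect of the perturbation on the optimum. Writing $V^\star_\eta$ and $B_{\star,\eta} := \max_s V^\star_\eta(s)$ for the perturbed values, the key estimate is $V^\star_\eta(s) \le V^{\pi^\star}_\eta(s) \le V^{\pi^\star}(s) + \eta\,T^{\pi^\star}(s) \le B_\star + \eta T_\star$ for every $s$, since $\pi^\star$ is still proper in $M_\eta$, it incurs at most an extra $\eta$ per step, and it reaches the goal in expected time at most $T_\star$. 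Hence $B_{\star,\eta} \le B_\star + \eta T_\star$, and the input $B$ is chosen to dominate this value; because $\eta$ will be taken as $K^{-n}$, the correction $\eta T_\star$ is a vanishing lower-order quantity, so up to that term the requirement is just $B \ge B_\star$.

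Next I would convert the regret measured in $M_\eta$ back to the regret $R_K$ of Eq.\,\ref{eq_ssp_regret_def}, which is defined w.r.t.\ the \emph{original} costs. Denoting by $\wt{c}_h^k = \max\{c_h^k,\eta\}$ the perturbed realized costs actually fed to the learner, the pointwise domination $c_h^k \le \wt{c}_h^k$ gives
\begin{align*}
R_K = \sum_{k,h} c_h^k - K V^\star(s_0) \le \Big(\sum_{k,h}\wt{c}_h^k - K V^\star_\eta(s_0)\Big) + K\big(V^\star_\eta(s_0) - V^\star(s_0)\big).
\end{align*}
The first parenthesis is exactly the regret of \ALGO on $M_\eta$, which Cor.\,\ref{thm_regret_bound_cmin_positive} bounds (with probability at least $1-\delta$) after substituting $c_{\min}\leftarrow\eta$ and $B_\star \leftarrow B_{\star,\eta}\le B_\star+\eta T_\star$; the second term is the perturbation bias, bounded by $K\eta T_\star$ via the estimate above. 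This yields $R_K \le \hat R_K^{(\eta)} + \eta T_\star K$, where $\hat R_K^{(\eta)}$ denotes the Cor.\,\ref{thm_regret_bound_cmin_positive} bound with those substitutions.

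Finally I would instantiate $\eta = K^{-n}$ and collect terms. The only care needed is the logarithmic factor: in Cor.\,\ref{thm_regret_bound_cmin_positive} it reads $\log(K B_\star S A/(c_{\min}\delta))$, and with $c_{\min} = K^{-n}$ this becomes $\log(K^{n+1} B_\star S A/\delta) = O(nL)$ with $L = \log(K T_\star S A/\delta)$ (using $B_\star \le T_\star$). Expanding $(B_\star + \eta T_\star)\sqrt{SAK}\cdot O(nL)$ splits into the leading term $n B_\star\sqrt{SAK}L$ and, since $\eta\sqrt{K} = K^{1/2-n}$, the term $n T_\star\sqrt{SA}L/K^{n-1/2}$; the lower-order term is $B S^2 A\cdot O(n^2L^2) = n^2 B_\star S^2 A L^2$; and the bias is $\eta T_\star K = T_\star/K^{n-1}$. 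Summing these four contributions gives exactly the claimed bound for every constant $n > 1$, where the constraint $n>1$ is precisely what forces the additive $T_\star/K^{n-1}$ bias to vanish with $K$.

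I expect the main obstacle to be the bookkeeping around the two roles of $B_\star$: the \emph{effective} optimal value $B_{\star,\eta}$ (which carries the $\eta T_\star$ inflation and thus produces the $T_\star/K^{n-1/2}$ term at leading order) versus the input $B = B_\star$ appearing in the lower-order $S^2 A$ term, together with verifying that taking $B \approx B_\star$ still secures the optimism guarantee of \VISGO on $M_\eta$ despite the slight inflation of the optimal value. Once those two accounting points are settled, the remainder is a direct substitution into Cor.\,\ref{thm_regret_bound_cmin_positive} plus the elementary bias bound.
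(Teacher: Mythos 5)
Your proposal is correct and follows essentially the same route as the paper: the paper's own argument (the paragraph preceding Cor.\,\ref{thm_regret_bound_cmin_zero_unknown_Tstar} in Sect.\,\ref{subsect_asm2_cminzero_Tstar_unknown}) is precisely the reduction to Cor.\,\ref{thm_regret_bound_cmin_positive} with $c_{\min} \leftarrow \eta$ and $B_{\star} \leftarrow B_{\star} + \eta T_{\star}$, plus the additive bias $\eta T_{\star} K$, followed by the substitution $\eta = K^{-n}$ and the same bookkeeping of the $O(nL)$ logarithmic factor. The $B$-inflation subtlety you flag at the end (input $B = B_{\star}$ versus the perturbed optimum $B_{\star,\eta} \leq B_{\star} + \eta T_{\star}$) is glossed over by the paper in exactly the same way, and is harmless since $\eta T_{\star} = T_{\star} K^{-n}$ only contributes at the lower-order terms you already track.
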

%
This bound can be decomposed as (i) a $\sqrt{K}$ leading term and (ii) an additive term that depends on $T_{\star}$ and vanishes as $K \rightarrow + \infty$ (we omit the last term that does not depend polynomially on either $K$ or $T_{\star}$). Note that the second term (ii) can be made as small as possible by increasing the choice of exponent $n$ in the cost perturbation, at the cost of the multiplicative constant $n$ in (i). Equipped only with Asm.\,\ref{asm_proper}, the regret of \ALGO is thus (nearly) \textbf{minimax}, and it may be dubbed as \textit{horizon-vanishing} when $K$ is given in advance, insofar as it contains an additive term that depends on~$T_{\star}$ and that becomes negligible for large values of $K$ (if $K$ is unknown in advance, the application of the doubling trick yields an additive term (ii) scaling as $T_{\star}$). We now show that the trade-off between (i) and (ii) can be resolved with loose knowledge of $T_{\star}$ and leads to a horizon-free bound.



$\Box$ \, \textbf{\textit{General Costs and Order-Accurate Estimate of $T_{\star}$ Available.}} We now consider that an order-accurate estimate of $T_{\star}$ is available. It may be a constant lower-bound approximation away from~$T_{\star}$, or a polynomial upper-bound approximation away from~$T_{\star}$.

\begin{assumption}\label{asm_priorknowledge_Tstar}
    The agent has prior knowledge of a quantity $\ov T_{\star}$ that verifies $\frac{T_{\star}}{\upsilon} \leq \ov T_{\star} \leq \lambda T_{\star}^{\zeta}$ for some unknown constants $\upsilon, \lambda, \zeta \geq 1$. (Note that $\upsilon = \lambda = \zeta = 1$ when $T_{\star}$ is known.)
\end{assumption}

We now tune the cost perturbation $\eta$ using~$\ov T_{\star}$. Specifically, selecting $\eta := (\ov T_{\star} K)^{-1}$ ensures that the bias satisfies $\eta T_{\star} K \leq \upsilon = O(1)$. We thus obtain the following guarantee (see App.\,\ref{app_cor_explicit_dep_constants_estimate_Tstar} for the explicit dependencies on the \textit{constant} terms $\upsilon, \lambda, \zeta$ which only appear as multiplicative and additive factors).

\begin{corollary}\label{thm_regret_bound_cmin_zero_priorknowledge_Tstar}
Under Asm.\,\ref{asm_priorknowledge_Tstar}, running \textup{\ALGO} (Alg.\,\ref{algo}) with $B = B_{\star}$ and $\eta = (\ov T_{\star} K)^{-1}$ gives the following regret bound with probability at least $1-\delta$
\begin{align*}
   R_K = O\left( B_{\star} \sqrt{ S A K } \log\left( \frac{K T_{\star} S A }{ \delta} \right) + B_{\star} S^2 A \log^2\left( \frac{K T_{\star} S A }{ \delta} \right) \right).
\end{align*}
\end{corollary}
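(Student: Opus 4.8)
The plan is to obtain this result as a direct specialization of the cost-perturbation reduction described in Sect.\,\ref{subsect_asm2_cminzero_Tstar_unknown}, instantiated with the particular choice $\eta = (\ov T_{\star} K)^{-1}$ and exploiting both inequalities in Asm.\,\ref{asm_priorknowledge_Tstar}. Concretely, running \ALGO with the perturbed costs $c_\eta(s,a) = \max\{c(s,a), \eta\}$ reduces to the positive-cost analysis of Cor.\,\ref{thm_regret_bound_cmin_positive} under the substitutions $c_{\min} \leftarrow \eta$ and $B_{\star} \leftarrow B_{\star} + \eta T_{\star}$, at the price of an additive bias $\eta T_{\star} K$ incurred by targeting the optimal policy of the perturbed rather than the original MDP. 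Thus, with probability at least $1-\delta$,
\begin{align*}
R_K = O\!\left( (B_{\star} + \eta T_{\star}) \sqrt{SAK}\, \log\!\left(\tfrac{K (B_{\star} + \eta T_{\star}) SA}{\eta\delta}\right) + (B_{\star} + \eta T_{\star}) S^2 A\, \log^2\!\left(\tfrac{K (B_{\star} + \eta T_{\star}) SA}{\eta\delta}\right)\right) + \eta T_{\star} K.
\end{align*}

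The first step is then to control the additive bias using only the lower-bound side of Asm.\,\ref{asm_priorknowledge_Tstar}. With $\eta = (\ov T_{\star} K)^{-1}$ we have $\eta T_{\star} K = T_{\star} / \ov T_{\star} \leq \upsilon = O(1)$, since $\ov T_{\star} \geq T_{\star}/\upsilon$; hence the bias is a harmless constant that is absorbed into the main terms. The same computation shows $\eta T_{\star} = T_{\star} / (\ov T_{\star} K) \leq \upsilon / K$, so the effective value bound satisfies $B_{\star} + \eta T_{\star} = B_{\star} + O(1/K) = O(B_{\star})$ (recalling $B_{\star} \geq 1$), which collapses the leading factor to $B_{\star}\sqrt{SAK}$ and the second-order factor to $B_{\star} S^2 A$.

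The second step is to verify that the logarithmic terms remain of the right order, and here it is the upper-bound side of Asm.\,\ref{asm_priorknowledge_Tstar} that matters. Since $1/\eta = \ov T_{\star} K \leq \lambda T_{\star}^{\zeta} K$, we get $\log(1/\eta) \leq \log\lambda + \zeta \log T_{\star} + \log K = O(\log(K T_{\star}))$; combined with $B_{\star} + \eta T_{\star} = O(B_{\star})$ this yields $\log\!\left(\tfrac{K (B_{\star}+\eta T_{\star}) SA}{\eta\delta}\right) = O\!\left(\log\!\left(\tfrac{K T_{\star} SA}{\delta}\right)\right)$. Substituting these estimates into the displayed bound gives exactly the claimed regret. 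I expect no genuine obstacle beyond this bookkeeping: the conceptual point — and the only thing worth checking carefully — is that an \emph{order-accurate} estimate suffices precisely because its two sides play disjoint roles, the constant-factor lower bound $\ov T_{\star} \geq T_{\star}/\upsilon$ keeping the bias $O(1)$, while the polynomial upper bound $\ov T_{\star} \leq \lambda T_{\star}^{\zeta}$ is tolerated thanks to appearing only inside logarithms.
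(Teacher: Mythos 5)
Your proposal is correct and matches the paper's own argument essentially verbatim: the paper likewise invokes the cost-perturbation reduction of Sect.\,\ref{subsect_asm2_cminzero_Tstar_unknown} (i.e., Cor.\,\ref{thm_regret_bound_cmin_positive} with $c_{\min} \leftarrow \eta$, $B_{\star} \leftarrow B_{\star} + \eta T_{\star}$, plus the additive bias $\eta T_{\star} K$), then uses the lower-bound side $\ov T_{\star} \geq T_{\star}/\upsilon$ to make the bias $O(1)$ and the upper-bound side $\ov T_{\star} \leq \lambda T_{\star}^{\zeta}$ to keep $\log(1/\eta) = O(\log(K T_{\star}))$. Your observation that $\eta T_{\star} = O(1/K)$ collapses the effective range to $O(B_{\star})$ completes the same bookkeeping the paper performs implicitly, so there is nothing to add.
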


This bound depends polynomially on $K, S, A, B_{\star}$, and only logarithmically on $T_{\star}$. Thus under general costs with an order-accurate estimate of $T_{\star}$, \ALGO's regret is (nearly) \textbf{minimax} and \textbf{horizon-free}.


We can compare Cor.\,\ref{thm_regret_bound_cmin_zero_priorknowledge_Tstar} with the concurrent result of \citet{cohen2021minimax}. Their regret bound scales as $O(B_{\star} \sqrt{ S A K} L + T_{\star}^4 S^2 A L^5)$ with $L = \log(K T_{\star} S A \delta^{-1})$ under the assumptions of known~$T_{\star}$ and~$B_{\star}$ (or tight upper bounds of them), which imply that the conditions of Cor.\,\ref{thm_regret_bound_cmin_zero_priorknowledge_Tstar} hold. The bound of Cor.\,\ref{thm_regret_bound_cmin_zero_priorknowledge_Tstar} is strictly tighter, since it always holds that~$B_{\star} \leq T_{\star}$ and the gap between the two may be arbitrarily large (see e.g., App.\,\ref{app_Tstar}), especially when some instantaneous costs are very small.


\subsection{Regret Bounds for Unknown $B_{\star}$ with Parameter-Free \ALGO} \label{subsect_unknownBstar}


We now introduce a parameter-free version of \ALGO that bypasses the requirement of $B \geq B_{\star}$ (line~\ref{line-B-Bstar} of Alg.\,\ref{algo}). Note that 
the challenge of not knowing the range of the optimal value function does not appear in finite-horizon MDPs, where the bound $H$ (or $1$ for \citet{zhang2020reinforcement}) is assumed to be known to the agent. 
In SSP, if the agent does not have a valid estimate $B \geq B_{\star}$, then it may design an under-specified exploration bonus which cannot guarantee optimism. The case of unknown~$B_{\star}$ is non-trivial: it appears impossible to properly estimate $B_{\star}$ (since some states may never be visited) and it is unclear how a standard doubling trick may be used.
\footnote{Note that \citet{jian2019exploration} raised an open question whether it is possible to design an exploration bonus strategy in a setting where no prior knowledge of the ``optimal range'' is available. Indeed their approach in average-reward MDPs relies on prior knowledge of an upper bound on the optimal bias span.}

\newcommand{\wtB}{{\scalebox{0.9}{$\wt B$}}}

Parameter-free \ALGO initializes a proxy $\wtB = 1$ and increases it over the learning interaction according to a carefully defined schedule. We need to ensure that the proxy $\wtB$ does not remain below $B^{\star}$ for too long, since in this case, the regret may keep growing linearly. Thus, our \textit{first condition} to increase $\wtB$ is whenever a new episode $k$ begins, specifically we set $\wtB \leftarrow \max\{ \wtB, \, \sqrt{k} / (S^{3/2} A^{1/2}) \}$, which ensures that $\wtB \geq B^{\star}$ for large enough episodes. However, this is not enough: indeed notice that when $\wtB < B^{\star}$, the agent may never reach the goal and thus get \textit{stuck} in the episode, so we cannot exclusively rely on the end of an episode as a trigger for increasing $\wtB$. Our \textit{second condition} to increase $\wtB$ is to set $\wtB \leftarrow 2 \wtB$ whenever the cumulative cost exceeds a carefully defined threshold (that depends on $\wtB$, $S$, $A$,~$\delta$ and the current episode and time indexes $k$ and $t$, which are all computable quantities). Since the regret is upper bounded by the cumulative cost, this second condition prevents the learner from accumulating too large regret when $\wtB < B^{\star}$. Finally, we introduce a \textit{third condition} to increase $\wtB$ in order to ensure the computational efficiency, since \VISGO may diverge when $\wtB < B^{\star}$ (specifically, we track the range of the value $V^{(i)}$ at each \VISGO iteration $i$ and if $\norm{V^{(i)}}_{\infty} > \wtB$, then we terminate \VISGO and increase $\wtB \leftarrow 2 \wtB$). At a high-level, the analysis of the scheme proceeds as follows: we bound the regret by the cumulative cost when $\wtB < B^{\star}$ (first regime), and by the regret bound of Thm.\,\ref{key_lemma_Tdependent} when $\wtB \geq B^{\star}$ (second regime). Note that this two-regime decomposition is only implicit (i.e., at the level of analysis), since the agent is unable to know in which regime it is (since $B^{\star}$ is unknown). The full pseudo-code and analysis of parameter-free \ALGO is deferred to App.\,\ref{sect_unknown_Bstar}.

\newcommand{\thmkeyunknownBstar}{
    Assume the conditions of Lem.\,\ref{lemma_wellposedproblem} hold. Then with probability at least $1-\delta$ the regret of parameter-free \textup{\ALGO} (Alg.\,\ref{algobstar}, App.\,\ref{sect_unknown_Bstar}) can be bounded~by
\begin{align*}
    R_K 
    &= O \left( R^{\star}_K \log\left( \frac{B_{\star} S A T }{\delta} \right) + B_{\star}^3 S^3 A \log^{3}\left( \frac{B_{\star} S A T }{\delta} \right) \right),
\end{align*}

where $T$ is the cumulative time within the $K$ episodes and $R^{\star}_K$ bounds the regret after $K$ episodes of~\textup{\ALGO} in the case of known $B_{\star}$ (i.e., the bound of Thm.\,\ref{key_lemma_Tdependent} with $B = B_{\star}$).
}
\begin{theorem}[Extension of Theorem \ref{key_lemma_Tdependent} to unknown $B_{\star}$]
  \label{thm_key_unknownBstar}
  \thmkeyunknownBstar
\end{theorem}


Thm.\,\ref{thm_key_unknownBstar} implies that we can remove the condition of $B \ge \max\{B_{\star}, 1\}$ in Thm.\,\ref{key_lemma_Tdependent}, i.e., we make the statement \textbf{parameter-free}. Hence, \textit{all} the regret bounds from Sect.\,\ref{subsect_bounding_T} in the case of known~$B_{\star}$ (i.e., Cor.\,\ref{thm_regret_bound_cmin_positive}, \ref{thm_regret_bound_cmin_zero_unknown_Tstar}, \ref{thm_regret_bound_cmin_zero_priorknowledge_Tstar}, \ref{thm_regret_bound_nocycle_zerocost}) still hold up to additional logarithmic and lower-order terms when $B_{\star}$ is unknown.

\section{Conclusion}\label{sect_ccl}

We introduced \ALGO, the first algorithm for online SSP to be \textit{simultaneously} nearly minimax-optimal and parameter-free (i.e., it does not need to know $T_{\star}$ nor $B_{\star}$). Also in various cases its regret is nearly horizon-free with only a \textit{logarithmic} dependence on $T_{\star}$, thus exponentially improving over existing bounds w.r.t.\,the dependence on $T_{\star}$, which may be arbitrarily larger~than~$B_{\star}$~when~instantaneous costs are small. The horizon-free property is perhaps even more meaningful in the goal-oriented setting than in finite-horizon MDPs (with total reward bounded by~$1$) 
\citep[e.g.,][]{wang2020long, zhang2020reinforcement,zhang2021variance}, as we do \textit{not} impose a known constraint on the total cost of a trajectory.



An interesting question raised by our paper is whether it is possible to simultaneously achieve minimax, parameter-free and horizon-free regret for SSP under general costs. Another direction can be to build on our approach (e.g., the \VISGO procedure) to derive tight sample complexity bounds in SSP, which as explained by \citet{tarbouriech2021sample} do not directly ensue from regret guarantees.

\section*{Acknowledgement}
SSD gratefully acknowledges the funding from NSF Award’s IIS-2110170 and DMS-2134106.

\bibliographystyle{plainnat}
\bibliography{bibliography.bib}




\newpage

\appendix

\part{Appendix}

\parttoc

\vspace{0.2in}

\section{$T_{\star}$ can be arbitrarily larger than $B_{\star},\ S,\ A$} \label{app_Tstar}

\begin{minipage}{0.55\linewidth}


Here we provide a simple illustration that the inequality $B_{\star} \leq T_{\star}$ may be arbitrarily loose, which shows that scaling with $T_{\star}$ can be much worse than scaling with $B_{\star}$. Recall that~$B_{\star}$~bounds the total expected cost of the optimal policy starting from any state, and~$T_{\star}$ bounds the expected time-to-goal of the optimal policy from any state.

\vspace{0.08in}

Let us consider an SSP instance whose optimal policy induces the absorbing Markov chain depicted in Fig.\,\ref{toy_figure}. It is easy to see that $B_{\star} = 1$ and that $T_{\star} = \Omega(S\ p_{\min}^{-1})$. Hence, the gap between $B_{\star}$ and $T_{\star}$ can grow arbitrarily large as $p_{\min} \rightarrow 0$. 

\vspace{0.08in}

This simple example illustrates the benefit of having a bound that is (nearly) \textit{horizon-free} (cf.\,desired property~3 in Sect.\,\ref{sect_introduction}). Indeed, a bound that is not horizon-free scales polynomially with~$T_{\star}$ and thus with~$p_{\min}^{-1}$, which may be arbitrarily large if~$p_{\min} \rightarrow 0$. In contrast, a horizon-free bound only scales logarithmically with~$p_{\min}^{-1}$ and can therefore be much tighter.

\end{minipage}\hfill
\begin{minipage}{0.38\linewidth}
\centering
\begin{tikzpicture}[relative]
                        \begin{scope}[scale=0.6]
	\tikzset{VertexStyleDashed/.style = {draw, dashed,
									shape          = circle,
	                                text           = black,
	                                inner sep      = 2pt,
	                                outer sep      = 0pt,
	                                minimum size   = 20 pt}}
	\tikzset{VertexStyle/.style = {draw, 
									shape          = circle,
	                                text           = black,
	                                inner sep      = 2pt,
	                                outer sep      = 0pt,
	                                minimum size   = 20 pt}}
	\tikzset{VertexStyle2/.style = {shape          = circle,
	                                text           = black,
	                                inner sep      = 2pt,
	                                outer sep      = 0pt,
	                                minimum size   = 14 pt}}
	\tikzset{Action/.style = {draw, 
                					shape          = circle,
	                                text           = black,
	                                fill           = black,
	                                inner sep      = 2pt,
	                                outer sep      = 0pt}}

	\node[VertexStyle](s0) at (0,0) {$ s_{0} $};
	\node[VertexStyle](s1) at (2.5,0.5){$s_1$};
	\node[VertexStyle](sm1) at (3,-2){$s_{-1}$};
	\node[VertexStyle](s2) at (4,2){$s_{2}$};
	\node[VertexStyleDashed](s3) at (2.5,4){$\ldots$};
	\node[VertexStyle](s4) at (-0.25,3.75){$s_{\scalebox{1}{$\scriptscriptstyle S-3$}}$};
	\node[VertexStyle](s5) at (-1.5,1.5){$s_{\scalebox{1}{$\scriptscriptstyle S-2$}}$};
	\node[VertexStyle](g) at (6,-2){$g$};
	\node(s0s1) at (1,-1){};
	\node(sm1g) at (3.5,-2){};

    \draw[->,ForestGreen, >=latex](s0) to [out=-10, in=190, looseness=0.5] node[midway, xshift=0.7em,yshift=-0.8em]{{\color{blue}{$1-p_{\min}$}}} (s1);   
    \draw[->,ForestGreen, >=latex](s0) to [out=320, in=190, looseness=0.5] node[midway, yshift=-0.5em]{{\color{blue}{$p_{\min}$}}} (sm1);   
    \draw[->,red, >=latex](sm1) to [out=-10, in=190, looseness=0.5] node[midway, yshift=0.5em]{} (g);   
    \draw[->,ForestGreen, >=latex](s1) to [out=-10, in=190, looseness=0.5] node[midway, yshift=0.5em]{} (s2);   
    \draw[->,ForestGreen,dashed, >=latex](s2) to [out=-10, in=190, looseness=0.5] node[midway, yshift=0.5em]{} (s3);   
    \draw[->,ForestGreen,dashed, >=latex](s3) to [out=-10, in=190, looseness=0.5] node[midway, yshift=0.5em]{} (s4);   
    \draw[->,ForestGreen, >=latex](s4) to [out=-10, in=190, looseness=0.5] node[midway, yshift=0.5em]{} (s5);   
    \draw[->,ForestGreen, >=latex](s5) to [out=-10, in=190, looseness=0.5] node[midway, yshift=0.5em]{} (s0);   
    \node [above, xshift=0.75em,yshift=-0.6em] at (s0s1) {\color{ForestGreen}{\tiny $c=0$}};
    \node [above, xshift=0.3em,yshift=1.9em] at (s0s1) {\color{ForestGreen}{\tiny $c=0$}};
    \node [above, xshift=1.8em,yshift=-0.3em] at (sm1g) {\color{red}{\tiny $c=1$}};
    \node [above, xshift=3.2em,yshift=3.5em] at (s0s1) {\color{ForestGreen}{\tiny $c=0$}};
    \node [above, xshift=3.1em,yshift=6.2em] at (s0s1) {\color{ForestGreen}{\tiny $c=0$}};
    \node [above, xshift=0.4em,yshift=8.5em] at (s0s1) {\color{ForestGreen}{\tiny $c=0$}};
    \node [above, xshift=-4.2em,yshift=2em] at (s0s1) {\color{ForestGreen}{\tiny $c=0$}};
    \node [above, xshift=-4.2em,yshift=6.2em] at (s0s1) {\color{ForestGreen}{\tiny $c=0$}};
    \end{scope}
    
\end{tikzpicture}

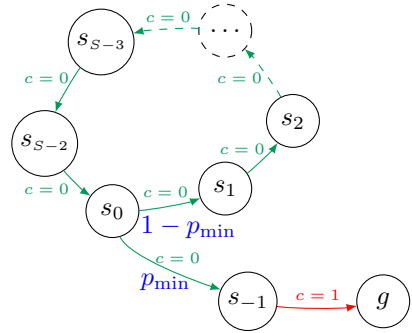
\captionof{figure}{\small Markov chain of the optimal policy of an SSP instance with $S$ states. Transitions in green incur a cost of $0$, while the transition in red leading to the goal state $g$ incurs a cost of $1$. All transitions are deterministic, apart from the one starting from $s_0$, which reaches state $s_{-1}$ with probability $p_{\min}$ and state $s_1$ with probability $1-p_{\min}$, where $p_{\min} > 0$.}
\label{toy_figure}
\end{minipage}

\vspace{0.2in}


\section{An Alternative Assumption on the SSP Problem: No Almost-Sure Zero-Cost Cycles} \label{app_alt_asm_no_cycle_zero_cost}

Here we complement Sect.\,\ref{subsect_bounding_T} by introducing an alternative assumption on the SSP problem (which is weaker than Asm.\,\ref{asm1_cminpositive}) and we analyze the regret bound achieved by \ALGO (under the set-up of Sect.\,\ref{subsect_bounding_T}). We draw inspiration from the common assumption in the deterministic shortest path setting that the transition graph does not possess any cycle of zero costs \citep{bertsekas1991linear}. In the following we introduce a ``stochastic'' counterpart of this assumption.

\begin{assumption}\label{asm_cycle_zerocost_stochastic}
    There exist unknown constants $c^{\dagger} > 0$ and $q^{\dagger} > 0$ such that:
    \begin{align*}
        \mathbb{P}\Bigg( \bigcap_{s' \in \cS} \bigcap_{\omega \in \Omega_{s'}} \Big\{ \sum_{i=1}^{\vert \omega \vert} c_i  \geq c^{\dagger} \Big\} \Bigg) \geq q^{\dagger},
    \end{align*}
where for every state $s' \in \cS$ we denote by $\Omega_{s'}$ the set of all possible trajectories in the SSP-MDP that start from state $s'$ and end in state $s'$, and we denote by $c_1, \ldots, c_{\vert \omega \vert}$ the sequence of costs incurred during a trajectory $\omega$.
\end{assumption}%
Asm.\,\ref{asm_cycle_zerocost_stochastic} is strictly weaker than the assumption of positive costs (Asm.\,\ref{asm1_cminpositive}) and it guarantees that the conditions of Lem.\,\ref{lemma_wellposedproblem} hold. Intuitively, it implies that the agent has a non-zero probability of gradually accumulating some positive cost as its trajectory length increases. In particular, under Asm.\,\ref{asm_cycle_zerocost_stochastic}, any trajectory of length~$S+1$ that does not reach the goal must accumulate costs of at least $c^{\dagger}$ with probability at least $q^{\dagger}$.

When $z \geq \ln (T/\delta) / q^{\dagger} \ge \frac{\ln (T / \delta)}{- \ln (1 - q^{\dagger})}$, it is guaranteed that $(1 - q^{\dagger})^z \le \delta / T$. Repeatedly applying this argument means that with probability at least $1-\delta/T$, for $z \geq \ln (T/\delta) / q^{\dagger}$ it holds that either $\sum_{i=1}^{z (S+1)} c_i \geq c^{\dagger}$, or the agent has reached the goal in the trajectory indexed by the time steps $[1, z (S+1)]$. Denote $z_0 := \lceil \ln (T/\delta) / q^{\dagger} \rceil$. For each episode, divide time steps in it into chunks with length $z_0 (S + 1)$, with the exception that the last chunk in it may have length less than or equal to $z_0 (S + 1)$ (just like taking modulo). So in each episode, the agent accumulates cost of at least $c^{\dagger}$ in each chunk except for the last one, and in the last chunk the agent reaches $g$. If we define $Z$ as the total number of chunks with cost at least $c^{\dagger}$ in all episodes, then $Z \ge \frac{T - K z_0 (S + 1)}{z_0 (S + 1)}$. Thus from $C \ge Z c^{\dagger}$ we have $T \le O \left( \frac{S \log(T/\delta)}{q^{\dagger}} \left( \frac{C}{c^{\dagger}} + K \right) \right) \le O ( S (T / \delta)^{1/4} C K / ( q^{\dagger} c^{\dagger} ) )$, with $C$ the cumulative cost. Using the loose bound $C \le O (B_{\star} S^2 A K \cdot \sqrt{B_{\star} T S A / \delta})$ and isolating~$T$ (with the same reasoning as in the case of positive costs in Sect.\,\ref{subsect_bounding_T}) gives that~$T \le O ( B_{\star}^6 S^{14} A^6 K^8 / ( (q^{\dagger} c^{\dagger})^4 \delta^3 ) )$ and thus that $\log T = O(\log(K B_{\star} S A / (c^{\dagger} q^{\dagger} \delta)))$. Plugging this in Thm.\,\ref{key_lemma_Tdependent} yields the following.

\begin{corollary}\label{thm_regret_bound_nocycle_zerocost}
Under Asm.\,\ref{asm_cycle_zerocost_stochastic}, running \textup{\ALGO} (Alg.\,\ref{algo}) with $B = B_{\star} \geq 1$ and $\eta = 0$ gives the following regret bound with probability at least $1-\delta$
\begin{align*}
     R_K = O\left( B_{\star} \sqrt{ S A K } \log\left( \frac{K B_{\star} S A }{c^{\dagger} q^{\dagger} \delta} \right) + B_{\star} S^2 A \log^2\left( \frac{K B_{\star} S A }{c^{\dagger} q^{\dagger} \delta} \right) \right).
\end{align*}
\end{corollary}

The regret bound of Cor.\,\ref{thm_regret_bound_nocycle_zerocost} is (nearly) \textbf{minimax} and \textbf{horizon-free} (and it can be made \textbf{parameter-free} by executing Alg.\,\ref{algobstar} instead of Alg.\,\ref{algo}). The bound depends logarithmically on the inverse of the constants~$c^{\dagger}$,~$q^{\dagger}$. We observe that i)~it no longer becomes relevant if one constant is exponentially small, ii)~spelling out $c^{\dagger}$, $q^{\dagger}$ satisfying Asm.\,\ref{asm_cycle_zerocost_stochastic} is challenging as they subtly depend on both the cost function and the transition dynamics, although iii)~the agent does not need to know nor estimate $c^{\dagger}$ and $q^{\dagger}$ to achieve the regret bound of Cor.\,\ref{thm_regret_bound_nocycle_zerocost}.

\vspace{0.2in}

\section{Full Statement of Corollary \ref{thm_regret_bound_cmin_zero_priorknowledge_Tstar}}\label{app_cor_explicit_dep_constants_estimate_Tstar}

Here we make explicit the \textit{constant} terms $\upsilon, \lambda, \zeta$ in the regret bound of Cor.\,\ref{thm_regret_bound_cmin_zero_priorknowledge_Tstar}. 

Recall that Asm.\,\ref{asm_priorknowledge_Tstar} considers that the agent has prior knowledge of a quantity $\ov T_{\star}$ that verifies $T_{\star}/ \upsilon \leq \ov T_{\star} \leq \lambda T_{\star}^{\zeta}$ for some unknown constants $\upsilon, \lambda, \zeta \geq 1$ (note that $\upsilon= \lambda= \zeta = 1$ when $T_{\star}$ is known). Under Asm.\,\ref{asm_priorknowledge_Tstar}, running \textup{\ALGO} (Alg.\,\ref{algo}) with $B = B_{\star}$ and $\eta = (\ov T_{\star} K)^{-1}$ gives the following regret bound with probability at least $1-\delta$
\begin{align*}
   R_K = O\left( \left(B_{\star} + \frac{\nu}{K}\right) \sqrt{ S A K } \zeta \log\left( \frac{\lambda  K T_{\star} S A }{ \delta} \right) + \left(B_{\star} + \frac{\nu}{K}\right) S^2 A \zeta^2 \log^2\left( \frac{\lambda K T_{\star} S A }{ \delta} \right) + \nu \right).
\end{align*}

\newpage

\section{Proof of Theorem \ref{key_lemma_Tdependent}} \label{sect_main_proof}

In this section, we present the proof of Thm.\,\ref{key_lemma_Tdependent} (the missing proofs of the intermediate results within the section are deferred to App.\,\ref{app_missing_proofs}). We recall that throughout App.\,\ref{sect_main_proof} we analyze Alg.\,\ref{algo} without cost perturbation (i.e., $\eta = 0$) and we assume that \textbf{1)} the estimate verifies $B \geq \max\{B_{\star}, 1\}$ and \textbf{2)} the conditions of Lem.\,\ref{lemma_wellposedproblem} hold. 

\subsection{High-Probability Event}

\begin{definition}[High-probability event]\label{def_hp_event}
We define the event $\mathcal{E} := \mathcal{E}_1 \cap \mathcal{E}_2 \cap \mathcal{E}_3$, where
\makeatletter
\newcommand*\mysize{%
   \@setfontsize\mysize{9.2}{9.2}%
}
\makeatother
\begin{mysize}
\begin{align}
    \calE_1 &:= \left\{ \forall (s,a) \in \SA, \forall n(s,a) \geq 1 \, : ~  \vert (\wh P_{s,a} - P_{s,a}) V^{\star} \vert \leq 2 \sqrt{\frac{ \mathbb{V}(\wh P_{s,a}, V^{\star}) \iota_{s,a}}{ n(s,a) }} + \frac{14 B_{\star} \iota_{s,a}}{ 3 n(s,a)}   \right\}, \label{hp_event_1} \\
    \calE_2 &:= \left\{ \forall (s,a) \in \SA, \forall n(s,a) \geq 1 \,: ~ \vert \wh{c}(s,a) - c(s,a) \vert \leq 2 \sqrt{\frac{ 2 \wh{c}(s,a) \iota_{s,a}}{ n(s,a) }} + \frac{28 \iota_{s,a}}{ 3 n(s,a)} \right\}, \label{hp_event_2} \\
    \calE_3 &:= \left\{ \forall (s,a,s') \in \SA \times \cS', ~ \forall n(s,a) \geq 1 \,: ~  |P_{s,a,s'}-\wh P_{s,a,s'}| \leq \sqrt{\frac{2P_{s,a,s'} \iota_{s,a}}{n(s,a)}}+\frac{\iota_{s,a}}{n(s,a)} \right\}, \label{hp_event_3}
\end{align}
\end{mysize}
where $\iota_{s,a} := \ln \left( \frac{12 S A S' [n^+(s,a)]^2}{\delta} \right)$. 
\end{definition}

\begin{lemma}
    It holds that $\mathbb{P}(\mathcal{E}) \geq 1 - \delta$.
\end{lemma}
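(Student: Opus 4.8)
The statement is a standard high-probability confidence-bound result, so the plan is to show each of the three events $\calE_1, \calE_2, \calE_3$ holds with probability at least $1 - \delta/3$ and then conclude by a union bound that $\P(\calE) = \P(\calE_1 \cap \calE_2 \cap \calE_3) \geq 1 - \delta$. The key technical tools will be an empirical Bernstein / Freedman-type concentration inequality applied to martingale difference sequences, combined with a union bound over all state-action pairs and over the (random) number of visits $n(s,a)$. The logarithmic factor $\iota_{s,a} = \ln(12 S A S' [n^+(s,a)]^2 / \delta)$ is engineered precisely so that the $[n^+(s,a)]^2$ term absorbs the union bound over the countably many possible values of $n(s,a) \geq 1$ (via a standard $\sum_{n \geq 1} n^{-2} < \infty$ or peeling/stopping-time argument), while the $12 S A S'$ factor covers the union over $(s,a)$ and $(s,a,s')$ triples and the three events.

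\textbf{Bounding each event.} For $\calE_1$, I would fix $(s,a)$ and condition on the $n(s,a)$ i.i.d.\ samples of the next state; the quantity $(\wh P_{s,a} - P_{s,a}) V^\star$ is an average of bounded, mean-zero terms (bounded by $\|V^\star\|_\infty = B_\star$ since $V^\star(g) = 0$), and the empirical-Bernstein inequality gives exactly a variance term $\sqrt{\mathbb{V}(\wh P_{s,a}, V^\star)\iota_{s,a}/n(s,a)}$ plus a lower-order $B_\star \iota_{s,a}/n(s,a)$ term, matching the stated constants $2$ and $14/3$. For $\calE_2$, the same empirical-Bernstein bound applies to the i.i.d.\ costs, which lie in $[0,1]$, yielding the variance-style term in $\wh c(s,a)$; note the bound is written self-normalized in terms of the \emph{empirical} cost $\wh c$, so I would use the empirical-Bernstein form directly (as opposed to converting from the true variance). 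For $\calE_3$, this is the per-coordinate deviation of the empirical transition probabilities, which follows from Bernstein's inequality applied to the Bernoulli random variable $\mathds{I}[s_{t+1} = s']$ with variance $P_{s,a,s'}(1 - P_{s,a,s'}) \leq P_{s,a,s'}$, giving the stated $\sqrt{2 P_{s,a,s'}\iota_{s,a}/n(s,a)} + \iota_{s,a}/n(s,a)$.

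\textbf{Handling the random sample count.} The main subtlety — and the step I expect to require the most care — is that $n(s,a)$ is itself a random stopping time rather than a fixed deterministic count, so the i.i.d.\ concentration bounds cannot be applied naively at a fixed $n$. The clean way to handle this is either (i) a martingale argument: define the relevant sums over the full trajectory as martingales adapted to the natural filtration, apply a time-uniform (anytime) Freedman/Bernstein inequality, so the bound holds simultaneously for all $n(s,a) \geq 1$; or (ii) an explicit peeling argument: for each fixed value $n$, establish the concentration bound with failure probability $\delta / (12 S A S' n^2)$, then union-bound over $n \geq 1$ using $\sum_{n \geq 1} 1/n^2 = \pi^2/6 \leq 2$ and over the $(s,a)$ (resp.\ $(s,a,s')$) pairs and the three events. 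The factor $[n^+(s,a)]^2$ inside $\iota_{s,a}$ is exactly what makes this summable, and the constant $12 = 3 \cdot 4$ (roughly three events times the $\sum n^{-2}$ slack) is chosen so the total failure probability is at most $\delta$. Since the result is invoked throughout as a standard concentration lemma, I would keep the proof at the level of citing the appropriate empirical-Bernstein and anytime-Bernstein inequalities and verifying that the constants align with the stated bounds.
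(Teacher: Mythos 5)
Your proposal is correct and matches the paper's proof in essentially every respect: the paper likewise establishes $\calE_1$ and $\calE_2$ via the anytime empirical-Bernstein inequality (Lem.\,\ref{concen_bound_empirical}, due to Maurer--Pontil) and $\calE_3$ via the anytime Bennett inequality (Lem.\,\ref{bennett}) combined with $\mathbb{V}[Z]\le\mathbb{E}[Z]$ (Lem.\,\ref{var_le_exp}), with union bounds over $(s,a)$ (resp.\ $(s,a,s')$) and the three events. Your handling of the random count $n(s,a)$ by peeling over $n$ with failure probability proportional to $n^{-2}$, absorbed into the $[n^+(s,a)]^2$ factor of $\iota_{s,a}$, is exactly how the paper's anytime versions of these lemmas are derived.
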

\begin{proof}
The events $\calE_1$ and $\calE_2$ hold with probability at least $1-2\delta/3$ by the concentration inequality of Lem.\,\ref{concen_bound_empirical} and by union bound over all $(s,a) \in \SA$. The event $\calE_3$ holds with probability at least $1-\delta/3$ by Bennett's inequality (Lem.\,\ref{bennett}, anytime version), by Lem.\,\ref{var_le_exp} and by union bound over all $(s,a,s') \in \SA \times \cS'$.
\end{proof}

\subsection{Analysis of a \VISGO Procedure} 

\makeatletter
\newcommand*\mysize{%
   \@setfontsize\mysize{8.5}{9.0}%
}
\makeatother

A \VISGO procedure in Alg.\,\ref{algo} computes iterates of the form $V^{(i+1)} = \wtcalL V^{(i)}$, where $\wtcalL$ is an operator that we define as follows. For any $U \in \mathbb{R}^{S'}$ such that $U(g) = 0$, we set $\wtcalL U(g) :=0$ and for $s \in \cS$ we set $\wtcalL U(s) :=  \min_{a \in \cA} \wtcalL U(s,a)$, where 
\begin{align} 
    \wtcalL U(s,a) := \max &\Bigg\{ \wh{c}(s,a) + \wt P_{s,a} U - \max \Big\{ c_1 \sqrt{ \frac{\mathbb{V}(\wt P_{s,a}, U) \iota_{s,a} }{n^+(s,a)}} , \ c_2 \frac{B \iota_{s,a}}{n^+(s,a)}  \Big\} \nonumber \\ 
    & \quad - c_3 \sqrt{\frac{\wh{c}(s,a) \iota_{s,a}}{n^+(s,a)}} - c_4 \frac{B \sqrt{S' \iota_{s,a}}}{n^+(s,a)}, \ 0 \Bigg\}. \label{eq_wtcalL}
\end{align}%
Starting from an optimistic initialization $V^{(0)} = 0$ at each state, we show the following two properties:
\begin{itemize}[leftmargin=*]
    \item \textit{Optimism:} with high probability, $Q^{(i)}(s,a) \leq Q^{\star}(s,a), \forall i \geq 0$;
    \item \textit{Finite-time near-convergence:} Given any error $\epsVI > 0$, the procedure stops at a \textit{finite} iteration $j$ such that $\norm{ V^{(j)} - V^{(j-1)}}_{\infty} \leq \epsVI$, which implies that the vector $V^{(j)}$ verifies some fixed point equation for~$\wtcalL$ up to an error scaling with $\epsVI$. 
\end{itemize}

\subsubsection{Properties of the slightly skewed transitions $\wt P$}

Lem.\,\ref{lemma_properties_wtP} shows that the bias introduced by replacing $\wh P_{s,a}$ with $\wt P_{s,a}$ decays inversely with $n(s,a)$, the number of visits to state-action pair $(s,a)$.

\newcommand{\lemmapropertieswtP}{
    For any non-negative vector $U \in \mathbb{R}^{S'}$ such that $U(g) = 0$, for any $(s,a) \in \cS \times \cA$, it holds that
    \begin{align*}
        \wt P_{s,a} U \leq \wh P_{s,a} U \leq \wt P_{s,a} U + \frac{\norm{U}_{\infty}}{n(s,a) + 1}, \quad \quad \quad \quad \big\vert \mathbb{V}(\wt P_{s,a}, U) -  \mathbb{V}(\wh P_{s,a}, U) \big\vert \leq \frac{2 \norm{U}_{\infty}^2 S'}{n(s,a) + 1}.
    \end{align*}
}
\begin{lemma}
  \label{lemma_properties_wtP}
  \lemmapropertieswtP
\end{lemma}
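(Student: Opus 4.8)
The statement compares $\wh P_{s,a}$ and $\wt P_{s,a}$ acting on a non-negative vector $U$ with $U(g)=0$, so the plan is to substitute the definition of $\wt P$ from Eq.\,\ref{eq_wtP} and reduce everything to elementary scalar estimates. First I would write $\wt P_{s,a,s'} = \frac{n}{n+1}\wh P_{s,a,s'} + \frac{\mathds{I}[s'=g]}{n+1}$, abbreviating $n := n(s,a)$. Since $U(g)=0$, the extra goal-mass term contributes nothing when we hit it against $U$, giving the exact identity $\wt P_{s,a} U = \frac{n}{n+1}\,\wh P_{s,a} U$. The two linear inequalities then follow immediately: the left inequality $\wt P_{s,a} U \le \wh P_{s,a} U$ holds because $\frac{n}{n+1}\le 1$ and $\wh P_{s,a}U \ge 0$, while for the right inequality I bound the gap as $\wh P_{s,a}U - \wt P_{s,a}U = \frac{1}{n+1}\wh P_{s,a}U \le \frac{\norm{U}_\infty}{n+1}$, using $\wh P_{s,a}U \le \norm{U}_\infty$ since $\wh P_{s,a}$ is a probability distribution.

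For the variance bound, the idea is to control both $\wt P_{s,a}U^2 - \wh P_{s,a}U^2$ (the second-moment term) and $(\wt P_{s,a}U)^2 - (\wh P_{s,a}U)^2$ (the squared-mean term) separately, then combine via $\mathbb{V}(X,U) = XU^2 - (XU)^2$. The second-moment difference behaves just like the first-moment case: since $U^2(g)=0$ too, $\wt P_{s,a}U^2 = \frac{n}{n+1}\wh P_{s,a}U^2$, so $\bigl|\wt P_{s,a}U^2 - \wh P_{s,a}U^2\bigr| = \frac{1}{n+1}\wh P_{s,a}U^2 \le \frac{\norm{U}_\infty^2}{n+1}$. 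For the squared-mean term I would factor the difference of squares and bound each mean by $\norm{U}_\infty$, yielding $\bigl|(\wt P_{s,a}U)^2 - (\wh P_{s,a}U)^2\bigr| = \bigl|\wt P_{s,a}U - \wh P_{s,a}U\bigr|\cdot\bigl(\wt P_{s,a}U + \wh P_{s,a}U\bigr) \le \frac{\norm{U}_\infty}{n+1}\cdot 2\norm{U}_\infty = \frac{2\norm{U}_\infty^2}{n+1}$. Adding the two contributions by the triangle inequality gives a bound of $\frac{3\norm{U}_\infty^2}{n+1}$, which is already stronger than the claimed $\frac{2\norm{U}_\infty^2 S'}{n+1}$ since $S' \ge 1$; I would note that the stated form with the $S'$ factor is simply a looser (and more uniform) packaging of this estimate.

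The computation is entirely routine once the crucial observation $U(g)=0$ is exploited, so there is no serious obstacle. The only point requiring mild care is keeping the second-moment and squared-mean contributions to the variance cleanly separated and correctly signed so that the triangle inequality assembles them without cancellation errors; this is the step I would write out most explicitly. No large-deviation or concentration machinery is needed here, as the lemma is a purely deterministic statement about the algebraic relationship between $\wh P$ and the skewed $\wt P$.
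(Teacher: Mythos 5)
Your proof is correct, and its variance half takes a genuinely different — and in fact sharper — route than the paper's. For the first display your argument coincides with the paper's: both exploit $U(g)=0$ so that the added goal mass contributes nothing, giving the gap exactly $\frac{1}{n+1}\wh P_{s,a}U \le \frac{\norm{U}_{\infty}}{n+1}$. For the variance, however, the paper does not decompose into moments: it uses the variational characterization of the mean (that $z=\wh P_{s,a}U$ minimizes $\sum_{s'}\wh P_{s,a,s'}(U(s')-z)^2$) together with the pointwise perturbation bound $\abs{\wt P_{s,a,s'}-\wh P_{s,a,s'}}\le \frac{2}{n(s,a)+1}$, and then sums the worst-case error $(U(s')-\wt P_{s,a}U)^2\le\norm{U}_{\infty}^2$ over all $s'\in\cS'$ — that sum over states is precisely where the $S'$ factor comes from. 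Your moment decomposition instead exploits the exact rescaling identity $\wt P_{s,a}W=\frac{n}{n+1}\wh P_{s,a}W$ for any vector $W$ vanishing at $g$ (applied to both $U$ and $U^2$), yielding a dimension-free bound $\frac{3\norm{U}_{\infty}^2}{n+1}$; indeed, since the second-moment gap and the squared-mean gap are both nonnegative, $\abs{a-b}\le\max\{a,b\}$ would even give $\frac{2\norm{U}_{\infty}^2}{n+1}$. Two small remarks: (i) your closing justification ``since $S'\ge 1$'' is not quite sufficient — $3\le 2S'$ requires $S'\ge 2$, which does hold here because $g\notin\cS$ and $S\ge 1$ give $S'=S+1\ge 2$; (ii) your $S'$-free bound would in principle let one shave the $\sqrt{S'}$ from the $c_4$-term of the bonus in Eq.\,\ref{eq_bonus}, though this would not change the leading-order regret. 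What the paper's cruder argument buys in exchange is robustness: it uses only the entrywise proximity of $\wt P$ and $\wh P$, so it would survive skewings that are not exact rescalings toward a zero-valued state.
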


\noindent Denote by $\nu$ the probability of reaching the goal from any state-action pair in $\wt P$, i.e.,
\begin{align}\label{eq_def_nu}
    \nu_{s,a} := \wt{P}_{s,a,g}, \quad \quad \nu := \min_{s,a} \nu_{s,a}.
\end{align}
By construction of $\wt P$, the quantity $\nu$ is strictly positive. This immediately implies the following result. 
\begin{lemma}\label{lemma_all_policies_proper}
    In the SSP-MDP associated to $\wt P$ with any bounded cost function, \textit{all} policies are proper.
\end{lemma}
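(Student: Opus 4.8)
The plan is to show that under the skewed transitions $\wt P$, every state-action pair carries a strictly positive probability $\nu_{s,a} = \wt P_{s,a,g} \geq \nu > 0$ of jumping directly to the absorbing goal $g$, and that this uniform escape probability forces every policy to terminate almost surely from any starting state. First I would fix an arbitrary (stationary, deterministic) policy $\pi$ and an arbitrary starting state $s \in \cS$, and consider the Markov chain on $\cS'$ induced by $\wt P$ and $\pi$. The key observation is that at each time step, regardless of the current non-goal state $s_t$, the probability of transitioning to $g$ is $\wt P_{s_t, \pi(s_t), g} \geq \nu$, independently of the past given the current state.

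Next I would bound the probability that the goal has \emph{not} been reached within $t$ steps. Since at every step the chain reaches $g$ with probability at least $\nu$ (and $g$ is absorbing), the probability of still being in $\cS$ after $t$ steps is at most $(1-\nu)^t$. Formally, letting $\tau$ denote the first hitting time of $g$, one has $\mathbb{P}^{\pi}(\tau > t \mid s_1 = s) \leq (1-\nu)^t$ for every $t \geq 1$. Because $\nu > 0$ (by construction of $\wt P$, as noted after Eq.\,\ref{eq_def_nu}), we have $0 \leq 1-\nu < 1$, so $(1-\nu)^t \to 0$ as $t \to \infty$. Hence $\mathbb{P}^{\pi}(\tau = \infty \mid s_1 = s) = \lim_{t \to \infty} \mathbb{P}^{\pi}(\tau > t \mid s_1 = s) = 0$, which means $\pi$ reaches the goal with probability $1$ from $s$. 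Since $s$ and $\pi$ were arbitrary, this establishes that \emph{all} policies are proper in the SSP-MDP associated to $\wt P$.

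I would also note that the bounded-cost hypothesis plays no role in the argument: properness is a property purely of the transition structure (reaching $g$ almost surely), and the geometric escape bound depends only on $\nu > 0$. The cost function only enters when one subsequently wants the value functions to be finite, which follows because a proper policy terminates in expected time at most $1/\nu$ and each step incurs bounded cost. The argument as stated above is essentially immediate and there is no real obstacle; the only point requiring a touch of care is justifying the geometric decay bound $\mathbb{P}^{\pi}(\tau > t) \leq (1-\nu)^t$ rigorously, which follows by conditioning on the state at step $t$ and using the Markov property together with the uniform lower bound $\nu$ on the per-step goal-reaching probability (a straightforward induction on $t$). This is why the paper states the result as an \emph{immediate} corollary of the positivity of $\nu$.
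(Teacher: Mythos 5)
Your proof is correct and follows exactly the reasoning the paper intends: the paper states the lemma as an immediate consequence of $\nu > 0$ (Eq.\,\ref{eq_def_nu}), and your geometric-decay bound $\mathbb{P}^{\pi}(\tau > t \mid s_1 = s) \leq (1-\nu)^t$ is simply the standard spelled-out version of that argument, including the correct observation that the bounded-cost hypothesis is irrelevant to properness itself.
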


\begin{remark}[Mapping to a discounted problem] \label{remark_discounted}
    In an SSP problem with only proper policies, the (optimal) Bellman operator is usually contractive only w.r.t.\,a weighted-sup norm \citep{bertsekas1995dynamic}. Here, the construction of $\wt P$ entails that any SSP defined on it with fixed bounded costs has a (optimal) Bellman operator that is a sup-norm contraction. In fact, the SSP problem on $\wt P$ can be cast as a discounted problem with a (state-action dependent) discount factor $\gamma_{s,a} := 1 - \nu_{s,a} < 1$ (we recall that discounted MDPs are a subclass of SSP-MDPs). Intuitively, at insufficiently visited state-action pairs, the agent behaves optimistically which increases the chance of reaching the goal and terminating the trajectory. Equivalently, we can interpret the agent as being uncertain about its future predictions and it is thus encouraged to act more myopically, which is connected to lowering the discount factor in the discounted RL setting.
\end{remark}


\subsubsection{Important auxiliary function $f$ and its properties}

Lem.\,\ref{lemma_properties_f} examines an auxiliary function $f$ that plays a key role in the analysis. Indeed, we see that an instantiation of $f$ surfaces in the definition of the operator $\wtcalL$ (Eq.\,\ref{eq_wtcalL}). While the first property (monotonicity) is similar to the one required in \citet{zhang2020reinforcement}, the third property (contraction) is SSP-specific and is crucial to guarantee the (finite-time) near-convergence of a \VISGO procedure. 

\newcommand{\lemmapropertiesf}{
    Let $\Upsilon := \{ v \in \mathbb{R}^{S'}: v \geq 0, ~v(g) = 0, ~ \norm{v}_{\infty} \leq B \}$. Let $f: \Delta^{S'} \times \Upsilon \times \mathbb{R} \times \mathbb{R} \times \mathbb{R} \rightarrow \mathbb{R}$ with $f(p,v,n,B, \iota) := p v - \max\Big\{ c_1 \sqrt{\frac{\mathbb{V}(p,v) \iota}{n}}, \, c_2 \frac{B \iota }{n} \Big\}$, with $c_1 = 6$ and $c_2 = 36$ (here taking any pair of constants such that $c_1^2 \le c_2$ works). Then $f$ satisfies, for all $p \in \Delta^{S'}$, $v \in \Upsilon$ and $n, \iota > 0$,
\begin{enumerate}[leftmargin=*]
    \item $f(p,v,n,B, \iota)$ is non-decreasing in $v(s)$, i.e.,
    \begin{align*}
        \forall (v, v') \in \Upsilon^2, ~ v \leq v' ~ \implies ~ f(p,v,n,B, \iota) \leq f(p,v',n,B, \iota);
    \end{align*}
    \item $f(p,v,n,B, \iota) \leq p v - \frac{c_1}{2} \sqrt{\frac{\mathbb{V}(p,v) \iota}{n}} - \frac{c_2}{2} \frac{ B \iota}{n} \leq p v - 2 \sqrt{\frac{\mathbb{V}(p,v) \iota}{n}} - 14 \frac{ B \iota}{n}$; 
    \item 
    If $p(g) > 0$,
    then $f(p,v,n,B, \iota)$ is $\rho_p$-contractive in $v(s)$, with
    $\rho_p := 1 - p(g) < 1$
    ,~i.e.,
    \begin{align*}
        \forall (v, v') \in \Upsilon^2, ~ \abs{ f(p,v,n,B, \iota) - f(p,v',n,B, \iota) } \leq \rho_p \norm{ v - v'}_{\infty}.
    \end{align*}
\end{enumerate}
}

\begin{lemma}
  \label{lemma_properties_f}
  \lemmapropertiesf
\end{lemma}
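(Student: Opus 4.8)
The plan is to fix $p \in \Delta^{S'}$ and $n,B,\iota>0$ and study the scalar map $v \mapsto f(p,v,n,B,\iota)$ via the decomposition $f = pv - \max\{g_1(v),g_2\}$, where $g_1(v) := c_1\sqrt{\mathbb{V}(p,v)\iota/n}$ and $g_2 := c_2 B\iota/n$ is constant in $v$. Property~2 then follows in one line: since $\max\{a,b\}\ge \tfrac12(a+b)$, we get $f \le pv - \tfrac{c_1}{2}\sqrt{\mathbb{V}(p,v)\iota/n} - \tfrac{c_2}{2}\tfrac{B\iota}{n}$, and substituting $c_1=6$, $c_2=36$ with $\tfrac{c_1}{2}=3\ge 2$ and $\tfrac{c_2}{2}=18\ge 14$ only loosens the two non-negative subtracted terms, giving the stated chain of inequalities.

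For the monotonicity (Property~1) I would argue coordinatewise, showing $\partial f/\partial v(s)\ge 0$ wherever the partial exists and then gluing by continuity. On the region where the constant term dominates ($g_2\ge g_1$) we have $f=pv-g_2$, so $\partial f/\partial v(s)=p(s)\ge 0$. The delicate region is where $g_1$ dominates, where $\partial/\partial v(s)\,[pv-g_1(v)] = p(s)\big(1 - c_1\sqrt{\iota/n}\,\tfrac{v(s)-pv}{\sqrt{\mathbb{V}(p,v)}}\big)$ is not sign-definite on its own. The key observation is that this partial is only relevant where $g_1\ge g_2$, i.e.\ where $\mathbb{V}(p,v)\ge (c_2/c_1)^2 B^2\iota/n$, hence $\sqrt{\mathbb{V}(p,v)}\ge (c_2/c_1)B\sqrt{\iota/n}$. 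Combining this variance lower bound with the elementary estimate $v(s)-pv\le \norm{v}_\infty\le B$ (valid since $v\in\Upsilon$ gives $v\ge 0$ and $pv\ge 0$) bounds the bracketed correction above by $c_1^2/c_2$, which equals exactly $1$ under the tuning $c_1^2=c_2$; the partial is therefore $\ge 0$ there as well. This is the step that consumes the relation $c_1^2=c_2$ and is the analogue of the MVP monotonicity argument of \citet{zhang2020reinforcement}.

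For the contraction (Property~3) I would first record that $g_1(v)=c_1\sqrt{\iota/n}\,\norm{M^{1/2}v}_2$ with $M:=\mathrm{diag}(p)-pp^{\top}\succeq 0$ is convex, so $\max\{g_1,g_2\}$ is convex and $f$ is concave in $v$. Concavity together with the non-negativity of the partials from Property~1 lets me control the variation of $f$ along any segment of $\Upsilon$ by the $\ell_1$-norm of its (almost-everywhere defined, non-negative) gradient: $\abs{f(p,v)-f(p,v')}\le \big(\sup_{\xi}\sum_{s}\tfrac{\partial f}{\partial v(s)}(\xi)\big)\,\norm{v-v'}_\infty$. I would then evaluate the gradient sum in each region, using $v(g)=0$ to get $\sum_{s\in\cS}p(s)(v(s)-pv)=p(g)\,pv\ge 0$: on $g_2\ge g_1$ the sum is $\sum_{s\in\cS}p(s)$, and on $g_1\ge g_2$ it is $\sum_{s\in\cS}p(s) - c_1\sqrt{\iota/n}\,p(g)\,pv/\sqrt{\mathbb{V}(p,v)}\le \sum_{s\in\cS}p(s)$. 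Thus the gradient sum never exceeds the mass $p$ places off the goal, and the variance penalty can only decrease it, yielding a modulus at most $1-p(g)$, which is strictly below $1$ for the skewed transitions $\wt P$ since there $p(g)=\nu_{s,a}\ge 1/(n(s,a)+1)>0$.

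The main obstacle is this last step: converting the coordinatewise, region-dependent gradient estimates into a clean sup-norm contraction on all of $\Upsilon$ and extracting the precise modulus $\rho_p = 1-(1-\norm{p}_\infty^{\neq g})^2$ rather than the cruder off-goal-mass bound $1-p(g)$. This requires retaining, and quantitatively combining with the linear term, the variance-penalty contribution $-c_1\sqrt{\iota/n}\,p(g)\,pv/\sqrt{\mathbb{V}(p,v)}$ instead of discarding it, together with a careful treatment of the kink set $\{g_1=g_2\}$ (where $f$ is merely continuous, not differentiable). The monotonicity step is conceptually the most subtle ingredient feeding into this, since it is exactly where the bonus tuning $c_1^2=c_2$ becomes indispensable and where the correlation between the bonus $b$ and the iterate $V$ is tamed.
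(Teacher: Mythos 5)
Your handling of the first two claims coincides with the paper's own proof: claim~2 is the same one-line averaging of the max (with $c_1/2 = 3 \ge 2$, $c_2/2 = 18 \ge 14$), and for claim~1 the paper computes exactly the partial derivative you describe, bounds the correction on the variance-dominant region via $\sqrt{\mathbb{V}(p,v)} \ge \tfrac{c_2}{c_1} B \sqrt{\iota/n}$ together with $v(s) - pv \le v(s) \le B$, and closes with $c_1^2 = c_2$ --- precisely your argument. On claim~3 you genuinely diverge. The paper works one coordinate at a time, shows $\vert \partial f / \partial v(s) \vert \le 1 - (1-p(s))^2$, and invokes the mean value theorem; you instead bound the $\ell_1$-norm of the gradient, using the nonnegativity of the partials (claim~1) and the identity $\sum_{s \in \cS} p(s)(v(s) - pv) = p(g)\, pv \ge 0$ to obtain the modulus $1 - p(g)$. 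Your route is the sound one for the statement as displayed, which compares $f$ at two \emph{arbitrary} vectors $v, v' \in \Upsilon$: the mean value theorem gives $\vert f(v) - f(v') \vert \le \norm{\nabla f(\xi)}_1 \norm{v - v'}_\infty$, so it is the gradient \emph{sum}, not its coordinatewise maximum, that must be controlled, and the paper's per-coordinate bound does not by itself bridge this step.

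The obstacle you flag at the end --- upgrading your bound $1 - p(g)$ to the stated $\rho_p = 1 - (1 - \norm{p}_{\infty}^{\neq g})^2$ --- should be abandoned rather than overcome: with the max-norm reading of $\norm{\cdot}_{\infty}^{\neq g}$, $\rho_p$ is not a valid sup-norm modulus when mass is spread over many non-goal states. Take $p(s) = 0.0099$ on each of $100$ non-goal states, $p(g) = 0.01$, and move $v$ along the constant direction $\mathbf{1}_{\cS}$ (keeping $v(g)=0$); for $n$ large enough that the variance term is active, each partial is tiny, consistent with the paper's per-coordinate bound, yet the directional derivative equals $(1 - p(g)) - c_1 \sqrt{\iota/n}\, p(g)\, pv / \sqrt{\mathbb{V}(p,v)} \approx 0.99$, far above $\rho_p \approx 0.02$; even more simply, with $p(g) = 0$ and two non-goal states of mass $\tfrac12$ the hypothesis $\norm{p}_{\infty}^{\neq g} < 1$ holds, yet along constant vectors $\mathbb{V}(p,v) = 0$, so $f = pv - c_2 B \iota / n$ moves with ratio exactly $1$. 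Retaining the variance-penalty term, as you propose, cannot close this gap (in the first example it contributes only $O(\sqrt{\iota/n})$). The good news is that nothing downstream needs $\rho_p$: Lem.\,\ref{lemma_wtcalL_contraction} only requires a modulus at most $1 - \nu_{s,a}^2$ for $p = \wt P_{s,a}$, and your $1 - p(g) = 1 - \nu_{s,a}$ is strictly stronger (equivalently, reading the norm as the total non-goal mass $1 - p(g)$ makes the lemma's $\rho_p = 1 - p(g)^2$ true and implied by your bound). Your proof is therefore complete for all purposes once you (i) stop at $1 - p(g)$ and (ii) tidy the kink set: along any segment $t \mapsto c_1 \sqrt{\mathbb{V}\iota/n}$ is convex, so the region where the constant term dominates is a subinterval, the segment crosses the kink at most twice, and piecewise integration of the (nonnegative, by claim~1) gradient plus continuity finishes the argument.
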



\subsubsection{Optimism of \VISGO}\label{subsection_optimism}

We now show that with the bonus defined in Eq.\,\ref{update_bonus}, the $Q$-function is always optimistic with high probability.

\newcommand{\lemmaoptimism}{
Conditioned on the event $\mathcal{E}$, for any output $Q$ of the \VISGO procedure (line \ref{output_trigger} of Alg.\,\ref{algo}) and for any state-action pair $(s,a) \in \SA$, it holds that
\begin{align*}
    Q(s,a) \leq Q^{\star}(s,a).
\end{align*}
}

\begin{lemma}
  \label{lemma_optimism}
  \lemmaoptimism
\end{lemma}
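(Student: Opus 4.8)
The plan is to prove optimism by induction on the \VISGO iterates, showing that $V^{(i)} \leq V^\star$ and $Q^{(i)} \leq Q^\star$ for all $i \geq 0$, which immediately yields the claim for the output. The base case $i=0$ is trivial since $V^{(0)} = 0 \leq V^\star$ (recall $V^\star \geq 0$ as costs are non-negative). For the inductive step, I would assume $V^{(i)} \leq V^\star$ and aim to show $Q^{(i+1)}(s,a) \leq Q^\star(s,a)$ for every $(s,a)$; taking the minimum over $a$ then gives $V^{(i+1)} \leq V^\star$ and closes the induction.

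For the inductive step, fix $(s,a)$. If $Q^{(i+1)}(s,a) = 0$ we are done trivially since $Q^\star \geq 0$, so assume the $\max$ in Eq.\,\ref{update_Q} selects the nonzero branch. I would then write the target inequality $\wh c(s,a) + \wt P_{s,a} V^{(i)} - b^{(i+1)}(s,a) \leq Q^\star(s,a) = c(s,a) + P_{s,a} V^\star$ and show it reduces, after rearranging, to a lower bound on the bonus $b^{(i+1)}(s,a)$ versus the total estimation error. The error splits into three pieces that the three terms of the bonus are designed to dominate: the transition-estimation error $(P_{s,a} - \wt P_{s,a})V^\star$, the cost-estimation error $c(s,a) - \wh c(s,a)$, and the skewing gap between $\wt P$ and $\wh P$. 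Concretely, I would chain
$$
\wt P_{s,a} V^{(i)} \leq \wt P_{s,a} V^\star \leq \wh P_{s,a} V^\star \leq P_{s,a} V^\star + \big|(\wh P_{s,a} - P_{s,a})V^\star\big|,
$$
where the first step uses the inductive hypothesis with monotonicity, the second uses the left inequality of Lem.\,\ref{lemma_properties_wtP}, and the third is a triangle inequality. The deviation $|(\wh P_{s,a} - P_{s,a})V^\star|$ is controlled on $\calE_1$ by $2\sqrt{\mathbb{V}(\wh P_{s,a},V^\star)\iota_{s,a}/n}+\tfrac{14 B_\star \iota_{s,a}}{3n}$, and the cost deviation by $\calE_2$. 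The subtlety is that the bonus is defined using $\mathbb{V}(\wt P_{s,a}, V^{(i)})$ rather than $\mathbb{V}(\wh P_{s,a}, V^\star)$, so I would invoke property~2 of Lem.\,\ref{lemma_properties_f} to pass from the $\max$-form bonus to the additive form $\geq 2\sqrt{\mathbb{V}(\wt P_{s,a},V^{(i)})\iota/n} + 14 B\iota/n$, and then bound the variance mismatch: the difference between $\mathbb{V}(\wt P_{s,a},V^{(i)})$ and $\mathbb{V}(\wh P_{s,a},V^\star)$ is handled partly by the skewing bound (second inequality of Lem.\,\ref{lemma_properties_wtP}, absorbed by the $c_4 B\sqrt{S'\iota}/n$ term) and partly by the fact that decreasing $V^{(i)} \leq V^\star$ can only shrink $\mathbb{V}$ up to a controllable discrepancy.

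The main obstacle I anticipate is the variance comparison: the bonus uses the \emph{empirical, skewed} variance of the \emph{current iterate} $V^{(i)}$, whereas the high-probability event $\calE_1$ gives a Bernstein bound in terms of the \emph{true} variance of $V^\star$. Reconciling these requires simultaneously (a) replacing $\wt P$ by $\wh P$ inside the variance via Lem.\,\ref{lemma_properties_wtP}, which costs $O(\|V\|_\infty^2 S'/n)$ and is why the $c_4 B\sqrt{S'\iota}/n$ correction term appears in the bonus, and (b) replacing $V^{(i)}$ by $V^\star$, which is nontrivial because although $V^{(i)} \leq V^\star$, variance is not monotone. The careful constant choices $c_1 = 6, c_2 = 36$ (with $c_1^2 = c_2$) in Lem.\,\ref{lemma_properties_f} are exactly what make property~2 yield enough slack so that after absorbing all these discrepancies the surviving bonus still dominates the Bernstein deviation $2\sqrt{\mathbb{V}(\wh P_{s,a},V^\star)\iota/n} + \tfrac{14 B_\star \iota}{3n}$ from $\calE_1$, using $B \geq B_\star$. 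Once the inductive step is established for arbitrary $i$, passing to the output $Q$ is immediate since the termination of \VISGO only occurs at some finite iterate, which still satisfies the invariant.
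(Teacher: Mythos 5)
Your scaffolding matches the paper's proof (induction over \VISGO iterates, the truncation at $0$ handled trivially, $\calE_2$ for the cost deviation, $\calE_1$ for the transition deviation at $V^{\star}$, and the skewing gap absorbed by the $c_4 B\sqrt{S'\iota_{s,a}}/n^+(s,a)$ term via the second inequality of Lem.\,\ref{lemma_properties_wtP} together with $|\sqrt{x}-\sqrt{y}|\le\sqrt{|x-y|}$). However, the way you close the inductive step has a genuine gap, which you flag as ``the main obstacle'' but then resolve incorrectly. By chaining $\wt P_{s,a}V^{(i)} \le \wt P_{s,a}V^{\star}$ first (this is just monotonicity of the linear functional $\wt P_{s,a}\,\cdot$) while leaving the bonus evaluated at $V^{(i)}$, you reduce the problem to showing that a bonus built from $\mathbb{V}(\wt P_{s,a}, V^{(i)})$ dominates a Bernstein deviation built from $\mathbb{V}(\wh P_{s,a}, V^{\star})$. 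Your claim that lowering $V^{(i)}\le V^{\star}$ ``can only shrink $\mathbb{V}$ up to a controllable discrepancy'' is false in the direction you need: shrinking the variance shrinks the \emph{bonus}, and the gap $\mathbb{V}(\wt P_{s,a},V^{\star})-\mathbb{V}(\wt P_{s,a},V^{(i)})$ admits no lower-order bound. Concretely, at $i=0$ one has $V^{(0)}=0$, so the variance part of the bonus vanishes and the $\max$ collapses to $c_2 B\iota_{s,a}/n^+(s,a)=O(1/n)$, while the $\calE_1$ deviation $2\sqrt{\mathbb{V}(\wh P_{s,a},V^{\star})\iota_{s,a}/n}$ can be of order $B_{\star}\sqrt{\iota_{s,a}/n}$; for large $n$ no $O(1/n)$ term dominates an $\Omega(1/\sqrt{n})$ term, so your route cannot even establish $Q^{(1)}\le Q^{\star}$ (the statement is of course true there, precisely because the unused mean slack $\wt P_{s,a}(V^{\star}-V^{(0)})$ covers the deviation --- which your decomposition throws away).

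The missing idea is the \emph{monotonicity of the combined functional} $f$, i.e., property~1 of Lem.\,\ref{lemma_properties_f}, applied \emph{before} splitting mean from variance: the paper writes $\wt P_{s,a}V^{(i)}-\max\{c_1\sqrt{\mathbb{V}(\wt P_{s,a},V^{(i)})\iota_{s,a}/n^+},\,c_2 B\iota_{s,a}/n^+\} = f(\wt P_{s,a},V^{(i)},n^+,B,\iota_{s,a}) \le f(\wt P_{s,a},V^{\star},n^+,B,\iota_{s,a})$, which upgrades $V^{(i)}$ to $V^{\star}$ in the mean \emph{and} the variance simultaneously --- the derivative computation there shows the mean gain $p(s)$ always compensates the growth of the variance penalty, using $\norm{v}_{\infty}\le B$ and exactly $c_1^2=c_2$ (so $\partial f/\partial v(s)\ge p(s)(1-c_1^2/c_2)=0$). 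Only then is property~2 invoked, \emph{at $V^{\star}$}, yielding $\wt P_{s,a}V^{\star}-2\sqrt{\mathbb{V}(\wt P_{s,a},V^{\star})\iota_{s,a}/n^+}-14B\iota_{s,a}/n^+$, after which the only remaining variance mismatch is $\wt P$ versus $\wh P$ at the \emph{same} vector $V^{\star}$, handled as you describe by Lem.\,\ref{lemma_properties_wtP} and the $c_4$ term, with $B\ge B_{\star}$ absorbing the $\calE_1$ and $\calE_2$ remainders. In other words, the constants $c_1^2=c_2$ are consumed in proving monotonicity of $f$, not in creating slack within property~2 to absorb a $V^{(i)}$-to-$V^{\star}$ variance substitution; without invoking property~1 on the coupled expression, the inductive step does not go through.
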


\begin{proofidea}
We prove the result by induction on the inner iterations $i$ of \VISGO, i.e., $Q^{(i)}(s,a) \leq Q^{\star}(s,a)$. We use the update of the $Q$-value (line \ref{update_Q}), Lem.\,\ref{lemma_properties_wtP}, the definition of event $\calE$ combined with the fact that $B \geq B_{\star}$, as well as the first two properties of Lem.\,\ref{lemma_properties_f} applied to $f(\wt P_{s,a}, V^{(i)}, n^+(s,a), B, \iota_{s,a})$.
\end{proofidea}

\subsubsection{Finite-time near-convergence of \VISGO}

\noindent \textbf{\emph{Warm-up: convergence with no bonuses.}} For the sake of discussion, let us first examine an idealized case where $n(s,a) \rightarrow + \infty$ for all $(s,a)$, which means $b(s,a) = 0$ for all $(s,a)$. In that case, the iterates verify $V^{(i+1)} = \wtcalL^{\star} V^{(i)}$, where $\wtcalL^{\star} U(s) := \min_a \big\{ c(s,a) + \wt P_{s,a} U \big\}$, $\forall U \in \mathbb{R}^S, s \in \cS$. Thus $\wtcalL^{\star}$ is the optimal Bellman operator of the SSP instance $\wt M$ with transitions $\wt P$ and cost function $c$. From Lem.\,\ref{lemma_all_policies_proper}, all policies are proper in $\wt M$. As a result, the operator $\wtcalL^{\star}$ is contractive (cf.\,Remark~\ref{remark_discounted}) and convergent \citep{bertsekas1995dynamic}. 

\vspace{0.1in}

\noindent \textbf{\emph{Convergence with bonuses.}} In \VISGO, however, we must account for the bonuses $b(s,a)$. Setting aside the truncation of each iterate $V^{(i)}$ (i.e., the lower bounding by $0$), we notice that a update for $V^{(i+1)}$ can be interpreted as the (truncated) Bellman operator of an SSP problem with cost function $c(s,a) - b^{(i+1)}(s,a)$. However, $b^{(i+1)}(s,a)$ depends on $V^{(i)}$, the previous iterate. This dependence means that the cost function is no longer fixed and the reasoning from the previous paragraph no longer holds. As a result, we directly analyze the properties of the operator $\wtcalL$ that defines the sequence of iterates $V^{(i+1)} = \wtcalL V^{(i)}$ in \VISGO (Eq.\,\ref{eq_wtcalL}).

\newcommand{\lemmawtcalLconvergent}{
The sequence $(V^{(i)})_{i \geq 0}$ is non-decreasing. Combining this with the fact that it is upper bounded by $V^{\star}$ from Lem.\,\ref{lemma_optimism}, the sequence must converge.
}

\begin{lemma}
  \label{lemma_wtcalL_convergent}
  \lemmawtcalLconvergent
\end{lemma}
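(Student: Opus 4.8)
The plan is to prove that the \VISGO sequence $(V^{(i)})_{i \geq 0}$ is non-decreasing, i.e., $V^{(i+1)} \geq V^{(i)}$ for all $i \geq 0$, and then invoke the monotone convergence theorem together with the upper bound $V^{(i)} \leq V^{\star} \leq B$ from the optimism lemma (Lem.~\ref{lemma_optimism}). Since each iterate is bounded above by a fixed finite quantity and the sequence is monotone non-decreasing, it must converge. The only real work is establishing monotonicity.

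To prove monotonicity, I would proceed by induction on $i$. For the base case, $V^{(0)} = 0$ by initialization, and since each component $V^{(1)}(s) = \min_a Q^{(1)}(s,a)$ is of the form $\max\{\cdots, 0\} \geq 0 = V^{(0)}(s)$ by the outer truncation in the definition of $\wtcalL$ (Eq.~\ref{eq_wtcalL}), we get $V^{(1)} \geq V^{(0)}$. For the inductive step, assume $V^{(i)} \geq V^{(i-1)}$; I want to show $V^{(i+1)} = \wtcalL V^{(i)} \geq \wtcalL V^{(i-1)} = V^{(i)}$. This reduces to showing that $\wtcalL$ is \emph{monotone} (order-preserving) on the relevant domain: if $U \geq U'$ with both in $\Upsilon$, then $\wtcalL U \geq \wtcalL U'$.

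The monotonicity of $\wtcalL$ is where the key structural fact enters. Writing $\wtcalL U(s,a)$ as in Eq.~\ref{eq_wtcalL}, the only $U$-dependent part (besides the outer $\max\{\cdot,0\}$ and the minimization over $a$, both of which preserve order) is exactly the function $f(\wt P_{s,a}, U, n^+(s,a), B, \iota_{s,a})$ plus the $U$-independent terms $\wh c(s,a) - c_3\sqrt{\cdots} - c_4 B\sqrt{S'\iota_{s,a}}/n^+(s,a)$. By the first property of Lem.~\ref{lemma_properties_f}, $f(p,v,n,B,\iota)$ is non-decreasing in $v$, so $U \leq U'$ (both in $\Upsilon$) implies $f(\wt P_{s,a}, U, \ldots) \leq f(\wt P_{s,a}, U', \ldots)$, hence $\wtcalL U(s,a) \leq \wtcalL U'(s,a)$ for each $(s,a)$. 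Taking the minimum over $a$ and then the outer truncation preserves the inequality, giving $\wtcalL U \leq \wtcalL U'$. One must verify that the iterates indeed remain in the domain $\Upsilon$ throughout (non-negative, zero at $g$, and $\infty$-norm bounded by $B$): non-negativity and $V^{(i)}(g)=0$ follow from the truncation and the goal being absorbing and zero-cost, and the bound $\norm{V^{(i)}}_\infty \leq B$ follows from $V^{(i)} \leq V^\star \leq B_\star \leq B$ (Lem.~\ref{lemma_optimism}). I would make sure the induction establishes membership in $\Upsilon$ jointly with monotonicity so that the first property of $f$ may legitimately be applied at each step.

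The main obstacle is purely a matter of bookkeeping rather than deep difficulty: one must confirm that the monotonicity property of $f$ (proved for $v \in \Upsilon$) applies at every iterate, which requires the simultaneous invariant that $V^{(i)} \in \Upsilon$. This is guaranteed by combining the optimism bound (upper bound $B_\star \leq B$) with the truncation-induced non-negativity, but it is precisely this interplay between the optimism lemma and the domain constraint that must be handled carefully; once it is in place, the induction and the appeal to monotone convergence close the argument immediately.
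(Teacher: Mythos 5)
Your proposal is correct and matches the paper's own proof: both argue by induction that $Q^{(i)} \leq Q^{(i+1)}$ using the monotonicity (first property) of $f$ from Lem.\,\ref{lemma_properties_f} applied through $g_{s,a}(V) := f(\wt P_{s,a}, V, n^+(s,a), B, \iota_{s,a})$, with the base case handled by the truncation at $0$, and then conclude convergence from the upper bound $V^{(i)} \leq V^{\star}$ of Lem.\,\ref{lemma_optimism}. Your explicit verification that the iterates stay in $\Upsilon$ (so that the monotonicity property of $f$ legitimately applies, via $V^{(i)} \leq V^{\star} \leq B_{\star} \leq B$ on the event $\mathcal{E}$) is a bookkeeping point the paper leaves implicit, and is a welcome refinement rather than a deviation.
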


While Lem.\,\ref{lemma_wtcalL_convergent} states that $\wtcalL$ ultimately converges starting from a vector of zeros, the following result guarantees that it can approximate in finite time its fixed point within any (arbitrarily small) positive component-wise accuracy.

\newcommand{\lemmawtcalLcontraction}{
Denote by $\nu > 0$ the probability of reaching the goal from any state-action pair in~$\wt P$, i.e., $\nu := \min_{s,a} \wt{P}_{s,a,g}$. Then $\wtcalL$ is a $\rho$-contractive operator with modulus 
$\rho := 1 - \nu < 1$.
}

\begin{lemma}
  \label{lemma_wtcalL_contraction}
  \lemmawtcalLcontraction
\end{lemma}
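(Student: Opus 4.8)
The plan is to reduce the contraction of the full operator $\wtcalL$ to the third (contraction) property of the auxiliary function $f$ proved in Lem.\,\ref{lemma_properties_f}, applied at each state-action pair with $p = \wt P_{s,a}$, and then to propagate the per-pair estimate through the non-expansive $\min$ and $\max$ operations. The restriction to the domain $\Upsilon = \{v \geq 0,\ v(g)=0,\ \norm{v}_{\infty} \leq B\}$ is essential here: the variance term $\mathbb{V}(\wt P_{s,a}, U)$ makes $f$ non-Lipschitz on unbounded sets, and $\Upsilon$ is exactly where the \VISGO iterates live by optimism (Lem.\,\ref{lemma_optimism}) and monotonicity (Lem.\,\ref{lemma_wtcalL_convergent}).

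First I would isolate the dependence on the value vector in Eq.\,\ref{eq_wtcalL}. The terms $\wh c(s,a)$, $c_3 \sqrt{\wh c(s,a)\iota_{s,a}/n^+(s,a)}$ and $c_4 B\sqrt{S'\iota_{s,a}}/n^+(s,a)$ do not depend on $U$, so collecting them into an offset $\beta(s,a)$ yields $\wtcalL U(s,a) = \max\{\beta(s,a) + f(\wt P_{s,a}, U, n^+(s,a), B, \iota_{s,a}),\ 0\}$, where the remaining part is precisely $f$. For any $U, U' \in \Upsilon$ the constant $\beta(s,a)$ cancels, and since $x \mapsto \max\{x,0\}$ is $1$-Lipschitz I get $|\wtcalL U(s,a) - \wtcalL U'(s,a)| \leq |f(\wt P_{s,a}, U, n^+(s,a), B, \iota_{s,a}) - f(\wt P_{s,a}, U', n^+(s,a), B, \iota_{s,a})|$.

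Next I would invoke the contraction bound of Lem.\,\ref{lemma_properties_f} (item 3) to bound the right-hand side by $\rho_{\wt P_{s,a}} \norm{U - U'}_{\infty}$, which is where the goal-skewing of $\wt P$ enters crucially. Since $\wt P_{s,a,g} = \nu_{s,a} \geq \nu > 0$ by Eq.\,\ref{eq_def_nu}, the non-goal mass satisfies $\norm{\wt P_{s,a}}_{\infty}^{\neq g} \leq \sum_{s'\in\cS}\wt P_{s,a,s'} = 1 - \nu_{s,a} \leq 1 - \nu < 1$, so the hypothesis $\norm{\wt P_{s,a}}_{\infty}^{\neq g} < 1$ holds and $1 - \norm{\wt P_{s,a}}_{\infty}^{\neq g} \geq \nu$. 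Substituting into $\rho_{\wt P_{s,a}} = 1 - (1 - \norm{\wt P_{s,a}}_{\infty}^{\neq g})^2$ gives the uniform bound $\rho_{\wt P_{s,a}} \leq 1 - \nu^2 = \rho$ over all $(s,a)$.

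Finally I would push the per-pair estimate through the remaining operations. Because $a \mapsto \min_{a}$ is non-expansive, $|\wtcalL U(s) - \wtcalL U'(s)| = |\min_{a}\wtcalL U(s,a) - \min_{a}\wtcalL U'(s,a)| \leq \max_{a}|\wtcalL U(s,a) - \wtcalL U'(s,a)| \leq \rho \norm{U - U'}_{\infty}$, and taking the maximum over $s \in \cS$ yields $\norm{\wtcalL U - \wtcalL U'}_{\infty} \leq \rho \norm{U - U'}_{\infty}$, as claimed. The only mildly delicate step is the modulus computation, namely verifying $\norm{\wt P_{s,a}}_{\infty}^{\neq g} < 1$ and converting it into the uniform bound $\rho_{\wt P_{s,a}} \leq 1 - \nu^2$; this follows directly from the construction of $\wt P$, and everything else is routine bookkeeping with non-expansive maps and the cancellation of the $U$-independent offset.
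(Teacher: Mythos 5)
Your proof is correct and follows essentially the same route as the paper's own proof: both cancel the $U$-independent offset, invoke the contraction property (third claim) of Lem.\,\ref{lemma_properties_f} with $p = \wt P_{s,a}$ after checking $\norm{\wt P_{s,a}}_{\infty}^{\neq g} \leq 1 - \nu_{s,a} < 1$, and propagate the per-pair bound $\rho_{s,a} \leq 1-\nu_{s,a}^2$ through the $1$-Lipschitz map $x \mapsto \max\{x,0\}$ and the non-expansive $\min_a$ before maximizing over $(s,a)$. If anything, your explicit remark that the estimate is applied on the domain $\Upsilon$ where the \VISGO iterates live (by Lem.\,\ref{lemma_optimism} and Lem.\,\ref{lemma_wtcalL_convergent}) is slightly more careful than the paper, which states the argument for arbitrary vectors.
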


\begin{proofidea}
We can apply the third property (contraction) of Lem.\,\ref{lemma_properties_f} to $f(\wt P_{s,a}, V^{(i)}, n^+(s,a), B, \iota_{s,a})$, for any state-action pair $(s, a)$. Taking the maximum over $(s,a)$ pairs yields the contraction property of $\wtcalL$.
\end{proofidea}

\begin{remark}\label{rmk_comput_complexity_contraction} 
Lem.\,\ref{lemma_wtcalL_contraction} guarantees that $\norm{ V^{(i+1)} - V^{(i)}}_{\infty} \leq \epsVI$ for $i \geq \frac{\log(\max\{B_{\star}, 1\} / \epsVI)}{1 - \rho}$, which yields the desired property of finite-time near-convergence of \VISGO (i.e., it always stops at a finite iteration~$i$). Moreover, by definition of $\epsVI$ we have $\log(1/\epsVI) = O(SA \log(T))$, the (possibly loose) lower bound $1 - \rho = \nu \ge \frac{1}{T + 1}$, 
and there are at most $O(S A \log T)$ \VISGO procedures in total, thus we see that \ALGO has a polynomially bounded computational complexity. 
\end{remark}

\subsection{Interval Decomposition and Notation} \label{subsect_intervaldecomp_notation}

\paragraph{Interval decomposition.} In the analysis we split the time steps into \textit{intervals}. The first interval begins at the first time step, and an interval ends once either (1) the goal state $g$ is reached; (2) or the trigger condition holds (i.e., the visit to a state-action pair is doubled). We see that an update is triggered (line \ref{line_update} of Alg.\,\ref{algo}) whenever condition (2) is met.

\paragraph{Notation.} We index intervals by $m = 1, 2, \ldots$ and the length of interval $m$ is denoted by~$H^m$ (it is bounded almost surely). The trajectory visited in interval $m$ is denoted by $U^m = (s_1^m, a_1^m, \ldots, s_{H^m}^m, a_{H^m}^m, s^m_{H^m+1})$, where $a_h^m$ is the action taken in state $s_h^m$. The concatenation of the trajectories of the intervals up to and including interval $m$ is denoted by $\ov{U}^m$, i.e., $\ov{U}^m = \bigcup_{m'=1}^m U^{m'}$. Moreover, $c_h^m$ denotes the cost in the $h$-th step of interval $m$. We use the notation $Q^m(s,a)$, $V^m(s)$, $\wh P_{s,a}^m$, $\wt P_{s,a}^m$ and $\epsVI^m$ to denote the values (computed in lines \ref{begin_trigger}-\ref{output_trigger}) of $Q(s,a)$, $V(s)$, $\wh P_{s,a}$, $\wt P_{s,a}$ and $\epsVI$ in the beginning of interval~$m$. Let $n^m(s,a)$ and $\wh c^m(s,a)$ denote the values of $\max\{n(s,a), 1\}$ and $\wh c(s,a)$ used for computing $Q^m(s,a)$. Finally, we set 
\begin{align*}
    b^m(s,a) := \max \left\{ c_1 \sqrt{ \frac{\mathbb{V}(\wt P_{s,a}, V^{m}) \iota_{s,a} }{n^m(s,a)}} ,\ c_2 \frac{B \iota_{s,a}}{n^m(s,a)}  \right\} + c_3 \sqrt{\frac{\wh{c}^m(s,a) \iota_{s,a}}{n^m(s,a)}} + c_4 \frac{B \sqrt{S' \iota_{s,a}}}{n^m(s,a)}.
\end{align*}

\subsection{Bounding the Bellman Error}\label{subsect_bounding_bellman_error}

\newcommand{\lemmaboundingbellmanerror}{
Conditioned on the event $\mathcal{E}$, for any interval $m$ and state-action pair $(s,a) \in \SA$,
\begin{align*}
    \abs{c(s,a) + P_{s,a} V^m - Q^{m}(s,a)} \leq  \min\big\{ \beta^m(s,a), B_{\star} + 1 \big\},
\end{align*}
where we define
\begin{align*}
    \beta^m(s,a) &:= 4 b^m(s,a) + \sqrt{ \frac{2\mathbb{V}(P_{s,a}, V^{\star}) \iota_{s,a}}{n^m(s,a)}} + \sqrt{\frac{2 S' \mathbb{V}(P_{s,a}, V^{\star} - V^m) \iota_{s,a}  }{n^m(s,a)}} \\
    &~~~ +  \frac{3 B_{\star} S' \iota_{s,a}}{ n^m(s,a)} + \left(1 + c_1 \sqrt{\iota_{s,a} / 2}\right) \epsVI^m.
\end{align*}
}

\begin{lemma}
  \label{lemma_bounding_bellman_error}
  \lemmaboundingbellmanerror
\end{lemma}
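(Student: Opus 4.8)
The plan is to relate the \VISGO output $Q^m(s,a)$ to the idealized backup $\wh{c}(s,a) + \wt P_{s,a} V^m - b^m(s,a)$ and then control, term by term, the gap between this backup and the true Bellman backup $c(s,a) + P_{s,a} V^m$. First I would invoke the stopping rule of \VISGO: if the procedure halts at inner iterate $i^{\star}$, then $V^m = V^{(i^\star)}$, $Q^m = Q^{(i^\star)}$, and $\norm{V^{(i^\star)} - V^{(i^\star-1)}}_{\infty} \le \epsVI^m$. Since $Q^{(i^\star)}(s,a)=\max\{\wh{c}(s,a) + \wt P_{s,a} V^{(i^\star-1)} - b^{(i^\star)}(s,a),\,0\}$ is computed from the \emph{previous} iterate, I would replace $V^{(i^\star-1)}$ by $V^m$ both in the linear term (costing at most $\epsVI^m$, since $\wt P_{s,a}$ is a distribution) and inside the bonus's variance (costing at most $c_1\sqrt{\iota_{s,a}/2}\,\epsVI^m$, via the standard-deviation triangle inequality $\sqrt{\mathbb{V}(\wt P_{s,a}, V^{(i^\star-1)})}\le \sqrt{\mathbb{V}(\wt P_{s,a}, V^m)}+\norm{V^{(i^\star-1)}-V^m}_{\infty}$ and the fact that replacing an argument of a $\max$ perturbs it by at most the perturbation of that argument). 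This yields $\lvert Q^m(s,a) - \max\{\wh{c}(s,a)+\wt P_{s,a} V^m - b^m(s,a),0\}\rvert \le (1+c_1\sqrt{\iota_{s,a}/2})\,\epsVI^m$, accounting for the $\epsVI^m$ contribution to $\beta^m$.

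Next I would bound the deviation of the idealized backup from the true one, $(c-\wh{c})(s,a)+(P_{s,a}-\wt P_{s,a})V^m$, in both directions. I would split $(P-\wt P)V^m = (P-\wh P)V^m + (\wh P-\wt P)V^m$, bound the skew term $\lvert(\wh P-\wt P)V^m\rvert \le \norm{V^m}_{\infty}/(n^m+1)\le B_{\star}/(n^m+1)$ via Lem.\,\ref{lemma_properties_wtP} together with the optimistic sandwich $0\le V^m\le V^\star\le B_{\star}$ (from Lem.\,\ref{lemma_optimism}), bound $\lvert c-\wh{c}\rvert$ by $\calE_2$ (whose leading term is exactly the $c_3$-term of $b^m$), and bound $\lvert(P-\wh P)V^m\rvert$ by decomposing $V^m = V^\star + (V^m-V^\star)$. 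For the part $\lvert(P-\wh P)V^\star\rvert$ I would apply the Bernstein bound $\calE_1$ and convert $\mathbb{V}(\wh P_{s,a},V^\star)$ to $\mathbb{V}(P_{s,a},V^\star)$ up to lower-order terms (using $\calE_3$), producing $\sqrt{2\mathbb{V}(P_{s,a},V^\star)\iota_{s,a}/n^m}$; for $\lvert(P-\wh P)(V^m-V^\star)\rvert \le \sum_{s'}\big(\sqrt{2P_{s,a,s'}\iota_{s,a}/n^m}+\iota_{s,a}/n^m\big)\lvert V^m-V^\star\rvert(s')$ (from $\calE_3$), a Cauchy–Schwarz step over the $S'$ states yields $\sqrt{2S'\mathbb{V}(P_{s,a},V^\star-V^m)\iota_{s,a}/n^m}$, with every residual $1/n^m$ contribution (the $B_{\star}/(n^m+1)$ skew term, the $\iota_{s,a}/n^m$ tails, and the $\norm{V^m-V^\star}_{\infty}\le B_{\star}$ leftovers) swept into $3B_{\star}S'\iota_{s,a}/n^m$. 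Summing these, plus the single copy of $b^m$ arising from $-Q^m$ in the upper direction, gives $c+P_{s,a}V^m-Q^m\le \beta^m$ (the coefficient $4b^m$ comfortably absorbs $b^m$ and the $c_3$-term reuse); the lower direction carries an extra $-b^m$ that only helps; and the truncation case $Q^m=0$ means $\wh{c}+\wt P_{s,a}V^m<b^m$, so $c+P_{s,a}V^m$ is again bounded by the same deviation sum $\le\beta^m$, while the other sign is trivial since $c+P_{s,a}V^m\ge0$.

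Finally, the crude bound $\lvert c+P_{s,a}V^m-Q^m\rvert\le B_{\star}+1$ follows because $c\le1$ and $P_{s,a}V^m\le\norm{V^m}_{\infty}\le B_{\star}$ force $0\le c+P_{s,a}V^m\le B_{\star}+1$, while optimism gives $0\le Q^m\le Q^\star\le B_{\star}+1$; two quantities in $[0,B_{\star}+1]$ differ by at most $B_{\star}+1$. Taking the minimum of the two bounds closes the proof. I expect the main obstacle to be the careful treatment of the \VISGO near-convergence, which is genuinely SSP-specific and absent from the finite-horizon MVP analysis (where value iteration runs for exactly $H$ backward steps): one must track that the bonus is evaluated at $V^{(i^\star-1)}$ while the analysis and $b^m$ use $V^m=V^{(i^\star)}$, and show the resulting variance mismatch is first-order in $\epsVI^m$ rather than being amplified by the $\sqrt{\cdot}$. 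A secondary subtlety is keeping the empirical-to-true variance conversion and the Cauchy–Schwarz step tight enough to match the stated coefficients of the leading $\sqrt{\mathbb{V}(P_{s,a},\cdot)\iota_{s,a}/n^m}$ terms while confining everything else to the $1/n^m$ bucket without introducing hidden $B_{\star}$ blow-ups.
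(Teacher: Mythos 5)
Your proposal is correct and follows essentially the same route as the paper's proof: the crude $B_{\star}+1$ bound via optimism and the Bellman optimality equation, the transfer from the last \VISGO iterate $V^{(i^\star-1)}$ to $V^m$ costing $(1+c_1\sqrt{\iota_{s,a}/2})\,\epsVI^m$, the decomposition $(P_{s,a}-\wh P_{s,a})V^m = Y_1 + Y_2$ with $Y_1$ handled by $\calE_1$ (plus empirical-to-true variance conversion) and $Y_2$ by $\calE_3$ with Cauchy--Schwarz, the skew term via Lem.\,\ref{lemma_properties_wtP}, and the cost deviation via $\calE_2$ absorbed into the $4b^m$ slack. The only (cosmetic) difference is that you transfer the bonus via the standard-deviation triangle inequality, whereas the paper uses $\mathbb{V}(P,X+Y)\le 2(\mathbb{V}(P,X)+\mathbb{V}(P,Y))$ with Popoviciu to get $b^{(l)} \le 2\sqrt{2}\,b^{(l+1)} + c_1\epsVI^m\sqrt{\iota_{s,a}/2}$ --- both land within the stated $\beta^m$.
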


\begin{proofidea}
We use that $V^m$ approximates the fixed point of $\wtcalL$ up to an error scaling with $\epsVI$. We end up decomposing and bounding the difference $P_{s,a} V^m - \wt{P}_{s,a} V^m \leq (\wh P_{s,a} - \wt P_{s,a}) V^m + (P_{s,a} - \wh{P}_{s,a}) V^{\star}  + (P_{s,a} - \wh{P}_{s,a})(V^m - V^{\star})$, where the first term is bounded by Lem.\,\ref{lemma_properties_wtP} and \ref{lemma_optimism}, while the second and third terms are bounded using the definition of the event $\calE$.
\end{proofidea}

\subsection{Regret Decomposition} \label{subsect_regret_decomposition}

\newcommand{\Mzero}{\mathcal{M}_{0}\xspace}
\newcommand{\Izero}{\mathds{I}[m \in \Mzero]\xspace}
\newcommand{\Izeroplusone}{\mathds{I}[m+1 \in \Mzero]\xspace}

We assume that the event $\mathcal{E}$ defined in Def.\,\ref{def_hp_event} holds. In particular it guarantees that Lem.\,\ref{lemma_optimism} and Lem.\,\ref{lemma_bounding_bellman_error} hold for all intervals $m$ simultaneously.

We denote by $M$ the total number of intervals in which the first $K$ episodes elapse. For any $M' \le M$, we denote by $\Mzero(M')$ the set of intervals which are among the first $M'$ intervals, and constitute the first intervals in each episode (i.e., either it is the first interval or its previous interval ended in the goal state). We also denote by $K_{M'} := |\Mzero(M')|$, $T_{M'} := \sum_{m=1}^{M'} H^m$ and $C_{M'} := \sum_{m=1}^{M'} \sum_{h=1}^{H^m} c_h^m$. Note that $K$ and $T$ are equivalent to $K_M$ and $T_M$, respectively, and $C_{M'}$ is the cumulative cost in the first $M'$ intervals.

Instead of bounding the regret $R_K$ from Eq.\,\ref{eq_ssp_regret_def}, we bound $\wt R_{M'} := C_{M'} - K_{M'} V^{\star}(s_0)$ for any fixed choice of $M' \le M$, as done in \citet{rosenberg2020near}. We see that $\wt R_M = R_K$, the true regret within $K$ episodes. To derive Thm.\,\ref{key_lemma_Tdependent}, we will show that $M$ is finite and instantiate $M'=M$. In the following we do the analysis for arbitrary $M' \leq M$ as it will be useful for the parameter-free case studied in App.\,\ref{sect_unknown_Bstar} (i.e., when no estimate $B \geq B_{\star}$ is available).

We decompose $\wt R_{M'}$ as follows
\begin{align*}
    \wt R_{M'} &\myineqi \sum_{m=1}^{M'} \sum_{h=1}^{H^m} c_h^m - \sum_{m \in \Mzero(M')} V^{m}(s_0), \\
    &\myineqii  \sum_{m=1}^{M'} \sum_{h=1}^{H^m} c_h^m +  \sum_{m=1}^{M'} \left( \sum_{h=1}^{H^m} V^m(s_{h+1}^m) - V^m(s_{h}^m) \right) + 2 S A \log_2(T_{M'}) \max_{1 \leq m \leq M'} \norm{V^m}_{\infty} \\
    &\myineqiii \sum_{m=1}^{M'} \sum_{h=1}^{H^m} \left[ c_h^m + P_{\samh} V^m - V^m(\smh) \right] + \sum_{m=1}^{M'} \sum_{h=1}^{H^m} \left[ V^m(\smhp) - P_{\samh} V^m \right] \\ 
    &\quad + 2 B_{\star} S A \log_2(T_{M'}) \\
    &\myineqiv \underbrace{ \sum_{m=1}^{M'} \sum_{h=1}^{H^m} \left[ V^m(\smhp) - P_{\samh} V^m \right]}_{:= X_1(M')} + \underbrace{ \sum_{m=1}^{M'} \sum_{h=1}^{H^m} \beta^m(\samh)  }_{:= X_2(M')} + \underbrace{ \sum_{m=1}^{M'} \sum_{h=1}^{H^m} c_h^m - \cmh}_{:= X_3(M')} \\
    &\quad + 2 B_{\star} S A \log_2(T_{M'}),
\end{align*}
where (i) uses the optimism property of Lem.\,\ref{lemma_optimism}, (ii) stems from the construction of intervals (Lem.\,\ref{lemma_bound_sum_diff_Vm}), (iii) uses that $\max_{1 \leq m \leq M'} \norm{V^m}_{\infty} \leq B_{\star}$ (from Lem.\,\ref{lemma_optimism}), and (iv) comes from Lem.\,\ref{lemma_bounding_bellman_error}. We now focus on bounding the terms $X_1(M')$, $X_2(M')$ and $X_3(M')$. To this end, we introduce the following useful quantities
\begin{align*}
    X_4(M') := \sum_{m=1}^{M'} \sum_{h=1}^{H^m} \mathbb{V}(P_{\samh}, V^m), \quad \quad \quad \quad 
    X_5(M') := \sum_{m=1}^{M'} \sum_{h=1}^{H^m} \mathbb{V}(P_{\samh}, V^{\star} - V^m).
\end{align*}

\subsubsection{The $X_1(M')$ term}\label{subsection_X1_term}

$X_1(M')$ could be viewed as a martingale, so by taking $c=\max\{B_{\star},1\}$ in the technical Lem.\,\ref{martingale_bound}, we have with probability at least $1 - \delta$, 
\begin{align*}
    |X_1(M')| \leq ~ & 2\sqrt{ 2 X_4(M') (\log_2 ((\max\{B_{\star},1\})^2 T_{M'}) + \ln (2 / \delta) )} \\
    &+ 5 (\max\{B_{\star},1\}) (\log_2 ((\max\{B_{\star},1\})^2 T_{M'}) + \ln (2 / \delta)).
\end{align*}
To bound $X_1(M')$, we only need to bound $X_4(M')$.

\subsubsection{The $X_3(M')$ term}\label{subsection_X3_term}

Taking $c=1$ in the technical Lem.\,\ref{martingale_bound}, we have
\begin{align*}
    \mathbb{P}\left[|X_3(M')| \ge 2 \sqrt{2 \sum_{m=1}^{M'} \sum_{h=1}^{H^m} \mathrm{Var}(\samh) (\log_2(T_{M'}) + \ln (2 / \delta))} + 5 (\log_2(T_{M'}) + \ln (2 / \delta)) \right]\le \delta,
\end{align*}
where $\mathrm{Var}(s_t,a_t) := \mathbb{E}[(c_t - c(s_t,a_t))^2]$ ($c_t$ denotes the cost incurred at time step $t$). By Lem.\,\ref{var_le_exp},
\begin{align*}
    \sum_{m=1}^{M'} \sum_{h=1}^{H^m} \mathrm{Var}(\samh) &\le \sum_{m=1}^{M'} \sum_{h=1}^{H^m} c(\samh) \\
    &= \sum_{m=1}^{M'} \sum_{h=1}^{H^m} (c(\samh) - c_h^m) + C_{M'} \\
    &\le |X_3(M')| + C_{M'}.
\end{align*}
Therefore we have
\begin{align*}
    \mathbb{P}\Big[|X_3(M')| \ge & ~ 2 \sqrt{2 ( |X_3(M')| + C_{M'} ) (\log_2(T_{M'}) + \ln (2 / \delta))} + 5 (\log_2(T_{M'}) + \ln (2 / \delta)) \Big]\le \delta,
\end{align*}
which implies that $|X_3(M')|\le O\left(\log_2(T_{M'}) + \ln (2 / \delta) + \sqrt{C_{M'} (\log_2(T_{M'}) + \ln (2 / \delta))}\right)$ with probability at least $1 - \delta$.

\subsubsection{The $X_2(M')$ term}\label{subsection_X2_term}

The full proof of the bound on $X_2(M')$ is deferred to App.\,\ref{full_proof_bound_X2}. Here we provide a brief sketch. First, we bound~$\beta^m$ and apply a pigeonhole principle to obtain 
\begin{align*}
    X_2(M') \leq O \Bigg( & \sqrt{S A \log_2(T_{M'}) \iota_{M'} X_4(M')} + \sqrt{S^2 A \log_2(T_{M'}) \iota_{M'} X_5(M')} \\
    & + \sqrt{S A \log_2(T_{M'}) \iota_{M'} \sum_{m=1}^{M'} \sum_{h=1}^{H^m} \wh c^m (\samh)} \\
    & + B_{\star} S^2 A \log_2(T_{M'}) + B S^{3/2} A \log_2(T_{M'}) \iota_{M'} + \sum_{m=1}^{M'} \sum_{h=1}^{H^m} (1 + c_1 \sqrt{\iota_{M'}/2}) \epsVI^m \Bigg)
\end{align*}
with the logarithmic term $\iota_{M'} := \ln \left(\frac{ 12 S A S' T_{M'}^2 }{ \delta }\right)$ which is the upper-bound of $\iota_{s,a}$ when considering only time steps in the first $M'$ intervals.
The regret contributions of the estimated costs and the \VISGO precision errors are respectively
\begin{align*}
    &\sum_{m=1}^{M'} \sum_{h=1}^{H^m} \wh c^m (\samh) \le 2 S A (\log_2 (T_{M'}) + 1) + 2 C_{M'}, \\
    &\sum_{m=1}^{M'} \sum_{h=1}^{H^m} (1 + c_1 \sqrt{\iota_{M'}/2}) \epsVI^m  = O(S A \log_2(T_{M'}) \sqrt{\iota_{M'}}).
\end{align*}
To bound $X_4(M')$ and $X_5(M')$, we perform a recursion-based analysis on the value functions normalized by~$1/B_{\star}$. We split the analysis on the intervals, and not on the episodes as done in \citet{zhang2020reinforcement}. In Lem.\,\ref{lemma_bound_X4} and \ref{lemma_bound_X5} we establish that with overwhelming probability, 
\begin{align*}
    X_4(M') &\le O \left( B_{\star} ( C_{M'} + X_2(M') ) + (B_{\star}^2 S A + B_{\star}) (\log_2 (T_{M'}) + \ln(2 / \delta))  \right), \\
    X_5(M') &\le O \left( B_{\star}^2 S A (\log_2 (T_{M'}) + \ln(2 / \delta)) + B_{\star} X_2(M')\right).
\end{align*}
As a result, we obtain 
\begin{align*}
    X_2(M') &\le  O\Big( \sqrt{S A X_4(M')} \ov \iota_{M'} + \sqrt{S^2 A X_5(M') } \ov \iota_{M'} \\
    &\quad \quad ~ + SA \ov{\iota}_{M'}^{3/2} + \sqrt{S A C_{M'}} \ov \iota_{M'} + B_{\star} S^2 A  \ov{\iota}_{M'}^2 + B S^{3/2} A \ov{\iota}_{M'}^2 \Big), \\
    X_4(M') &\le O \left( B_{\star} ( C_{M'} + X_2(M') ) + (B_{\star}^2 S A + B_{\star}) \ov \iota_{M'} \right), \\
    X_5(M') &\le O \left( B_{\star}^2 S A \ov \iota_{M'} + B_{\star} X_2(M') \right).
\end{align*}
with the logarithmic term $\ov \iota_{M'} := \ln \left(\frac{ 12 S A S' T_{M'}^2 }{ \delta }\right) + \log_2 ((\max\{B_{\star},1\})^2 T_{M'}) + \ln \left(\frac{ 2 }{ \delta }\right)$. Isolating the $X_2(M')$ term finally yields
\begin{align*}
    X_2(M') &\le O( (\sqrt{B_{\star}} + 1 ) \sqrt{ S A C_{M'} } \ov \iota_{M'} + B S^2 A \ov{\iota}_{M'}^2 ).
\end{align*}

\subsubsection{Putting Everything Together} \label{app_putting_everything_together}

Ultimately, with probability at least $1 - 6 \delta$ we have
\begin{align*}
    \wt R_{M'} &\le X_1(M')+X_2(M')+X_3(M')+ 2 B_{\star} S A \log_2(T_{M'}) \\ 
    &\le O( (\sqrt{B_{\star}} + 1 ) \sqrt{ S A C_{M'} } \ov \iota_{M'} + B S^2 A \ov{\iota}_{M'}^2 ).
\end{align*}
Noting that $\wt R_{M'} = C_{M'} - K_{M'} V^{\star}(s_0)$, we have
\begin{align*}
    C_{M'} &\le K_{M'} V^{\star}(s_0) + O( (\sqrt{B_{\star}} + 1 ) \sqrt{ S A C_{M'} } \ov \iota_{M'} + B S^2 A \ov{\iota}_{M'}^2 ), \\
    C_{M'} &\myineqi \left(O \left( (\sqrt{B_{\star}} + 1 ) \sqrt{ S A } \ov \iota_{M'} \right) + \sqrt{K_{M'} V^{\star}(s_0) + O(B S^2 A \ov{\iota}_{M'}^2)}\right)^2 \\
    &\le K_{M'} V^{\star}(s_0) + O\left( (\sqrt{B_{\star}} + 1) \sqrt{ V^{\star}(s_0) S A K_{M'}} \ov{\iota}_{M'} + B S^2 A \ov{\iota}_{M'}^2 \right) \\
    &\le K_{M'} V^{\star}(s_0) + O\left( (B_{\star} + \sqrt{B_{\star}}) \sqrt{ S A K_{M'}} \ov{\iota}_{M'} + B S^2 A \ov{\iota}_{M'}^2 \right), 
\end{align*}
where (i) uses Lem.\,\ref{quadratic_ineq_bound}, $V^{\star}(s_0)\le B_{\star}$ and $\sqrt{B_{\star}} + 1 \le O( \sqrt{B_{\star} + 1} ) \le O(\sqrt{B})$. Hence
\begin{align*}
    \wt R_{M'} \le O\left(\sqrt{(B_{\star}^2 + B_{\star}) S A K_{M'}} \ov{\iota}_{M'} + B S^2 A \ov{\iota}_{M'}^2 \right).
\end{align*}
By scaling $\delta \gets \delta / 6$ we have the following important bound
\begin{align}
    \wt R_{M'} &\le O\Bigg( \sqrt{ (B_{\star}^2 + B_{\star}) S A K_{M'} } \log\left( \frac{\max\{B_{\star},1\} S A T_{M'} }{\delta} \right) \nonumber \\ &\quad \quad + B S^2 A \log^2\left( \frac{\max\{B_{\star},1\} S A T_{M'} }{\delta} \right) \Bigg). \label{regret_bound_interval}
\end{align}
The proof of Thm.\,\ref{key_lemma_Tdependent} is concluded by taking $M' = M$, where $M$ denotes the number of intervals in which the first~$K$ episodes elapse.

\section{Missing Proofs}\label{app_missing_proofs}

\subsection{Proofs of Lemmas \ref{lemma_properties_wtP}, \ref{lemma_properties_f}, \ref{lemma_optimism}, \ref{lemma_wtcalL_convergent}, \ref{lemma_wtcalL_contraction}, \ref{lemma_bounding_bellman_error}}

\newtheorem*{T1}{Restatement of Lemma \ref{lemma_properties_wtP}}
\begin{T1}
  \lemmapropertieswtP
\end{T1}

\begin{proof}
The proof uses the definition of $\wt P$ (Eq.\,\ref{eq_wtP}) and simple algebraic manipulation. For any $s'\neq g$, we have $\wt P_{s,a,s'} \leq \wh P_{s,a,s'}$ and $U(s') \geq 0$, as well as $U(g) = 0$, so $\wt P_{s,a} U \leq \wh P_{s,a} U$, and
\begin{align*}
     (\wh P_{s,a} - \wt P_{s,a}) U  = \Big( 1 - \frac{n(s,a)}{n(s,a)+1} \Big) \wh P_{s,a} U \leq \frac{\norm{U}_{\infty}}{n(s,a) + 1}.
\end{align*}
In addition, for any $s' \in \cS'$,
\begin{align*}
    \abs{\wt P_{s,a,s'} - \wh P_{s,a,s'} } \leq \Big\vert \frac{n(s,a)}{n(s,a)+1} - 1 \Big\vert \wh P_{s,a,s'} + \frac{\mathds{I}[s'=g]}{n(s,a)+1} \leq \frac{2}{n(s,a) + 1}.
\end{align*}
Therefore we have that
\begin{align*}
    \mathbb{V}(\wh P_{s,a}, U) &= \sum_{s' \in \cS'} \wh P_{s,a,s'} (U(s') - \wh P_{s,a} U)^2 \leq  \sum_{s' \in \cS'} \wh P_{s,a,s'} (U(s') - \wt P_{s,a} U)^2 \\
    &\leq  \sum_{s' \in \cS'} \left( \wt P_{s,a,s'} + \frac{2}{n(s,a) + 1} \right) ( U(s') - \wt P_{s,a} U) ^2
    \leq \mathbb{V}(\wt P_{s,a}, U) + \frac{2 \norm{U}_{\infty}^2 S'}{n(s,a) + 1},
\end{align*}
where the first inequality is by the fact that $z^{\star}=\sum_i p_i x_i$ minimizes the quantity $\sum_i p_i(x_i-z)^2$. Conversely, 
\begin{align*}
    \mathbb{V}(\wt P_{s,a}, U) &= \sum_{s' \in \cS'} \wt P_{s,a,s'} (U(s') - \wt P_{s,a} U)^2 
    \leq  \sum_{s' \in \cS'} \wt P_{s,a,s'} (U(s') - \wh P_{s,a} U)^2 \\
    &\leq  \sum_{s' \in \cS'} \left( \wh P_{s,a,s'} + \frac{2}{n(s,a) + 1} \right) ( U(s') - \wh P_{s,a} U) ^2 
    \leq \mathbb{V}(\wh P_{s,a}, U) + \frac{2 \norm{U}_{\infty}^2 S'}{n(s,a) + 1}.
\end{align*}
\end{proof}

\newtheorem*{T2}{Restatement of Lemma \ref{lemma_properties_f}}
\begin{T2}
  \lemmapropertiesf
\end{T2}    

\newcommand{\diffv}{{\scalebox{0.9}{$\diffp{f}{{v(s)}}$}}}

\begin{proof} The second claim holds by $\max \{x, y \} \geq (x+y)/2, \forall x, y$, by the choices of $c_1, c_2$ and because both $\sqrt{\frac{\mathbb{V}(p,v) \iota}{n}}$ and $\frac{B \iota}{n}$ are non-negative.
To verify the first and third claims, we fix all other variables but $v(s)$ and view $f$ as a function in $v(s)$. Because the derivative of $f$ in $v(s)$ does not exist only when $c_1 \sqrt{\frac{\mathbb{V}(p,v) \iota}{n}} = c_2 \frac{B \iota }{n}$, where the condition has at most two solutions, it suffices to prove that $\diffv \geq 0$ when $c_1 \sqrt{\frac{\mathbb{V}(p,v) \iota}{n}} \neq c_2 \frac{B \iota }{n}$. Direct computation gives
\begin{align*}
    \diffp{f}{{v(s)}} &= p(s) - c_1 \mathds{I}\left[  c_1 \sqrt{ \frac{\mathbb{V}(p,v) \iota}{n} } \geq c_2 \frac{B \iota}{n} \right] \frac{p(s)(v(s) - pv) \iota}{ \sqrt{n \mathbb{V}(p,v) \iota } } \\
    &\geq \min \big\{ p(s) , ~ p(s) - \frac{c_1^2}{c_2 B} p(s) \big( v(s) - p v \big) \big\} \\
    &\mygei  \min \big\{ p(s) , ~ p(s) - \frac{c_1^2}{c_2 } p(s) \big\} \\
    &\geq p(s) \Big( 1 - \frac{c_1^2}{c_2} \Big) = 0.
\end{align*}
Here (i) is by $v(s) - p v \le v(s) \le B$.
For the third claim, we perform a distinction of cases. If $c_1 \sqrt{\frac{\mathbb{V}(p,v) \iota}{n}} = c_2 \frac{B \iota }{n}$, where the condition has at most two solutions, then $f(v) = p v - c_2 \frac{B \iota }{n}$, which corresponds to a $\rho_p$-contraction since
\begin{align*}
    |f(v_1) - f(v_2)| = \left | \sum_{s \in \cS} p(s) (v_1(s) - v_2(s)) \right | \le \sum_{s \in \cS} p(s) \cdot \norm{v_1 - v_2}_{\infty} = (1 - p(g)) \norm{v_1 - v_2}_{\infty}.
\end{align*}
Otherwise $c_1 \sqrt{\frac{\mathbb{V}(p,v) \iota}{n}} \neq c_2 \frac{B \iota }{n}$, then the derivative of $f$ in $v(s)$ exists and it verifies 
\begin{align*}
    \left \| \diffp{f}{{v}} \right \|_1 &=  \sum_{s \in \cS} \left | \diffp{f}{{v(s)}} \right | = \sum_{s \in \cS} \diffp{f}{{v(s)}} \\
    &= \sum_{s \in \cS} \left[ p(s) - c_1 \mathds{I}\left[  c_1 \sqrt{ \frac{\mathbb{V}(p,v) \iota}{n} } \geq c_2 \frac{B \iota}{n} \right] \frac{p(s)(v(s) - pv) \iota}{ \sqrt{n \mathbb{V}(p,v) \iota } } \right] \\
    &= 1 - p(g) - c_1 \mathds{I}\left[  c_1 \sqrt{ \frac{\mathbb{V}(p,v) \iota}{n} } \geq c_2 \frac{B \iota}{n} \right] \sqrt{\frac{\iota}{n \mathbb{V}(p,v)}} [pv - (1 - p(g))\cdot  pv] \big\} \\
    & \le 1 - p(g).
\end{align*}
In this case, by the mean value theorem we obtain that $f$ is $\rho_p$-contractive.
\end{proof}

\newtheorem*{T3}{Restatement of Lemma \ref{lemma_optimism}}
\begin{T3}
  \lemmaoptimism
\end{T3}

\begin{proof}
We prove by induction that for any inner iteration $i$ of \VISGO, $Q^{(i)}(s,a) \leq Q^{\star}(s,a)$. By definition we have $Q^{(0)} = 0 \leq Q^{\star}$. Assume that the property holds for iteration $i$, then
\begin{align*}
    Q^{(i+1)}(s,a) &= \max\big\{ \wh{c}(s,a) + \wt P_{s,a} V^{(i)} - b^{(i+1)}(s,a), 0 \big\},
\end{align*}
where
\makeatletter
\newcommand*\mysizeo{%
   \@setfontsize\mysizeo{9.2}{9.2}%
}
\makeatother
\begin{mysizeo}
\begin{align*}
    &\wh{c}(s,a) + \wt P_{s,a} V^{(i)} - b^{(i+1)}(s,a) \\
    &= \wh{c}(s,a) + \wt P_{s,a} V^{(i)} - \max \Big\{ c_1 \sqrt{ \frac{\mathbb{V}(\wt P_{s,a}, V^{(i)}) \iota_{s,a} }{n^+(s,a)}}, \, c_2 \frac{B \iota_{s,a}}{n^+(s,a)}  \Big\} - c_3 \sqrt{\frac{\wh{c}(s,a) \iota_{s,a}}{n^+(s,a)}} - c_4 \frac{B \sqrt{S' \iota_{s,a}}}{n^+(s,a)} \\
    &\myineqi c(s,a) + \wt P_{s,a} V^{(i)} - \max \Big\{ c_1 \sqrt{ \frac{\mathbb{V}(\wt P_{s,a}, V^{(i)}) \iota_{s,a} }{n^+(s,a)}}, \, c_2 \frac{B \iota_{s,a}}{n^+(s,a)}  \Big\} + \frac{28 \iota_{s,a}}{3 n^+(s,a)} - c_4 \frac{B \sqrt{S' \iota_{s,a}}}{n^+(s,a)} \\
    &= c(s,a) + f( \wt P_{s,a}, V^{(i)}, n^+(s,a), B, \iota_{s,a}) + \frac{28 \iota_{s,a}}{3 n^+(s,a)} - c_4 \frac{B \sqrt{S' \iota_{s,a}}}{n^+(s,a)} \\
    &\myineqii c(s,a) + f( \wt P_{s,a}, V^{\star}, n^+(s,a), B, \iota_{s,a}) + \frac{28 \iota_{s,a}}{3 n^+(s,a)} - c_4 \frac{B \sqrt{S' \iota_{s,a}}}{n^+(s,a)} \\
    &\myineqiii c(s,a) + \wt P_{s,a} V^{\star} - 2 \sqrt{ \frac{\mathbb{V}(\wt P_{s,a}, V^{\star}) \iota_{s,a} }{n^+(s,a)}}  - \frac{14 B \iota_{s,a}}{3 n^+(s,a)} - c_4 \frac{B \sqrt{S' \iota_{s,a}}}{n^+(s,a)} \\
    &\myineqiv c(s,a) + \wh P_{s,a} V^{\star} - 2 \sqrt{ \frac{\mathbb{V}(\wt P_{s,a}, V^{\star}) \iota_{s,a} }{n^+(s,a)}} - \frac{14 B \iota_{s,a}}{3 n^+(s,a)} - c_4 \frac{B \sqrt{S' \iota_{s,a}}}{n^+(s,a)} \\
    &\myineqv c(s,a) + P_{s,a} V^{\star} + 2 \sqrt{\frac{ \mathbb{V}(\wh P_{s,a}, V^{\star}) \iota_{s,a}}{ n^+(s,a) }} - 2 \sqrt{ \frac{\mathbb{V}(\wt P_{s,a}, V^{\star}) \iota_{s,a} }{n^+(s,a)}} - (B - B_{\star}) \frac{14 \iota_{s,a}}{3 n^+(s,a)} - c_4 \frac{B \sqrt{S' \iota_{s,a}}}{n^+(s,a)} \\
    &\myineqvi c(s,a) + P_{s,a} V^{\star} + 2 \sqrt{\frac{ \vert \mathbb{V}(\wh P_{s,a}, V^{\star}) - \mathbb{V}(\wt P_{s,a}, V^{\star}) \vert \iota_{s,a}}{ n^+(s,a) }} - (B - B_{\star}) \frac{14 \iota_{s,a}}{3 n^+(s,a)} - c_4 \frac{B \sqrt{S' \iota_{s,a}}}{n^+(s,a)} \\
    &\myineqvii \underbrace{c(s,a) + P_{s,a} V^{\star}}_{= Q^{\star}(s,a)} - (B - B_{\star}) \left( \frac{14 \iota_{s,a}}{3 n^+(s,a)} + \frac{2 \sqrt{ 2 S' \iota_{s,a}}}{n^+(s,a)} \right)\\
    &\le Q^{\star} (s,a),
\end{align*}
\end{mysizeo}
where (i) is by definition of $\mathcal{E}_2$ and choice of $c_3$, (ii) uses the first property of Lem.\,\ref{lemma_properties_f} and the induction hypothesis that $V^{(i)} \leq V^{\star}$, (iii) uses the second property of Lem.\,\ref{lemma_properties_f} and assumption $B\ge \max\{ B_{\star}, 1 \}$, (iv) uses Lem.\,\ref{lemma_properties_wtP}, (v) is by definition of $\mathcal{E}_1$, (vi) uses the inequality $\big| \sqrt{x}-\sqrt{y} \big| \leq \sqrt{ \vert x  - y \vert}, \forall x,y \geq 0$, and (vii) uses the second inequality of Lem.\,\ref{lemma_properties_wtP} and the choice of $c_4$. Ultimately, 
\begin{align*}
    Q^{(i+1)}(s,a) \leq \max\big\{ Q^{\star}(s,a), 0 \big\} = Q^{\star}(s,a).
\end{align*}
\end{proof}

\newtheorem*{T4}{Restatement of Lemma \ref{lemma_wtcalL_convergent}}
\begin{T4}
  \lemmawtcalLconvergent
\end{T4} 

\begin{proof}
We recognize that $V^{(i+1)}(s) \, \leftarrow \, \min_a Q^{(i+1)}(s,a)$, with
\makeatletter
\newcommand*\mysizeq{%
   \@setfontsize\mysizeq{9.5}{9.5}%
}
\makeatother
\begin{mysizeq}
\begin{align*}
    Q^{(i+1)}(s,a) \leftarrow \max \Big\{ \wh{c}(s,a) + \underbrace{ f\big( \wt P_{s,a}, V^{(i)}, n^+(s,a), B, \iota_{s,a} \big)}_{:= g_{s,a}(V^{(i)}) } - c_3 \sqrt{\frac{\wh{c}(s,a) \iota_{s,a}}{n^+(s,a)}} - c_4 \frac{B \sqrt{S' \iota_{s,a}}}{n^+(s,a)},\ 0 \Big\},
\end{align*}
\end{mysizeq}
where we introduce the function $g_{s,a}(V) := f\big( \wt P_{s,a}, V, n^+(s,a), B, \iota_{s,a} \big)$ for notational ease as all other parameters (apart from $V$) will remain the same throughout the analysis.

We prove by induction on the iterations indexed by $i$ that $Q^{(i)} \leq Q^{(i+1)}$. First, $Q^{(0)} = 0 \leq Q^{(1)}$. Now assume that $Q^{(i-1)} \leq Q^{(i)}$. Then
\begin{align*}
    Q^{(i+1)}(s,a) &= \max \left\{ \wh c(s,a) + g_{s,a}(V^{(i)}) - c_3 \sqrt{\frac{\wh{c}(s,a) \iota_{s,a}}{n^+(s,a)}} - c_4 \frac{B \sqrt{S' \iota_{s,a}}}{n^+(s,a)},\ 0 \right\} \\
    &\geq  \max \left\{ \wh c(s,a) + g_{s,a}(V^{(i-1)}) - c_3 \sqrt{\frac{\wh{c}(s,a) \iota_{s,a}}{n^+(s,a)}} - c_4 \frac{B \sqrt{S' \iota_{s,a}}}{n^+(s,a)},\ 0 \right\} \\
    &= Q^{(i)}(s,a),
\end{align*}
where the inequality uses the induction hypothesis $V^{(i)} \geq V^{(i-1)}$ and the fact that $g_{s,a}$ is non-decreasing from the first claim of Lem.\,\ref{lemma_properties_f}.
\end{proof}

\newtheorem*{T5}{Restatement of Lemma \ref{lemma_wtcalL_contraction}}
\begin{T5}
  \lemmawtcalLcontraction
\end{T5} 

\begin{proof}
Take any two vectors $U_1, U_2$, then for any state $s \in \cS$,
\begin{align*}
    \abs{ \wtcalL U_1(s) - \wtcalL U_2(s)} &=  \Big\vert \min_a \wtcalL U_1(s,a) - \min_a \wtcalL U_2(s,a)\Big\vert \\
    &\leq \Big\vert \max_a \Big\{ \wtcalL U_1(s,a) - \wtcalL U_2(s,a) \Big\}  \Big\vert,
\end{align*}
and we have that for any action $a \in \cA$,
\begin{align*}
    \abs{ \wtcalL U_1(s,a) - \wtcalL U_2(s,a)} &\leq \big\vert  \max \big\{ \wh c(s,a) + g_{s,a}(U_1) , ~ 0 \}  -  \max \big\{ \wh c(s,a) + g_{s,a}(U_2) , ~ 0 \} \big\vert \\
    &\leq \big\vert  g_{s,a}(U_1) - g_{s,a}(U_2) \big\vert \\
    &\myineqi \rho_{s,a} \norm{ U_1 - U_2}_{\infty}.
\end{align*}
The third claim of Lem.\,\ref{lemma_properties_f} is employed to justify inequality (i): $g_{s,a}$ is $\rho_{s,a}$-contractive (where $g_{s,a}$ is defined in the proof of Lem.\,\ref{lemma_wtcalL_convergent}) with (recall Eq.\,\ref{eq_def_nu})
\begin{align*}
    \rho_{s,a} := 1 - \wt P_{s,a,g} = 1 - \nu_{s,a}.
\end{align*}
Taking the maximum over $(s,a)$ pairs, $\wtcalL$ is thus $\rho$-contractive with modulus 
$\rho := 1 - \nu <~1$.
\end{proof}

\newtheorem*{T6}{Restatement of Lemma \ref{lemma_bounding_bellman_error}}
\begin{T6}
  \lemmaboundingbellmanerror
\end{T6}

\begin{proof} First we see that $c(s,a) + P_{s,a} V^m - Q^{m}(s,a) \leq c(s,a) + P_{s,a} V^{\star} = Q^{\star}(s,a) \leq B_{\star} + 1$ and that $Q^{m}(s,a) - c(s,a) - P_{s,a} V^m \leq Q^{\star}(s,a) \leq B_{\star} + 1$, from Lem.\,\ref{lemma_optimism} and the Bellman optimality equation (Lem.\,\ref{lemma_wellposedproblem}). Now we prove that $\abs{c(s,a) + P_{s,a} V^m - Q^{m}(s,a)} \leq \beta^m(s,a)$.

\paragraph{Bounding $c(s,a) + P_{s,a} V^m - Q^{m}(s,a)$.}

From the \VISGO loop of Alg.\,\ref{algo}, the vectors $Q^m$ and $V^m$ can be associated to a finite iteration $l$ of a sequence of vectors $(Q^{(i)})_{i \geq 0}$ and $(V^{(i)})_{i \geq 0}$ such that 
\begin{itemize}[leftmargin=.2in,topsep=-1.5pt,itemsep=1pt,partopsep=0pt, parsep=0pt]
    \item[(i)] $Q^m(s,a) := Q^{(l)}(s,a)$,
    \item[(ii)] $V^m(s) := V^{(l)}(s)$,
    \item[(iii)] $\norm{ V^{(l)} - V^{(l-1)} }_{\infty} \leq \epsVI^m$,
    \item[(iv)] {\footnotesize $b^m(s,a) := b^{(l+1)}(s,a) = \max \left\{ c_1 \sqrt{ \frac{\mathbb{V}(\wt P_{s,a}, V^{(l)}) \iota_{s,a} }{n^m(s,a)}} ,\ c_2 \frac{B \iota_{s,a}}{n^m(s,a)}  \right\} + c_3 \sqrt{\frac{\wh{c}^m(s,a) \iota_{s,a}}{n^m(s,a)}} + c_4 \frac{B \sqrt{S' \iota_{s,a}}}{n^m(s,a)}$. }
\end{itemize}
First, we examine the gap between the exploration bonuses at the final \VISGO iterations $l$ and $l+1$ as follows
\begin{align*}
    b^{(l)}(s,a) &\myineqi  c_1 \sqrt{ \frac{\mathbb{V}(\wt P_{s,a}, V^{(l-1)}) \iota_{s,a} }{n^+(s,a)}} + c_2 \frac{B \iota_{s,a}}{n^+(s,a)}  + c_3 \sqrt{\frac{\wh{c}(s,a) \iota_{s,a}}{n^+(s,a)}} + c_4 \frac{B \sqrt{S' \iota_{s,a}}}{n^+(s,a)} \\
    &\myineqii c_1 \sqrt{ 2 \frac{\mathbb{V}(\wt P_{s,a}, V^{(l)}) \iota_{s,a} }{n^+(s,a)}} + c_1 \sqrt{ 2 \frac{\mathbb{V}(\wt P_{s,a}, V^{(l-1)} - V^{(l)}) \iota_{s,a} }{n^+(s,a)}} + c_2 \frac{B \iota_{s,a}}{n^+(s,a)} \\
    &\quad + c_3 \sqrt{\frac{\wh{c}(s,a) \iota_{s,a}}{n^+(s,a)}} + c_4 \frac{B \sqrt{S' \iota_{s,a}}}{n^+(s,a)} \\
    &\myineqiii 2 \sqrt{2} b^{(l+1)}(s,a) + c_1 \sqrt{ \frac{(\epsVI^m)^2 \iota_{s,a} }{2 n^+(s,a)}} \\
    &\leq 2 \sqrt{2} b^{(l+1)}(s,a) + \epsVI^m c_1 \sqrt{\iota_{s,a} / 2},
\end{align*}
where (i) uses $\max\{x,\ y\} \le x+y$; (ii) uses $\mathbb{V}(P,X+Y)\le 2(\mathbb{V}(P,X)+\mathbb{V}(P,Y))$ and $\sqrt{x+y}\le\sqrt{x}+\sqrt{y}$; (iii) uses $x + y \le 2 \max\{x,\ y\}$ and Popoviciu's inequality (Lem.\,\ref{popoviciu}) applied to $V^{(l-1)} - V^{(l)} \in [- \epsVI^m, 0]$. Moreover, we have that $Q^{(l)}(s,a) \geq  \wh c(s,a) + \wt P_{s,a} V^{(l-1)} - b^{(l)}(s,a)$ from Eq.\,\ref{update_Q}. Combining everything yields
\begin{align*}
    - Q^m(s,a) &\leq -\wh c(s,a) - \wt P_{s,a}(V^m - \epsVI) + \epsVI c_1 \sqrt{\iota_{s,a} / 2} + 2 \sqrt{2} b^m(s,a) \\
    &\leq -\wh c(s,a) - \wt P_{s,a}V^m + 2 \sqrt{2} b^m(s,a) + \left(1 + c_1 \sqrt{\iota_{s,a} / 2}\right) \epsVI^m.
\end{align*}
Therefore, we have
\begin{align*}
    &c(s,a) + P_{s,a} V^m - Q^{m}(s,a) \\ &\leq c(s,a) + P_{s,a} V^m - \wh{c}^m(s,a) - \wt{P}_{s,a} V^m + 2 \sqrt{2} b^m(s,a) +  \left(1 + c_1 \sqrt{\iota_{s,a} / 2}\right) \epsVI^m \\
    &\myineqi P_{s,a} V^m - \wh{P}_{s,a} V^m + \frac{B_{\star}}{n^m(s,a)+1} + 4 b^m(s,a) +  \left(1 + c_1 \sqrt{\iota_{s,a} / 2}\right) \epsVI^m  \\
    &\leq \underbrace{ (P_{s,a} - \wh{P}_{s,a}) V^{\star} }_{:= Y_1} + \underbrace{(P_{s,a} - \wh{P}_{s,a})(V^m - V^{\star})}_{:= Y_2} + \frac{B_{\star}}{n^m(s,a)} + 4 b^m(s,a) +  \left(1 + c_1 \sqrt{\iota_{s,a} / 2}\right) \epsVI^m,
\end{align*}
where (i) comes from Lem.\,\ref{lemma_properties_wtP}, the event $\mathcal{E}_2$, Lem.\,\ref{lemma_optimism} and (loosely) bounding $\vert c(s,a) - \wh c(s,a) \vert \leq b^m(s,a)$.
It holds under the event $\mathcal{E}_1$ that
\begin{align*}
    |Y_1| \leq \sqrt{\frac{2\mathbb{V}(P_{s,a},V^{\star}) \iota_{s,a} }{n^m(s,a)}}+\frac{B_{\star} \iota_{s,a} }{n^m(s,a)}.
\end{align*}
Moreover, we have  
\begin{align*}
    \abs{Y_2} &\myeqi \left| \sum_{s'} (\wh P_{s,a,s'}-P_{s,a,s'})(V^m(s')-V^{\star}(s')-P_{s,a}(V^m-V^{\star})) \right| \\
    &\leq \sum_{s'} \abs{P_{s,a,s'}-\wh P_{s,a,s'}} \abs{V^m(s')-V^{\star}(s')-P_{s,a}(V^m-V^{\star})}\\
    &\myineqii \sum_{s'}\sqrt{\frac{2P_{s,a,s'} \iota_{s,a}}{n^m(s,a)}}|V^m(s')-V^{\star}(s')-P_{s,a}(V^m-V^{\star})| + \frac{B_{\star} S'\iota_{s,a}}{n^m(s,a)}\\
    &\myineqiii \sqrt{\frac{2S'\mathbb{V}(P_{s,a},V^m-V^{\star})\iota_{s,a}}{n^m(s,a)}} + \frac{B_{\star} S'\iota_{s,a}}{n^m(s,a)},
\end{align*}
where the shift performed in (i) is by $\sum_{s'}P_{s,a,s'}=\sum_{s'}\wh P_{s,a,s'}=1$; (ii) holds under the event $\mathcal{E}_3$ and Lem.~\ref{lemma_optimism} ($V^m(s) \in [0, B_\star]$); (iii) is by Cauchy-Schwarz inequality.

\paragraph{Bounding $Q^{m}(s,a) - c(s,a) - P_{s,a} V^m$.}

If $Q^m(s,a) = Q^{(l)}(s,a) = 0$, then $Q^{m}(s,a) - \wh c(s,a) - P_{s,a} V^m \leq 0 \leq  \min\big\{ \beta^m(s,a), B_{\star} \big\}$. Otherwise, we have $Q^m(s,a) = Q^{(l)}(s,a) = \wh c(s,a) + \wt P_{s,a} V^{(l-1)} - b^{(l)}(s,a)$. Using that $V^m\ge V^{(l-1)}$ (Lem.\,\ref{lemma_wtcalL_convergent}) and $\wh P_{s,a}V^m\ge \wt P_{s,a}V^m$ (Lem.\,\ref{lemma_properties_wtP}), we get
\begin{align*}
    Q^{m}(s,a) - c(s,a) - P_{s,a} V^m &\le Q^{m}(s,a) - \wh c(s,a) - P_{s,a} V^m + b^m(s,a)\\
    &= \wt P_{s,a} V^{(l-1)} - b^{(l)}(s,a) - P_{s,a} V^m + b^m(s,a) \\
    &\leq \wh P_{s,a} V^{m} - P_{s,a} V^m + b^m(s,a) \\
    &=  (\wh P_{s,a} - P_{s,a}) V^{\star} -  (\wh P_{s,a} - P_{s,a})(V^{\star} - V^m) + b^m(s,a) \\
    &\le \abs{Y_1} + \abs{Y_2} + b^m(s,a),
\end{align*}
which can be bounded as above.
\end{proof}


\subsection{Additional lemmas}

\begin{lemma}\label{tilde_Q}
    Let $\wt Q^m(s,a) := Q^{\star}(s,a) - Q^m(s,a)$ and $\wt V^m(s) := V^{\star}(s) - V^m(s)$. Then conditioned on the event $\mathcal{E}$, we have that for all $(s,a,m,h)$,
    \begin{align*}
        \wt V(s_h^m) - P_{s_h^m,a_h^m} \wt V(s_{h+1}^m) \leq \beta^m(s_h^m,a_h^m).
    \end{align*}
\end{lemma}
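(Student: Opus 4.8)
The plan is to pass from the value gap $\wt V^m$ at the visited state to the $Q$-gap $\wt Q^m$ at the greedily chosen action, and then recognize the resulting right-hand side as the Bellman error already controlled in Lem.\,\ref{lemma_bounding_bellman_error}. Throughout I work on the event $\mathcal{E}$ and fix an interval $m$, a step $h$, and write $(s,a) := (s_h^m, a_h^m)$ for the visited state-action pair, recalling that $\wt V^m := V^\star - V^m$ and $\wt Q^m := Q^\star - Q^m$.

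First I would use that within interval $m$ the agent acts greedily with respect to the optimistic values $Q^m$ computed by the most recent \VISGO procedure, and that these values stay frozen during the interval (a doubling trigger is precisely what ends the interval). Hence $V^m(s_h^m) = \min_{a'} Q^m(s_h^m, a') = Q^m(s,a)$. Combining this with $V^\star(s_h^m) = \min_{a'} Q^\star(s_h^m, a') \le Q^\star(s,a)$ gives
\begin{align*}
    \wt V^m(s_h^m) = V^\star(s_h^m) - V^m(s_h^m) \le Q^\star(s,a) - Q^m(s,a) = \wt Q^m(s,a).
\end{align*}

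It then remains to bound $\wt Q^m(s,a)$. Using the Bellman optimality equation $Q^\star(s,a) = c(s,a) + P_{s,a} V^\star$ (Eq.\,\ref{bellman_opt_eq}) and adding and subtracting $P_{s,a} V^m$, I would write
\begin{align*}
    \wt Q^m(s,a) = \big[\, c(s,a) + P_{s,a} V^m - Q^m(s,a) \,\big] + P_{s,a}\big(V^\star - V^m\big).
\end{align*}
The bracketed term is exactly the Bellman error bounded by $\beta^m(s,a)$ in Lem.\,\ref{lemma_bounding_bellman_error} (which holds on $\mathcal{E}$), while $P_{s,a}(V^\star - V^m) = P_{s,a}\wt V^m$ is the expected next-step gap. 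Chaining the two displays and rearranging yields $\wt V^m(s_h^m) - P_{s,a}\wt V^m \le \beta^m(s,a)$, which is the claim. There is no serious obstacle here: the only points requiring care are justifying $V^m(s_h^m) = Q^m(s,a)$ from the interval-frozen greedy policy and keeping the inequality directions consistent (optimism, via Lem.\,\ref{lemma_optimism}, ensures $V^m \le V^\star$ and $Q^m \le Q^\star$ so that $\wt V^m, \wt Q^m \ge 0$). The add-and-subtract step is what converts the $V^\star$-Bellman residual into the $V^m$-residual handled by Lem.\,\ref{lemma_bounding_bellman_error}, at the cost of the benign expectation term $P_{s,a}\wt V^m$ that appears on the left-hand side of the claim.
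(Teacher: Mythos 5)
Your proof is correct and is essentially the paper's own argument: the paper chains the identities $V^\star(s_h^m) \le Q^\star(s_h^m,a_h^m)$, $V^m(s_h^m) = Q^m(s_h^m,a_h^m)$ (greedy action, values frozen within an interval), the Bellman optimality equation, and Lem.\,\ref{lemma_bounding_bellman_error} in a single display, whereas you merely reorganize the same algebra by passing explicitly through $\wt Q^m(s,a)$ and then adding and subtracting $P_{s,a}V^m$. No substantive difference in route or ingredients.
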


\begin{proof}
We write that
\begin{align*}
    \wt V^m(s_h^m) - P_{s_h^m,a_h^m} \wt V^m(s_{h+1}^m) &= V^{\star}(s_h^m) - P_{\samh} V^{\star} + P_{\samh} V^m - V^m(s_h^m) \\ 
    &\leq Q^{\star}(s_h^m,a_h^m) - P_{\samh} V^{\star} + P_{\samh} V^m - V^m(s_h^m) \\ 
    &\myeqi c(\samh) + P_{\samh} V^m - Q^m(s_h^m,a_h^m) \\ 
    &\myineqii \beta^m(s_h^m,a_h^m),
\end{align*}
where (i) uses the Bellman optimality equation (Lem.\,\ref{lemma_wellposedproblem}) and the fact that $V^m(s_h^m) = Q^m(s_h^m,a_h^m)$, and (ii) comes from Lem.\,\ref{lemma_bounding_bellman_error}.
\end{proof}

\begin{lemma}\label{lemma_bound_sum_diff_Vm}
For any $M' \le M$, it holds that 
\begin{align*}
    \sum_{m=1}^{M'} \left( \sum_{h=1}^{H^m} V^m(s_{h}^m) - V^m(s_{h+1}^m) \right) - \sum_{m \in \Mzero(M')} V^m(s_0) \leq 2 S A \log_2 (T_{M'}) \max_{1 \leq m \leq M'} \norm{V^m}_{\infty}.
\end{align*}
\end{lemma}

\begin{proof}
We recall that we denote by $\Mzero(M')$ the set of intervals among the first $M'$ intervals that constitute the first intervals in each episode. From the analytical construction of intervals, an interval $m < M'$ can end due to one of the following three conditions:
\begin{itemize}
    \item[(i)] If interval $m$ ends in the goal state, then 
    \begin{align*}
        V^{m+1}(s^{m+1}_{1}) - V^{m}(s^m_{H^m+1}) = V^{m+1}(s_0) - V^{m}(g) = V^{m+1}(s_0).
    \end{align*}
    This happens for all the intervals $m+1 \in \Mzero(M')$.
    \item[(ii)] If interval $m$ ends when the count to a state-action pair is doubled, then we replan with a \VISGO procedure. Thus we get
    \begin{align*}
        V^{m+1}(s^{m+1}_{1}) - V^{m}(s^m_{H^m+1}) \leq V^{m+1}(s^{m+1}_{1}) \leq \max_{1 \leq m \leq M'} \norm{V^m}_{\infty}.
    \end{align*}
    This happens at most $2 S A \log_2 (T_{M'})$ times.
\end{itemize}
Combining the three conditions above implies that
\begin{mysize}
\begin{align*}
    &\sum_{m=1}^{M'} \left( \sum_{h=1}^{H^m} V^m(s_{h}^m) - V^m(s_{h+1}^m) \right) \\
    &= \sum_{m=1}^{M'} V^m(s_1^m) - V^m(s_{H^m+1}^m) \\
    &= \sum_{m=1}^{M'-1} \left( V^{m+1}(s_1^{m+1}) - V^m(s_{H^m+1}^m) \right)  + \underbrace{ \sum_{m=1}^{M'-1} \left( V^m(s_1^m) - V^{m+1}(s_1^{m+1}) \right)}_{= V^1(s_1^1) - V^{M'}(s_1^{M'})} + V^{M'}(s_1^{M'}) \underbrace{ - V^{M'}(s_{H^{M'}+1}^{M'})}_{\le 0} \\
    &\le \sum_{m=1}^{M'-1} \left( V^{m+1}(s_1^{m+1}) - V^m(s_{H^m+1}^m) \right) + V^1(s_0) \\
    &\leq \sum_{m=1}^{M'-1} V^{m+1}(s_0) \II[m+1 \in \Mzero(M')] + 2 S A \log_2(T_{M'}) \max_{1 \leq m \leq M'} \norm{V^m}_{\infty} + V^1(s_0)\\
    &= \sum_{m \in \Mzero(M')} V^{m}(s_0) + 2 S A \log_2(T_{M'}) \max_{1 \leq m \leq M'} \norm{V^m}_{\infty}.
\end{align*}
\end{mysize}
\end{proof}

\subsection{Full proof of the bound on $X_2(M')$} \label{full_proof_bound_X2}

\paragraph{\ding{172} First, bound $\beta^m$.}~{}
\\

\noindent Recall that we assume that the event $\calE$ holds. From Lem.\,\ref{lemma_bounding_bellman_error}, we have for any $m, s,a$,
\begin{align*}
    \beta^m(s,a) = O \Bigg( & \sqrt{ \frac{\mathbb{V}(\wt P_{s,a}, V^m) \iota_{s,a}}{n^m(s,a)}} + \sqrt{ \frac{\mathbb{V}(P_{s,a}, V^{\star}) \iota_{s,a}}{n^m(s,a)}} + \sqrt{  \frac{S \mathbb{V}(P_{s,a}, V^{\star} - V^m) \iota_{s,a}  }{n^m(s,a)}} \\
    & + \sqrt{\frac{\wh c^m (s,a) \iota_{s,a}}{n^m(s, a)}} + \frac{B_{\star} S \iota_{s,a}}{ n^m(s,a)} + \frac{B \sqrt{S} \iota_{s,a}}{ n^m(s,a)} +  \left(1 + c_1 \sqrt{\iota_{s,a} / 2}\right) \epsVI^m \Bigg).
\end{align*}
Here we interchange $S'$ and $S$ since we use the $O()$ notation.
From Lem.\,\ref{lemma_properties_wtP} and Lem.\,\ref{lemma_optimism}, for any $m,s,a$,
\begin{align*}
    \mathbb{V}(\wt P_{s,a}, V^m)\le \mathbb{V}(\wh P_{s,a}, V^m)+\frac{2B_{\star}^2 S'}{n^m(s,a)+1}<\mathbb{V}(\wh P_{s,a}, V^m)+\frac{2B_{\star}^2 S'}{n^m(s,a)}.
\end{align*}
Under the event $\calE_3$, it holds that
\begin{align*}
    \wh P_{s,a,s'} \leq P_{s,a,s'} + \sqrt{\frac{2P_{s,a,s'}\iota_{s,a}}{n^m(s,a)}}+\frac{\iota_{s,a}}{n^m(s,a)} \leq \frac{3}{2}P_{s,a,s'}+\frac{2\iota_{s,a}}{n^m(s,a)}.
\end{align*}
Thus, it holds that for any $m,s,a$,
\begin{align*}
    \mathbb{V}(\wh P_{s,a}, V^m)&=\sum_{s'}\wh P_{s,a,s'}\left(V^m(s')-\wh P_{s,a}V^m\right)^2\\
    &\myineqi \sum_{s'}\wh P_{s,a,s'}\left(V^m(s')-P_{s,a}V^m\right)^2\\
    &\le \sum_{s'}\left(\frac{3}{2}P_{s,a,s'}+\frac{2\iota_{s,a}}{n^m(s,a)}\right)\left(V^m(s')-P_{s,a}V^m\right)^2\\
    &\le \frac{3}{2}\mathbb{V}(P_{s,a},V^m)+\frac{2B_{\star}^2 S'\iota_{s,a}}{n^m(s,a)}.
\end{align*}
(i) is by the fact that $z^{\star}=\sum_i p_i x_i$ minimizes the quantity $\sum_i p_i(x_i-z)^2$. As a result,
\begin{align*}
    \mathbb{V}(\wt P_{s,a}, V^m) < \frac{3}{2}\mathbb{V}(P_{s,a},V^m) + \frac{2B_{\star}^2 S'}{n^m(s,a)} + \frac{2B_{\star}^2 S' \iota_{s,a}}{n^m(s,a)}.
\end{align*}
Utilizing $\mathbb{V}(P,X+Y)\le 2(\mathbb{V}(P,X)+\mathbb{V}(P,Y))$ with $X = V^{\star} - V^m$ and $Y = V^m$ and $\sqrt{x+y}\le\sqrt{x}+\sqrt{y}$, finally we have
\begin{align*}
    \beta^m(s,a) \le O \Bigg( & \sqrt{ \frac{\mathbb{V}(P_{s,a}, V^m) \iota_{s,a}}{n^m(s,a)}} + \sqrt{  \frac{S \mathbb{V}(P_{s,a}, V^{\star} - V^m) \iota_{s,a}  }{n^m(s,a)}} \\
    & + \sqrt{\frac{\wh c (s,a) \iota_{s,a}}{n^m(s, a)}} + \frac{B_{\star} S \iota_{s,a}}{ n^m(s,a)} + \frac{B \sqrt{S} \iota_{s,a}}{ n^m(s,a)} + \left(1 + c_1 \sqrt{\iota_{s,a} / 2}\right) \epsVI^m \Bigg).
\end{align*}

\paragraph{\ding{173} Second, bound a special type of summation.}~{}
\\

\begin{lemma}\label{lemma_sum_sqrt}
Let $w=\{w_h^m\ge 0\ :\ 1\le m\le M,\ 1\le h\le H^m\}$ be a group of weights, then for any $M' \le M$, 
\begin{align*}
    \sum_{m=1}^{M'} \sum_{h=1}^{H^m} \sqrt{\frac{w_h^m}{n^m(\samh)}}\le O\left(\sqrt{SA\log_2(T_{M'}) \sum_{m=1}^{M'} \sum_{h=1}^{H^m} w_h^m}\right).
\end{align*}
\end{lemma}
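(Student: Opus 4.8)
The plan is to prove Lemma~\ref{lemma_sum_sqrt} by combining the Cauchy--Schwarz inequality with a standard pigeonhole (``counting visits'') argument on the empirical visit counts $n^m(s,a)$. First I would apply Cauchy--Schwarz to split the sum into a product of two factors, pulling the weights $w_h^m$ into one factor and the reciprocal counts into the other:
\begin{align*}
    \sum_{m=1}^{M'} \sum_{h=1}^{H^m} \sqrt{\frac{w_h^m}{n^m(\samh)}} \leq \sqrt{ \left( \sum_{m=1}^{M'} \sum_{h=1}^{H^m} w_h^m \right) \left( \sum_{m=1}^{M'} \sum_{h=1}^{H^m} \frac{1}{n^m(\samh)} \right) }.
\end{align*}
This reduces the claim to showing that the second factor is bounded by $O(SA\log_2(T_{M'}))$.

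The heart of the argument is therefore to bound $\sum_{m}\sum_{h} 1/n^m(s_h^m,a_h^m)$. The key structural fact is the doubling update scheme: $n^m(s,a)$ is the visit count of $(s,a)$ frozen at the beginning of interval $m$, and it is only refreshed (doubled) when the trigger condition $N(s,a)\in\mathcal{N}$ fires. Because $n^m(s,a)$ stays constant within the ``doubling epoch'' and is guaranteed to be at least half the true running count (since it was last set when the count doubled), I would invoke the standard pigeonhole bound: for each fixed pair $(s,a)$, summing $1/n^m$ over all the time steps spent at $(s,a)$ telescopes/geometrically sums to $O(\log_2 T_{M'})$ per pair. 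Concretely, grouping the steps at $(s,a)$ into the $O(\log_2 T_{M'})$ doubling epochs and noting that the contribution of each epoch is bounded by a constant (number of visits in the epoch divided by the frozen count, which is at most a constant by the doubling property) yields $O(\log_2 T_{M'})$. Summing over all $SA$ state-action pairs gives $O(SA\log_2 T_{M'})$, and substituting back into the Cauchy--Schwarz bound completes the proof.

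The main obstacle I anticipate is handling the precise relationship between the \emph{frozen} count $n^m(s,a)$ used at interval $m$ and the \emph{true} number of visits accumulated to $(s,a)$ by the time interval $m$ is played. The doubling framework guarantees that between two consecutive triggers the true count at most doubles, so $n^m(s,a)$ and the contemporaneous running count differ by at most a factor of two; this factor-of-two slack is exactly what converts a naive harmonic-sum bound into the clean $O(\log_2 T_{M'})$ per pair. I would make this rigorous by indexing the epochs of $(s,a)$ by $j = 0,1,2,\ldots$, noting that in epoch $j$ the frozen count is $\Theta(2^j)$ and the number of steps spent before the next trigger is also $\Theta(2^j)$, so each epoch contributes $O(1)$ to the inner sum and there are at most $O(\log_2 T_{M'})$ epochs. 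Everything else (the Cauchy--Schwarz step, the final summation over $(s,a)$, and absorbing constants into the $O(\cdot)$ notation) is routine.
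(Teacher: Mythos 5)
Your proposal is correct and follows essentially the same route as the paper: Cauchy--Schwarz to reduce the claim to bounding $\sum_{m,h} 1/n^m(s_h^m,a_h^m)$, then a pigeonhole argument noting that the frozen count $n^m(s,a)$ takes values $2^i$ with at most $O(2^i)$ visits per doubling epoch, giving $O(1)$ per epoch, $O(\log_2 T_{M'})$ epochs per pair, and $O(SA\log_2 T_{M'})$ overall. The paper merely presents the counting bound (its Eq.~\ref{sum_one_over_n}) before the Cauchy--Schwarz step rather than after, which is an immaterial reordering.
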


\begin{proof}
For $m\le M'$, $n^m(s,a)\in \{2^i\ :\ i\in\mathbb{N}, i\le \log_2(T_{M'})\}$. We can count the occurrences of a fixed value of $n^m(s,a)$ by the doubling property of \VISGO: $\forall i,s,a$
\begin{align*}
    \sum_{m=1}^{M'} \sum_{h=1}^{H^m} \II[(\samh)=(s,a),n^m(s,a)=2^i]\le 2^i.
\end{align*}
Thus
\begin{align}
    \sum_{m=1}^{M'} \sum_{h=1}^{H^m} \frac{1}{n^m(\samh)}&=\sum_{s,a}\sum_{0\le i\le \log_2(T_{M'})} \sum_{m=1}^{M'} \sum_{h=1}^{H^m} \II[(\samh)=(s,a),n^m(s,a)=2^i] \frac{1}{2^i} \notag \\
    &=\sum_{s,a} \sum_{0\le i\le \log_2(T_{M'})} 1 \notag \\
    &\le S A (\log_2(T_{M'}) + 1) \label{sum_one_over_n} \\
    &\le O(S A \log_2(T_{M'})). \notag
\end{align}
By Cauchy-Schwarz inequality,
\begin{align*}
    \sum_{m=1}^{M'} \sum_{h=1}^{H^m} \sqrt{\frac{w_h^m}{n^m(\samh)}} &\le \sqrt{\left( \sum_{m=1}^{M'} \sum_{h=1}^{H^m} w_h^m\right)\left( \sum_{m=1}^{M'} \sum_{h=1}^{H^m} \frac{1}{n^m(\samh)}\right)} \\
    &\le O\left(\sqrt{S A \log_2(T_{M'}) \sum_{m=1}^{M'} \sum_{h=1}^{H^m} w_h^m}\right).
\end{align*}
\end{proof}

By setting successively $w_h^m=\mathbb{V}(P_{\samh}, V^m),\ \mathbb{V}(P_{\samh}, V^{\star} - V^m)$ and $\wh c(\samh)$, and relaxing $\iota_{\samh}$ to its upper-bound $\iota_{M'} = \ln \left( \frac{12 S A S' T_{M'}^2}{\delta} \right)$ we have
\begin{align*}
    X_2(M') \leq O \Bigg( & \sqrt{S A \log_2(T_{M'}) \iota_{M'} \underbrace{\sum_{m=1}^{M'} \sum_{h=1}^{H^m} \mathbb{V}(P_{\samh}, V^m)}_{:= X_4(M')}} \\
    & + \sqrt{S^2 A \log_2(T_{M'}) \iota_{M'} \underbrace{\sum_{m=1}^{M'} \sum_{h=1}^{H^m} \mathbb{V}(P_{\samh}, V^{\star} - V^m)}_{:=X_5(M')}} \\
    & + \sqrt{S A \log_2(T_{M'}) \iota_{M'} \sum_{m=1}^{M'} \sum_{h=1}^{H^m} \wh c (\samh)} + B_{\star} S^2 A \log_2(T_{M'}) \\
    & + B S^{3/2} A \log_2(T_{M'}) \iota_{M'} + \sum_{m=1}^{M'} \sum_{h=1}^{H^m} (1 + c_1 \sqrt{\iota_{M'}/2}) \epsVI^m \Bigg).
\end{align*}

\paragraph{\ding{174} Third, bound each summation separately.}~{}
\\

\noindent \textbf{\emph{Regret contribution of the estimated costs.}} From line \ref{line_emp_costs} in \ALGO, we have that $\wh c(s,a) \le \frac{2 \theta(s, a)}{N(s, a)}$. Let $\theta^m (s,a)$ denote the value of $\theta (s,a)$ for calculating $\wh c^m$. By definition,
\begin{align*}
    \theta^m (\samh) &= \sum_{m' = 1}^{M'} \sum_{h' = 1}^{H^{m'}} \II[(\samh) = (s_{h'}^{m'}, a_{h'}^{m'}),\ n^m(\samh) = 2 n^{m'}(s_{h'}^{m'}, a_{h'}^{m'})] c_{h'}^{m'} \\
    &\quad - {\scriptsize \II[\textup{first occurrence of ($m',h'$) such that $(\samh) = (s_{h'}^{m'}, a_{h'}^{m'}),\ n^m(\samh) = 2 n^{m'}(s_{h'}^{m'}, a_{h'}^{m'})$}]} c_{h'}^{m'} \\
    &\quad + {\scriptsize \II[\textup{first occurrence of ($m',h'$) such that $(\samh) = (s_{h'}^{m'}, a_{h'}^{m'}),\ n^m(\samh) = n^{m'}(s_{h'}^{m'}, a_{h'}^{m'})$}]} c_{h'}^{m'} \\
    & \le \sum_{m' = 1}^{M'} \sum_{h' = 1}^{H^{m'}} \II[(\samh) = (s_{h'}^{m'}, a_{h'}^{m'}),\ n^m(\samh) = 2 n^{m'}(s_{h'}^{m'}, a_{h'}^{m'})] c_{h'}^{m'} + 1.
\end{align*}
For any $M' \le M$ we have
\begin{align*}
    &\sum_{m=1}^{M'} \sum_{h=1}^{H^m} \wh c^m (\samh) \\
    &\le \sum_{m=1}^{M'} \sum_{h=1}^{H^m} \frac{2 \theta^m (\samh)}{n^m (\samh)} \\
    &=  \sum_{m=1}^{M'} \sum_{h=1}^{H^m} \sum_{m' = 1}^{M'} \sum_{h' = 1}^{H^{m'}} \II[(\samh) = (s_{h'}^{m'}, a_{h'}^{m'}),\ n^m(\samh) = 2 n^{m'}(s_{h'}^{m'}, a_{h'}^{m'})] \frac{2 c_{h'}^{m'}}{n^m (\samh)} \\
    &\quad + \sum_{m=1}^{M'} \sum_{h=1}^{H^m} \frac{2}{n^m (\samh)} \\
    &\myineqi \sum_{m' = 1}^{M'} \sum_{h' = 1}^{H^{m'}} \frac{c_{h'}^{m'}}{n^{m'}(s_{h'}^{m'}, a_{h'}^{m'})} \cdot  \sum_{m=1}^{M'} \sum_{h=1}^{H^m} \II[(\samh) = (s_{h'}^{m'}, a_{h'}^{m'}),\ n^m(\samh) = 2 n^{m'}(s_{h'}^{m'}, a_{h'}^{m'})] \\
    &\quad + 2 S A (\log_2 (T_{M'}) + 1) \\
    &\le 2 S A (\log_2 (T_{M'}) + 1) + \sum_{m' = 1}^{M'} \sum_{h' = 1}^{H^{m'}} \frac{c_{h'}^{m'}}{n^{m'}(s_{h'}^{m'}, a_{h'}^{m'})} \cdot 2 n^{m'}(s_{h'}^{m'}, a_{h'}^{m'}) \\
    &= 2 S A (\log_2 (T_{M'}) + 1) + 2 \sum_{m' = 1}^{M'} \sum_{h' = 1}^{H^{m'}} c_{h'}^{m'} \\
    &= 2 S A (\log_2 (T_{M'}) + 1) + 2 C_{M'},
\end{align*}
where (i) comes from Eq.\,\ref{sum_one_over_n}.

\vspace{0.05in}

\noindent \textbf{\emph{Regret contribution of the \textup{\VISGO} precision errors.}} For any $M'\le M$, denote by $J_{M'}$ the (unknown) total number of triggers in the first $M'$ intervals. For $1 \leq j \leq J_{M'}$, denote by $L_j$ the number of time steps elapsed between the $(j-1)$-th and the $j$-th trigger. The doubling condition implies that $L_j \leq 2^j S A$ and that there are at most $J_{M'} = O(S A \log_2 (T_{M'} / (SA)))$ triggers. 
Using that Alg.\,\ref{algo} selects as error $\epsVI^j = 2^{-j} / (SA)$, we have that
\begin{align*}
    \sum_{m=1}^{M'} \sum_{h=1}^{H^m} (1 + c_1 \sqrt{\iota_{M'}/2}) \epsVI^m  &\leq (1 + c_1 \sqrt{\iota_{M'}/2}) \sum_{j=1}^{J_{M'}} L_j \epsVI^j \\
    &\leq (1 + c_1 \sqrt{\iota_{M'}/2}) J_{M'} \\
    &= O\Big(S A \log_2(T_{M'}) \sqrt{\iota_{M'}}\Big).
\end{align*}

\begin{lemma}\label{lemma_bound_X4}
Conditioned on Lem.\,\ref{lemma_bounding_bellman_error}, for a fixed $M' \le M$ with probability $1 - 2 \delta$,
\begin{align*}
    X_4(M') \le O \left( B_{\star} ( C_{M'} + X_2(M') ) + (B_{\star}^2 S A + B_{\star}) (\log_2 (T_{M'}) + \ln(2 / \delta))  \right).
\end{align*}
\end{lemma}

\begin{proof}
We introduce the normalized value function $\ov V^m := V^m / B_{\star} \in [0, 1]$. Define
\begin{align*}
    F(d) := \sum_{m=1}^{M'} \sum_{h=1}^{H^m} ( P_{\samh} (\ov V^m)^{2^d} - (\ov V^m(s_{h+1}^m))^{2^d}),\ G(d) := \sum_{m=1}^{M'} \sum_{h=1}^{H^m} \mathbb{V}(P_{\samh}, (\ov V^m)^{2^d}).
\end{align*}
Then $X_4(M') = B_{\star}^2 G(0)$.
Direct computation gives that
\begin{align*}
    G(d)&= \sum_{m=1}^{M'} \sum_{h=1}^{H^m} \left(P_{\samh}(\ov{V}^m)^{2^{d+1}}-(P_{\samh}(\ov{V}^m)^{2^d})^2\right)\\
    &\myineqi \sum_{m=1}^{M'} \sum_{h=1}^{H^m} \left(P_{\samh}(\ov{V}^m)^{2^{d+1}}-(\ov{V}^m(s_{h+1}^m))^{2^{d+1}}\right) ~ \underbrace{+ \sum_{m=1}^{M'} (\ov{V}^m(s_{H^m + 1}^m))^{2^{d+1}}}_{\le M_1'}\\
    &\quad + \sum_{m=1}^{M'} \sum_{h=1}^{H^m} \left((\ov{V}^m(s_h^m))^{2^{d+1}}-(P_{\samh}\ov{V}^m)^{2^{d+1}}\right) ~ \underbrace{- \sum_{m=1}^{M'} (\ov{V}^m(s_1^m))^{2^{d+1}}}_{\le 0}\\
    &\myineqii F(d+1) +M_1' + 2^{d+1} \sum_{m=1}^{M'} \sum_{h=1}^{H^m} \max\{\ov{V}^m(s_h^m)-P_{\samh}\ov{V}^m,\ 0\}\\
    &= F(d+1) + M_1' + \frac{2^{d+1}}{B_{\star}} \sum_{m=1}^{M'} \sum_{h=1}^{H^m} \max\{Q^m(\samh)-P_{\samh}V^m,\ 0\}\\
    &\myineqiii F(d+1) + M_1' + \frac{2^{d+1}}{B_{\star}} \sum_{m=1}^{M'} \sum_{h=1}^{H^m} (c(\samh)+\beta^m(\samh))\\
    &= F(d + 1) + M_1' + \frac{2^{d + 1}}{B_{\star}} \sum_{m=1}^{M'} \sum_{h=1}^{H^m} (c_h^m + \beta^m(\samh) + (c(\samh) - c_h^m))\\
    &\le F(d + 1) + M_1' + \frac{2^{d + 1}}{B_{\star}} ( C_{M'} + X_2(M') + |X_3(M')|),
\end{align*}
where $M_1'$ denotes the number of intervals satisfying $\ov{V}^m(s_{H^m + 1}^m) \ne 0$; (i) is by convexity of $f(x)=x^{2^d}$; (ii) is by Lem.\,\ref{diff_of_exp}; (iii) is by Lem.\,\ref{lemma_bounding_bellman_error}.

For a fixed $d$, $F(d)$ is a martingale. By taking $c=1$ in Lem.\,\ref{martingale_bound}, we have
\begin{align*}
    \mathbb{P}\left[F(d)>2\sqrt{2G(d)(\log_2 (T_{M'}) + \ln(2 / \delta))} + 5 (\log_2 (T_{M'}) + \ln(2 / \delta))\right] \le \delta.
\end{align*}
Taking $\delta' = \delta / (\log_2 (T_{M'}) + 1)$, using $x\ge \ln (x) + 1$ and finally swapping $\delta$ and $\delta'$, we have that
\begin{align*}
    \mathbb{P}\left[F(d)>2\sqrt{2G(d)(2 \log_2 (T_{M'}) + \ln(2 / \delta))} + 5 (2 \log_2 (T_{M'}) + \ln(2 / \delta))\right] \le \frac{\delta}{\log_2 (T_{M'}) + 1}.
\end{align*}
Taking a union bound over $d=1,2,\ldots,\log_2(T_{M'})$, we have that with probability $1 - \delta$,
\begin{align*}
    F(d) \myineqi & 2\sqrt{2 (2 \log_2 (T_{M'}) + \ln(2 / \delta))}\cdot \sqrt{F(d + 1) + 2^{d + 1} \cdot \frac{C_{M'} + X_2(M') + |X_3(M')|}{B_{\star}}} \\
    &+ 5 (2 \log_2 (T_M) + \ln(2 / \delta)) + 2\sqrt{2 (2 \log_2 (T_{M'}) + \ln(2 / \delta)) M_1'}.
\end{align*}
From Lem.\,\ref{recursion_bound_order}, taking $\lambda_1 = T_{M'},\ \lambda_2 = 2\sqrt{2 (2 \log_2 (T_{M'}) + \ln(2 / \delta))},\ \lambda_3 = (C_{M'} + X_2(M') + |X_3(M')|) / B_{\star},\ \lambda_4 = 5 (2 \log_2 (T_M) + \ln(2 / \delta)) + 2\sqrt{2 (2 \log_2 (T_{M'}) + \ln(2 / \delta)) M_1'}$, we have that
\begin{align*}
     F(1) \le O \left( \log_2 (T_{M'}) + \ln(2 / \delta) + \frac{C_{M'} + X_2(M') + |X_3(M')|}{B_{\star}} + M_1' \right).
\end{align*}
Hence
\begin{align*}
    X_4(M') 
    \le O \left( B_{\star} (C_{M'} + X_2(M') + |X_3(M')|) + B_{\star}^2 ( \log_2 (T_{M'}) + \ln(2 / \delta) + M_1') \right).
\end{align*}
By definition, $M_1' \le O (S A \log_2 (T_{M'}))$ since only those intervals ending by triggering the doubling condition are taken into account. From the bound of $|X_3(M')|$, the following holds with probability $1 - 2 \delta$:
\begin{align*}
    X_4(M') \le O \left( B_{\star} ( C_{M'} + X_2(M') ) + (B_{\star}^2 S A + B_{\star}) (\log_2 (T_{M'}) + \ln(2 / \delta)) \right).
\end{align*}
Throughout the proof, the inequality $O(\sqrt{x y}) \le O(x + y)$ is utilized to simplify the bound.
\end{proof}

\begin{lemma}\label{lemma_bound_X5}
Conditioned on Lem.\,\ref{lemma_bounding_bellman_error}, for a fixed $M' \le M$ with probability $1 - \delta$,
\begin{align*}
    X_5(M')\le O \left( B_{\star}^2 S A (\log_2 (T_{M'}) + \ln(2 / \delta)) + B_{\star} X_2(M')\right).
\end{align*}
\end{lemma}

\begin{proof}
We introduce the normalized quantity $\ov{\wt V}^m := \wt V^m / B_{\star} \in [-1, 1]$ (recall the definition in Lem.\,\ref{tilde_Q}). Define
\begin{align*}
    \wt F(d) := \sum_{m=1}^{M'} \sum_{h=1}^{H^m} ( P_{\samh} (\ov{\wt V}^m)^{2^d} - (\ov{\wt V}^m(s_{h+1}^m))^{2^d}),\ \wt G(d) := \sum_{m=1}^{M'} \sum_{h=1}^{H^m} \mathbb{V}(P_{\samh}, (\ov{\wt V}^m)^{2^d}).
\end{align*}
Then $X_5(M') = \wt G(0) B_{\star}^2$.
Direct computation gives that
\begin{align*}
    \wt G(d)&= \sum_{m=1}^{M'} \sum_{h=1}^{H^m} \left(P_{\samh}(\ov{\wt V}^m)^{2^{d+1}}-(P_{\samh}(\ov{\wt V}^m)^{2^d})^2\right)\\
    &\le \sum_{m=1}^{M'} \sum_{h=1}^{H^m} \left(P_{\samh}(\ov{\wt V}^m)^{2^{d+1}}-(\ov{\wt V}^m(s_{h+1}^m))^{2^{d+1}}\right) ~ \underbrace{+ \sum_{m=1}^{M'} (\ov{\wt V}^m(s_{H^m + 1}^m))^{2^{d+1}}}_{\le \wt M_1'}\\
    &\quad + \sum_{m=1}^{M'} \sum_{h=1}^{H^m} \left((\ov{\wt V}^m(s_h^m))^{2^{d+1}}-(P_{\samh}\ov{\wt V}^m)^{2^{d+1}}\right) ~\underbrace{ - \sum_{m=1}^{M'} (\ov{\wt V}^m(s_1^m))^{2^{d+1}}}_{\le 0}\\
    &\le \wt F(d+1) + \wt M_1' + 2^{d+1} \sum_{m=1}^{M'} \sum_{h=1}^{H^m} \max\{\ov{\wt V}^m(s_h^m)-P_{\samh}\ov{\wt V}^m,\ 0\}\\
    &=\wt F(d+1) + \wt M_1' + \frac{2^{d+1}}{B_{\star}} \sum_{m=1}^{M'} \sum_{h=1}^{H^m} \max\{\wt V^m(s_h^m)-P_{\samh}\wt V^m,\ 0\}\\
    &\myineqi \wt F(d+1) + \wt M_1' + \frac{2^{d+1}}{B_{\star}} \sum_{m=1}^{M'} \sum_{h=1}^{H^m} \beta^m(\samh)\\
    &= \wt F(d+1) + \wt M_1' + \frac{2^{d+1}}{B_{\star}}X_2(M'),
\end{align*}
where $\wt M_1'$ denotes the number of intervals satisfying $\ov{\wt V}^m(s_{H^m + 1}^m) \ne 0$; (i) come from Lem.\,\ref{tilde_Q}.

For a fixed $d$, $\wt F(d)$ is a martingale. By taking $c=1$ in Lem.\,\ref{martingale_bound}, we have
\begin{align*}
    \mathbb{P}\left[\wt F(d) > 2\sqrt{2 \wt G(d) (\log_2 (T_{M'}) + \ln(2 / \delta))} + 5 (\log_2 (T_{M'}) + \ln(2 / \delta))\right] \le \delta.
\end{align*}
Taking $\delta' = \delta / (\log_2 (T_{M'}) + 1)$, using $x\ge \ln (x) + 1$ and finally swapping $\delta$ and $\delta'$, we have that
\begin{align*}
    \mathbb{P}\left[\wt F(d) > 2\sqrt{2 \wt G(d)(2 \log_2 (T_{M'}) + \ln(2 / \delta))} + 5 (2 \log_2 (T_{M'}) + \ln(2 / \delta))\right] \le \frac{\delta}{\log_2 (T_{M'}) + 1}.
\end{align*}
Taking a union bound over $d=1,2,\ldots,\log_2 (T_{M'})$, we have that with probability $1 - \delta$,
\begin{align*}
    \wt F(d) \le & 2\sqrt{2 (2 \log_2 (T_{M'}) + \ln(2 / \delta))}\cdot \sqrt{\wt F(d + 1) + 2^{d + 1} \frac{X_2(M')}{B_{\star}}} \\
    &+ 5 (2 \log_2 (T_{M'}) + \ln(2 / \delta)) + 2\sqrt{2 (2 \log_2 (T_{M'}) + \ln(2 / \delta)) \wt M_1'}.
\end{align*}
From Lem.\,\ref{recursion_bound_order}, taking $\lambda_1 = T_{M'},\ \lambda_2 = 2\sqrt{2 (2 \log_2 (T_{M'}) + \ln(2 / \delta))},\ \lambda_3 = X_2(M') / B_{\star},\ \lambda_4 = 5 (2 \log_2 (T_{M'}) + \ln(2 / \delta)) + 2\sqrt{2 (2 \log_2 (T_{M'}) + \ln(2 / \delta)) \wt M_1'}$, we have that
\begin{align*}
    \wt F(1) \le O \left( \log_2 (T_{M'}) + \ln(2 / \delta) + \frac{X_2(M')}{B_{\star}} + \wt M_1' \right).
\end{align*}
Since $V^{\star}(g) - V^m(g) = 0 - 0 = 0$, similar as bounding $M_1'$, we have $\wt M_1' \le O (S A \log_2 (T_{M'}))$. Hence with probability $1 - \delta$, we have
\begin{align*}
    X_5(M') \le O \left( B_{\star}^2 S A (\log_2 (T_{M'}) + \ln(2 / \delta)) + B_{\star} X_2(M')\right).
\end{align*}
Throughout the proof, the inequality $O(\sqrt{x y}) \le O(x + y)$ is utilized to simplify the bound.
\end{proof}

\paragraph{\ding{175} Finally, bind them together.}~{}
\\

\noindent Let $\ov \iota_{M'} := \ln \left(\frac{ 12 S A S' T_{M'}^2 }{ \delta }\right) + \log_2 ((\max\{B_{\star},1\})^2 T_{M'}) + \ln \left(\frac{ 2 }{ \delta }\right)$ be the upper bound of all previous log terms. 
\begin{align*}
    X_2(M') &\le  O\Big( \sqrt{S A X_4(M')} \ov \iota_{M'} + \sqrt{S^2 A X_5(M') } \ov \iota_{M'} \\
    &\quad \quad ~ + SA \ov{\iota}_{M'}^{3/2} + \sqrt{S A C_{M'}} \ov \iota_{M'} + B_{\star} S^2 A  \ov{\iota}_{M'}^2 + B S^{3/2} A \ov{\iota}_{M'}^2 \Big), \\
    X_4(M') &\le O \left( B_{\star} ( C_{M'} + X_2(M') ) + (B_{\star}^2 S A + B_{\star}) \ov \iota_{M'} \right), \\
    X_5(M') &\le O \left( B_{\star}^2 S A \ov \iota_{M'} + B_{\star} X_2(M') \right).
\end{align*}
This implies that
\begin{align*}
    X_2(M') &\myineqi O\left(\sqrt{B_{\star} S^2 A } \ov \iota_{M'} \cdot \sqrt{X_2(M')} + (\sqrt{B_{\star}} + 1 ) \sqrt{ S A C_{M'} } \ov \iota_{M'} + B S^2 A \ov{\iota}_{M'}^2\right)\\
    &\le O\left(\max\left\{ \sqrt{B_{\star} S^2 A } \ov \iota_{M'} \cdot \sqrt{X_2(M')},\ (\sqrt{B_{\star}} + 1 ) \sqrt{ S A C_{M'} } \ov \iota_{M'} + B S^2 A \ov{\iota}_{M'}^2 \right\}\right),
\end{align*}
where (i) uses the assumption $B \ge \max\{ B_{\star}, 1\}$ to simplify the bound.
Considering terms in $\max\{\}$ separately, we obtain two bounds:
\begin{align*}
    X_2(M') &\le O( B_{\star} S^2 A \ov{\iota}_{M'}^2 ),\\
    X_2(M') &\le O( (\sqrt{B_{\star}} + 1 ) \sqrt{ S A C_{M'} } \ov \iota_{M'} + B S^2 A \ov{\iota}_{M'}^2 ).
\end{align*}
By taking the maximum of these bounds, we have
\begin{align*}
    X_2(M') &\le O( (\sqrt{B_{\star}} + 1 ) \sqrt{ S A C_{M'} } \ov \iota_{M'} + B S^2 A \ov{\iota}_{M'}^2 ).
\end{align*}

\section{Technical Lemmas}

\begin{lemma}[Bennett's Inequality, anytime version]\label{bennett}
Let $Z,Z_1,\ldots,Z_n$ be i.i.d. random variables with values in $[0,b]$ and let $\delta>0$. Define $\mathbb{V}[Z]=\mathbb{E}[(Z-\mathbb{E}[Z])^2]$. Then we have
\begin{align*}
    \mathbb{P}\left[ \forall n \geq 1, ~ \left|\mathbb{E}[Z]-\frac{1}{n}\sum_{i=1}^n Z_i\right|>\sqrt{\frac{2\mathbb{V}[Z] \ln(4n^2/\delta)}{n}}+\frac{b \ln(4n^2/\delta)}{n}\right]\le \delta.
\end{align*}
\end{lemma}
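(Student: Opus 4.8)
The plan is to reduce the uniform-over-$n$ (anytime) statement to the classical fixed-$n$ Bennett inequality by a weighted union bound (a ``stitching'' argument), choosing the per-$n$ confidence levels so that their logarithms reproduce the $\ln(4n^2/\delta)$ factor appearing in the claim. Because we are willing to pay a $\ln(n^2)$ penalty, a plain countable union bound suffices and no peeling or supermartingale machinery is required.

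First I would record the fixed-$n$ bound. For a fixed sample size $n$ and confidence parameter $\delta_n \in (0,1)$, the standard two-sided Bennett inequality for i.i.d.\ variables in $[0,b]$ (used in its Bernstein relaxation, which only loosens the constants) gives
\begin{align*}
\mathbb{P}\left[\left|\mathbb{E}[Z] - \frac{1}{n}\sum_{i=1}^n Z_i\right| > \sqrt{\frac{2\mathbb{V}[Z]\ln(2/\delta_n)}{n}} + \frac{b\ln(2/\delta_n)}{n}\right] \le \delta_n.
\end{align*}
This follows by applying the one-sided Bennett/Bernstein tail bound $\exp(-\frac{u^2/2}{n\mathbb{V}[Z] + bu/3})$ to each of $\pm(S_n - n\mathbb{E}[Z])$, setting each tail equal to $\delta_n/2$, and inverting the resulting quadratic in $u$; the positive root is bounded above by $\sqrt{2n\mathbb{V}[Z]\ln(2/\delta_n)} + \tfrac{2}{3}b\ln(2/\delta_n)$, which after dividing by $n$ is dominated by the displayed right-hand side (the coefficient $1$ on the range term is a valid relaxation of the sharper $2/3$ produced by the exact inversion).

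Next I would stitch over $n$. Choosing $\delta_n := \delta/(2n^2)$ makes $\ln(2/\delta_n) = \ln(4n^2/\delta)$, so the fixed-$n$ threshold coincides exactly with the one in the claim. For each $n$ the bad event $\mathcal{A}_n$ (the event inside the probability above) depends only on $Z_1,\dots,Z_n$ and is a well-defined event, and the ``$\forall n$'' statement is precisely the complement of $\bigcup_{n\ge 1}\mathcal{A}_n$. A countable union bound then gives
\begin{align*}
\mathbb{P}\left[\bigcup_{n\ge 1}\mathcal{A}_n\right] \le \sum_{n=1}^\infty \delta_n = \frac{\delta}{2}\sum_{n=1}^\infty \frac{1}{n^2} = \frac{\delta}{2}\cdot\frac{\pi^2}{6} \le \delta,
\end{align*}
where the last inequality uses $\pi^2/6 < 2$. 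This is exactly the claimed bound.

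The argument has no genuinely hard step; the only points requiring care are bookkeeping. First, the inversion of the Bennett/Bernstein exponent into the additive $\sqrt{\cdot} + (\cdot)$ form must be checked to produce the stated constants, where one verifies that the coefficient $1$ on the $b\ln(\cdot)/n$ term dominates the $2/3$ from the exact inversion (so the claimed threshold is larger and the bad event only shrinks). Second, one must confirm that the weights $\delta_n = \delta/(2n^2)$ sum to at most $\delta$, which relies on $\sum_{n\ge 1} n^{-2} \le 2$. A sharper $\ln\ln n$ dependence would be attainable by geometric peeling, but it is unnecessary here since the $\ln(n^2)$ factor is already affordable in all downstream uses.
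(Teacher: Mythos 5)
Your proof is correct and takes essentially the same route as the paper: both reduce to the fixed-$n$ two-sided Bennett/Bernstein bound at confidence level $\delta_n = \delta/(2n^2)$ (so that $\ln(2/\delta_n) = \ln(4n^2/\delta)$) and conclude with a countable union bound via $\sum_{n\ge 1} n^{-2} < 2$. The only cosmetic difference is that the paper handles the range $b \neq 1$ by applying the unit-range result to $Z_i/b$, whereas you carry $b$ directly through the inversion of the exponent (correctly noting that the exact coefficient $2/3$ on the $b\ln(\cdot)/n$ term is relaxed to $1$).
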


\begin{proof}
From Bennett's inequality, if the variables have values in $[0, 1]$, then for a specific $n \geq 1$, 
\begin{align*}
    \mathbb{P}\left[\left|\mathbb{E}[Z]-\frac{1}{n}\sum_{i=1}^n Z_i\right|>\sqrt{\frac{2\mathbb{V}[Z] \ln(2/\delta)}{n}}+\frac{\ln(2/\delta)}{n}\right]\le \delta.
\end{align*}
We then choose $\delta \leftarrow \frac{\delta}{2 n^2}$ and take a union bound over all possible values of $n \geq 1$, and the result follows given that $\sum_{n\geq 1} \frac{\delta}{2 n^2} < \delta$. To account for the case $b \neq 1$ we apply the result to $(Z_n / b)$.
\end{proof}

\begin{lemma}[Theorem 4 in \cite{maurer2009empirical}, anytime version]\label{concen_bound_empirical}
Let $Z,Z_1,\ldots,Z_n\ (n\ge 2)$ be i.i.d. random variables with values in $[0,b]$ and let $\delta>0$. Define $\bar{Z}=\frac{1}{n}Z_i$ and $\hat V_n=\frac{1}{n}\sum_{i=1}^n (Z_i-\bar Z)^2$. Then we have
\begin{align*}
    \mathbb{P}\left[\forall n \geq 1, ~ \left|\mathbb{E}[Z]-\frac{1}{n}\sum_{i=1}^n Z_i\right|>\sqrt{\frac{2\hat V_n \ln(4n^2/\delta)}{n-1}}+\frac{7 b \ln(4n^2/\delta)}{3(n-1)}\right]\le \delta.
\end{align*}
\end{lemma}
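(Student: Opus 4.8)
The plan is to derive the anytime statement from the standard fixed-sample empirical Bernstein inequality of \citet{maurer2009empirical} by exactly the same peeling argument already used above for Bennett's inequality (Lem.\,\ref{bennett}), together with a simple rescaling to pass from range $[0,1]$ to range $[0,b]$.

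First I would recall the fixed-$n$ form of the empirical Bernstein bound. For i.i.d.\ variables valued in $[0,1]$ and a fixed $n \geq 2$, Theorem~4 of \citet{maurer2009empirical} gives, in its two-sided form,
\begin{align*}
    \mathbb{P}\left[\left|\mathbb{E}[Z]-\frac{1}{n}\sum_{i=1}^n Z_i\right|>\sqrt{\frac{2\widehat{V}_n \ln(2/\delta)}{n-1}}+\frac{7\ln(2/\delta)}{3(n-1)}\right]\le \delta,
\end{align*}
where the $n-1$ in the denominators arises because $\widehat{V}_n=\frac1n\sum_i(Z_i-\bar Z)^2$ is the \emph{biased} sample variance (equivalently $\tfrac{\widehat{V}_n}{n-1}=\tfrac{V_n}{n}$, with $V_n$ the unbiased estimator appearing in the reference). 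The two-sided form is obtained by applying the one-sided bound to $Z$ and to $b-Z$ and is what I take as the starting point, in parallel with the treatment of Bennett's inequality above.

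Next I would apply the union bound over the sample size. Replacing $\delta$ by $\delta/(2n^2)$ in the fixed-$n$ bound and summing over $n$, the total failure probability is at most $\sum_{n\ge 1}\delta/(2n^2)<\delta$, while the substitution turns $\ln(2/\delta)$ into $\ln\!\big(2\cdot 2n^2/\delta\big)=\ln(4n^2/\delta)$, which is exactly the logarithmic term in the statement. The quantifier ``$\forall n\ge 1$'' is then harmless: for $n=1$ the claimed right-hand side is $+\infty$ since the $n-1$ denominators vanish, so that term of the union is vacuously satisfied and contributes nothing; the effective union is over $n\ge 2$.

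Finally I would lift the range from $[0,1]$ to $[0,b]$ by applying the previous display to the rescaled variables $Z_i/b$. Since $\widehat{V}_n(Z/b)=\widehat{V}_n(Z)/b^2$ and $\mathbb{E}[Z/b]-\overline{Z/b}=(\mathbb{E}[Z]-\bar Z)/b$, multiplying the resulting inequality through by $b$ leaves the variance term unchanged (the factor $b$ cancels inside the square root) and multiplies the additive term by $b$, yielding precisely $\sqrt{2\widehat{V}_n\ln(4n^2/\delta)/(n-1)}+\tfrac{7b\ln(4n^2/\delta)}{3(n-1)}$. There is no substantive obstacle in this argument; the only points requiring care are the biased-versus-unbiased variance bookkeeping that produces the $n-1$ (rather than $n$) denominators, and ensuring the $n=1$ boundary case is treated as vacuous so that the union over $n\ge 2$ suffices.
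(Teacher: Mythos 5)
Your proof is correct and matches the paper's (implicit) argument: the paper states this lemma without a separate proof precisely because it follows from the fixed-$n$ bound of \citet{maurer2009empirical} by the same $\delta \mapsto \delta/(2n^2)$ substitution, union bound over $n$, and rescaling from $[0,1]$ to $[0,b]$ that it spells out for the anytime Bennett bound (Lem.\,\ref{bennett}), including your correct handling of the biased-vs-unbiased variance (which turns $V_n/n$ into $\hat V_n/(n-1)$) and of the vacuous $n=1$ case. One minor bookkeeping point: since the one-sided bound of \citet{maurer2009empirical} already carries $\ln(2/\delta)$, your two-sided display at failure probability $\delta$ should read $\ln(4/\delta)$ rather than $\ln(2/\delta)$, but this factor is absorbed because the resulting per-$n$ failure probability $\delta/n^2$ still sums over $n \ge 2$ to $(\pi^2/6 - 1)\delta < \delta$, so the stated $\ln(4n^2/\delta)$ form survives intact.
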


\begin{lemma}[Popoviciu's Inequality]\label{popoviciu}
Let $X$ be a random variable whose value is in a fixed interval $[a,b]$, then $\mathbb{V}[X]\le \frac{1}{4}(b - a)^2$.
\end{lemma}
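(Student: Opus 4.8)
The plan is to exploit the variational characterization of the variance: for any constant $c$ one has $\mathbb{V}[X] = \mathbb{E}[(X - \mathbb{E}[X])^2] \le \mathbb{E}[(X - c)^2]$, because the map $c \mapsto \mathbb{E}[(X-c)^2]$ is minimized at $c = \mathbb{E}[X]$. I would therefore choose $c$ to be the midpoint of the interval, $c := \frac{a+b}{2}$, which is precisely the value that makes the worst-case deviation of $X$ from $c$ as small as possible.

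First I would record the elementary pointwise bound that follows from $X \in [a,b]$: for the midpoint $c = \frac{a+b}{2}$ we have $|X - c| \le \frac{b-a}{2}$ almost surely, hence $(X - c)^2 \le \frac{(b-a)^2}{4}$ almost surely. Taking expectations and combining with the variational bound above yields
\[
\mathbb{V}[X] \le \mathbb{E}[(X-c)^2] \le \frac{(b-a)^2}{4},
\]
which is exactly the claim.

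There is essentially no obstacle here — the only ingredient beyond the pointwise bound is the fact that the variance is the minimum of $\mathbb{E}[(X-c)^2]$ over $c$, which follows at once from expanding $\mathbb{E}[(X-c)^2] = \mathbb{V}[X] + (\mathbb{E}[X] - c)^2 \ge \mathbb{V}[X]$. An alternative route, should one prefer to avoid the variational identity, is to start from the nonnegativity $(X-a)(b-X) \ge 0$, take expectations to obtain $\mathbb{E}[X^2] \le (a+b)\mathbb{E}[X] - ab$, and deduce $\mathbb{V}[X] \le (\mathbb{E}[X] - a)(b - \mathbb{E}[X])$; the right-hand side is a product of two nonnegative numbers whose sum is $b-a$, so by AM--GM it is at most $\left(\frac{b-a}{2}\right)^2$. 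Both arguments also make transparent when equality is attained, namely for the two-point distribution placing mass $\frac{1}{2}$ on each endpoint.
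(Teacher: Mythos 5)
Your proof is correct and complete. The paper itself states Popoviciu's inequality as a standard technical lemma without any proof, so there is nothing to compare against; either of your two routes suffices. The midpoint argument is the cleanest: $\mathbb{V}[X] \le \mathbb{E}\big[(X - \tfrac{a+b}{2})^2\big] \le \tfrac{(b-a)^2}{4}$ follows immediately from the variational identity $\mathbb{E}[(X-c)^2] = \mathbb{V}[X] + (\mathbb{E}[X]-c)^2$ together with the almost-sure bound $|X - \tfrac{a+b}{2}| \le \tfrac{b-a}{2}$, and your alternative via $(X-a)(b-X) \ge 0$ plus AM--GM is equally valid and additionally identifies the equality case (mass $\tfrac12$ at each endpoint), which the first route also yields but less directly.
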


\begin{lemma}[Lemma 11 in \cite{zhang2020model}]\label{martingale_bound_primal}
Let $(M_n)_{n\ge 0}$ be a martingale such that $M_0=0$ and $|M_n-M_{n-1}|\le c$ for some $c>0$ and any $n\ge 1$. Let $\mathrm{Var}_n=\sum_{k=1}^n \mathbb{E}[(M_k-M_{k-1})^2|\mathcal{F}_{k-1}]$ for $n\ge 0$, where $\mathcal{F}_k=\sigma(M_1,\ldots,M_k)$. Then for any positive integer $n$ and any $\epsilon,\delta>0$, we have that
\begin{align*}
    \mathbb{P}\left[|M_n|\ge 2\sqrt{2\mathrm{Var}_n\ln(1/\delta)}+2\sqrt{\epsilon\ln(1/\delta)}+2c\ln(1/\delta)\right]\le 2\left(\log_2\left(\frac{nc^2}{\epsilon}\right)+1\right)\delta.
\end{align*}
\end{lemma}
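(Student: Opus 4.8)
The plan is to reduce this adaptive (random-variance) tail bound to a deterministic-variance Bernstein/Freedman inequality via a geometric peeling argument over the predictable quadratic variation $\mathrm{Var}_n$. The starting point is the localized form of Freedman's inequality: for a martingale with increments bounded by $c$ and any fixed threshold $\sigma^2 \ge 0$,
\begin{align*}
    \mathbb{P}\left[ |M_n| \ge \sqrt{2\sigma^2 \ln(1/\delta)} + \tfrac{2c}{3}\ln(1/\delta), ~ \mathrm{Var}_n \le \sigma^2 \right] \le 2\delta,
\end{align*}
obtained from the one-sided exponential tail $\mathbb{P}[M_n \ge t,\, \mathrm{Var}_n \le \sigma^2] \le \exp(-t^2/(2\sigma^2 + \tfrac{2}{3}ct))$ applied to both $M_n$ and $-M_n$, together with the standard inversion showing that $t = \sqrt{2\sigma^2\ln(1/\delta)} + \tfrac{2}{3}c\ln(1/\delta)$ suffices. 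The crucial feature for us is that this inequality localizes on the event $\{\mathrm{Var}_n \le \sigma^2\}$, which is exactly what lets us cope with $\mathrm{Var}_n$ being random.

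First I would note that $\mathbb{E}[(M_k - M_{k-1})^2 \mid \mathcal{F}_{k-1}] \le c^2$, so $\mathrm{Var}_n \le nc^2$ almost surely. I then stratify the range $[0, nc^2]$ of $\mathrm{Var}_n$ into a geometric grid anchored at $\epsilon$: the base stratum is $\{\mathrm{Var}_n \le \epsilon\}$, and for $j \ge 1$ the $j$-th stratum is $\{\epsilon 2^{j-1} < \mathrm{Var}_n \le \epsilon 2^j\}$. Since $\mathrm{Var}_n \le nc^2$ always, it suffices to run $j$ up to $J := \lceil \log_2(nc^2/\epsilon) \rceil$, giving at most $\log_2(nc^2/\epsilon) + 1$ strata covering the full almost-sure range.

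Next I would decompose $\{|M_n| \ge \mathrm{thresh}\}$ across the strata, where $\mathrm{thresh} := 2\sqrt{2\mathrm{Var}_n\ln(1/\delta)} + 2\sqrt{\epsilon\ln(1/\delta)} + 2c\ln(1/\delta)$, and apply the localized Freedman inequality on the $j$-th stratum with $\sigma_j^2 := \epsilon 2^j$ (and $\sigma_0^2 := \epsilon$ on the base stratum). The key calculation is that on each stratum the claimed threshold dominates the Freedman threshold $\sqrt{2\sigma_j^2\ln(1/\delta)} + \tfrac{2c}{3}\ln(1/\delta)$: the additive $2c\ln(1/\delta)$ term beats $\tfrac{2c}{3}\ln(1/\delta)$; on the base stratum $\sqrt{2\epsilon\ln(1/\delta)} \le 2\sqrt{\epsilon\ln(1/\delta)}$; and on the $j$-th stratum ($j\ge 1$) the lower bound $\mathrm{Var}_n > \epsilon 2^{j-1}$ gives $\sigma_j^2 = \epsilon 2^j < 2\mathrm{Var}_n$, whence $\sqrt{2\sigma_j^2\ln(1/\delta)} < 2\sqrt{\mathrm{Var}_n\ln(1/\delta)} \le 2\sqrt{2\mathrm{Var}_n\ln(1/\delta)}$. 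Each stratum therefore contributes probability at most $2\delta$, and a union bound over the strata yields the claimed $2(\log_2(nc^2/\epsilon)+1)\delta$.

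The main obstacle is precisely the adaptivity of $\mathrm{Var}_n$: a single application of Freedman requires a deterministic variance bound, which is unavailable here, so the localized form plus peeling is essential. The argument must be done carefully so that (i) the geometric strata cover the whole range $[0,nc^2]$ with only logarithmically many pieces, and (ii) the doubling ratio between consecutive strata is exactly absorbed by the constant $2\sqrt{2\,\cdot}$ in front of $\sqrt{\mathrm{Var}_n}$, while the residual base term $2\sqrt{\epsilon\ln(1/\delta)}$ absorbs the smallest stratum. Matching the stated stratum count $\log_2(nc^2/\epsilon)+1$ rather than $\lceil\log_2(nc^2/\epsilon)\rceil+1$ is a minor bookkeeping point, resolved by folding the rounding into the logarithmic factor.
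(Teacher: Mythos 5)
The paper offers no proof of this lemma itself --- it imports it verbatim as Lemma 11 of \citet{zhang2020model} --- and your argument (geometric peeling of the predictable variation $\mathrm{Var}_n$ combined with the variance-localized two-sided Freedman inequality, with the constant in $2\sqrt{2\mathrm{Var}_n\ln(1/\delta)}$ absorbing the doubling ratio between strata and the term $2\sqrt{\epsilon\ln(1/\delta)}$ absorbing the bottom stratum) is precisely the standard proof of the cited result; all of your threshold-domination checks are correct. The one loose end is the stratum count: your grid requires $\lceil\log_2(nc^2/\epsilon)\rceil+1 \le \log_2(nc^2/\epsilon)+2$ strata, and since the stated bound $2(\log_2(nc^2/\epsilon)+1)\delta$ has no spare logarithmic factor, ``folding the rounding into the logarithm'' is not actually available as a remedy. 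The clean fix is to peel from the top rather than from $\epsilon$: take $\sigma_j^2 = nc^2 2^{-j}$ on the stratum $\{nc^2 2^{-j-1} < \mathrm{Var}_n \le nc^2 2^{-j}\}$ for $j=0,\ldots,J-1$ (where $\sigma_j^2 \le 2\mathrm{Var}_n$, so your same domination step applies) and $\sigma^2 = nc^2 2^{-J} \le 2\epsilon$ on the bottom stratum $\{\mathrm{Var}_n \le nc^2 2^{-J}\}$ with $J = \lceil\log_2(nc^2/\epsilon)\rceil - 1$, observing that $\sqrt{2\cdot 2\epsilon\ln(1/\delta)} = 2\sqrt{\epsilon\ln(1/\delta)}$ is tolerated by the base term; this uses exactly $J+1 \le \log_2(nc^2/\epsilon)+1$ applications of Freedman and recovers the stated constant.
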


\begin{lemma}\label{martingale_bound}
Let $(M_n)_{n\ge 0}$ be a martingale such that $M_0=0$ and $|M_n-M_{n-1}|\le c$ for some $c>0$ and any $n\ge 1$. Let $\mathrm{Var}_n=\sum_{k=1}^n \mathbb{E}[(M_k-M_{k-1})^2|\mathcal{F}_{k-1}]$ for $n\ge 0$, where $\mathcal{F}_k=\sigma(M_1,\ldots,M_k)$. Then for any positive integer $n$ and $\delta \in (0, 2 (n c^2)^{1 / \ln 2}]$, we have that
\makeatletter
\newcommand*\mysizetwo{%
   \@setfontsize\mysizetwo{9.3}{9.3}%
}
\makeatother
\begin{mysizetwo}
\begin{align*}
    \mathbb{P} \left[ |M_n| \ge 2\sqrt{2 \mathrm{Var}_n (\log_2 (n c^2) + \ln(2 / \delta))} + 2 \sqrt{ \log_2 (n c^2) + \ln(2 / \delta) } + 2 c (\log_2 (n c^2) + \ln(2 / \delta)) \right] \le \delta.
\end{align*}
\end{mysizetwo}
\end{lemma}

\begin{proof}
Take $\epsilon=1$ and $\delta' = 2 (\log_2(n c^2) + 1) \delta$ in Lem.\,\ref{martingale_bound_primal}. By $x \ge \ln(x) + 1$, we have
\begin{align*}
    \ln (1 / \delta) = \ln (2 (\log_2(n c^2) + 1) / \delta' )
    = \ln (\log_2 (n c^2) + 1) + \ln (2 / \delta')
    \le \log_2 (n c^2) + \ln (2 / \delta').
\end{align*}
Hence,
\makeatletter
\newcommand*\mysizetwo{%
   \@setfontsize\mysizetwo{9.3}{9.3}%
}
\makeatother
\begin{mysizetwo}
\begin{align*}
    &\mathbb{P} \left[ |M_n| \ge 2\sqrt{2 \mathrm{Var}_n (\log_2 (n c^2) + \ln(2 / \delta'))} + 2 \sqrt{ \log_2 (n c^2) + \ln(2 / \delta') } + 2 c (\log_2 (n c^2) + \ln(2 / \delta')) \right]\\
    &\le \mathbb{P}\left[|M_n|\ge 2\sqrt{2\mathrm{Var}_n\ln(1/\delta)}+2\sqrt{\ln(1/\delta)}+2c\ln(1/\delta)\right]\\
    &\le \delta'.
\end{align*}
\end{mysizetwo}
By swapping $\delta$ and $\delta'$ we complete the proof.
\end{proof}

\begin{lemma}[Lemma 11 in \cite{zhang2020reinforcement}]\label{recursion_bound}
Let $\lambda_1,\lambda_2,\lambda_4\ge 0,\ \lambda_3\ge 1$ and $i'=\log_2 \lambda_1$. Let $a_1,a_2,\ldots,a_{i'}$ be non-negative reals such that $a_i\le \lambda_1$ and $a_i\le \lambda_2\sqrt{a_{i+1}+2^{i+1}\lambda_3}+\lambda_4$ for any $1\le i\le i'$. Then we have that $a_1\le \max\{(\lambda_2+\sqrt{\lambda_2^2+\lambda_4})^2,\lambda_2\sqrt{8\lambda_3}+\lambda_4\}$.
\end{lemma}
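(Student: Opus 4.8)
The plan is to prove the bound by a downward induction on the index $i$, running from $i=i'$ down to $i=1$, with the strengthened hypothesis
\[
H(i): \quad a_i \le \max\big\{\, P, \ \lambda_2\sqrt{2^{i+2}\lambda_3} + \lambda_4 \,\big\}, \qquad P := (\lambda_2 + \sqrt{\lambda_2^2 + \lambda_4})^2 .
\]
At $i=1$ this yields $\max\{P,\lambda_2\sqrt{8\lambda_3}+\lambda_4\}$, which is exactly the claimed bound. The two members of the maximum track the two regimes of the recursion: $P$ is the value reached when the term $a_{i+1}$ dominates $2^{i+1}\lambda_3$ inside the square root (a pure square-root contraction), whereas $\lambda_2\sqrt{2^{i+2}\lambda_3}+\lambda_4$ is what the level-$i$ step produces when the $2^{i+1}\lambda_3$ term dominates instead. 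Before starting, I would record the two algebraic facts that drive everything: first, $P$ satisfies the fixed-point identity $P = 2\lambda_2\sqrt{P} + \lambda_4$ (check by substituting $\sqrt{P}=\lambda_2+\sqrt{\lambda_2^2+\lambda_4}$); second, and consequently, $\lambda_2\sqrt{2P}+\lambda_4 \le P$, since $\lambda_2\sqrt{P}=(P-\lambda_4)/2\ge 0$ gives $\lambda_2\sqrt{2P}+\lambda_4 = (P-\lambda_4)/\sqrt 2 + \lambda_4 \le P$.

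For the base case $H(i')$ I would apply the recursion at $i=i'$ together with the boundary bound $a_{i'+1}\le\lambda_1=2^{i'}$ (the boundedness hypothesis at the terminal index). Since $\lambda_3\ge 1$ gives $2^{i'}\le 2^{i'}\lambda_3\le 2^{i'+1}\lambda_3$, the radicand satisfies $a_{i'+1}+2^{i'+1}\lambda_3 \le 2^{i'+2}\lambda_3$, so $a_{i'}\le \lambda_2\sqrt{2^{i'+2}\lambda_3}+\lambda_4$, which is $H(i')$. For the inductive step, assuming $H(i+1)$, I would split on which term is larger inside the radical at level $i$. If $a_{i+1}\le 2^{i+1}\lambda_3$ (Case A), then $a_{i+1}+2^{i+1}\lambda_3\le 2^{i+2}\lambda_3$ and directly $a_i\le\lambda_2\sqrt{2^{i+2}\lambda_3}+\lambda_4$, giving $H(i)$. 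If instead $a_{i+1}>2^{i+1}\lambda_3$ (Case B), then $a_{i+1}+2^{i+1}\lambda_3<2a_{i+1}$, so $a_i\le\lambda_2\sqrt{2a_{i+1}}+\lambda_4$, and it remains only to show $a_{i+1}\le P$, because then $a_i\le\lambda_2\sqrt{2P}+\lambda_4\le P$ by the second recorded fact.

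The main obstacle — the one genuinely non-mechanical step — is closing Case B when $H(i+1)$ supplies only its $\lambda_3$-branch, i.e.\ $a_{i+1}\le \lambda_2\sqrt{2^{i+3}\lambda_3}+\lambda_4$, rather than $a_{i+1}\le P$ outright. The resolution is to feed the Case-B assumption $2^{i+1}\lambda_3<a_{i+1}$ back into this bound: since $2^{i+3}\lambda_3 = 4\cdot 2^{i+1}\lambda_3 < 4a_{i+1}$, it becomes $a_{i+1}\le \lambda_2\sqrt{4a_{i+1}}+\lambda_4 = 2\lambda_2\sqrt{a_{i+1}}+\lambda_4$, a self-bounding inequality for $a_{i+1}$. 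By the standard quadratic-in-$\sqrt{a_{i+1}}$ argument (the map $x\mapsto 2\lambda_2\sqrt x+\lambda_4-x$ is increasing past and negative beyond its positive root $P$), this forces $a_{i+1}\le P$, collapsing Case B to its easy branch. Hence $H(i)$ holds in every case, and unrolling to $i=1$ completes the proof. I expect this self-improvement trick to be the crux; the remaining manipulations (the two case splits and the algebra around the identity $P=2\lambda_2\sqrt P+\lambda_4$) are routine once it is in place.
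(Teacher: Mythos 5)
Your proof is correct, and it is essentially the argument of the cited source: the paper itself never proves this lemma (it is imported verbatim as Lemma 11 of \citet{zhang2020reinforcement}), and your backward induction on the strengthened two-branch hypothesis, with the case split on whether $a_{i+1}$ exceeds $2^{i+1}\lambda_3$ and the self-bounding inequality $a_{i+1}\le 2\lambda_2\sqrt{a_{i+1}}+\lambda_4$ closed by the quadratic-root trick (exactly the mechanism of the paper's Lem.\,\ref{quadratic_ineq_bound}, whose sharp root gives precisely $P=(\lambda_2+\sqrt{\lambda_2^2+\lambda_4})^2$), is the standard proof. The one point worth flagging is not yours but the statement's: the recursion at $i=i'$ references $a_{i'+1}$, which the hypotheses never explicitly bound; your reading $a_{i'+1}\le\lambda_1=2^{i'}$ is the one consistent with how the lemma is invoked in this paper (the sequences $F(d)$ and $\wt F(d)$ in Lem.\,\ref{lemma_bound_X4} and Lem.\,\ref{lemma_bound_X5} are bounded by $T_{M'}$ at every index), and with it your base case $a_{i'}\le\lambda_2\sqrt{2^{i'+2}\lambda_3}+\lambda_4$ goes through using only $\lambda_3\ge 1$.
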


\begin{lemma}\label{recursion_bound_order}
Let $\lambda_1,\lambda_2,\lambda_4\ge 0,\ \lambda_3\ge 1$ and $i'=\log_2 \lambda_1$. Let $a_1,a_2,\ldots,a_{i'}$ be non-negative reals such that $a_i\le \lambda_1$ and $a_i\le \lambda_2\sqrt{a_{i+1}+2^{i+1}\lambda_3}+\lambda_4$ for any $1\le i\le i'$. Then we have that $a_1\le O(\lambda_2^2 + \lambda_3 + \lambda_4)$.
\end{lemma}
\begin{proof}
Since $\max\{a,\ b\} \le a + b$ and $2 a b \le a^2 + b^2$ for any choice of non-negative $a$ and $b$, we can transform the result of Lem.\,\ref{recursion_bound} into 
\begin{align*}
    a_1 &\le \max \left \{ \left( \lambda_2+\sqrt{\lambda_2^2+\lambda_4} \right)^2,\lambda_2\sqrt{8\lambda_3}+\lambda_4 \right\} \\
    &\le O \left( \left( \lambda_2+\sqrt{\lambda_2^2+\lambda_4} \right)^2 + \lambda_2\sqrt{8\lambda_3}+\lambda_4 \right) \\
    &\le O ( \lambda_2^2 + \lambda_2^2+\lambda_4 + \lambda_2^2 + \lambda_3 + \lambda_4 ) \\
    &\le O (\lambda_2^2 + \lambda_3 + \lambda_4).
\end{align*}
\end{proof}

\begin{lemma}\label{var_le_exp}
For random variable $Z\in [0,1],\ \mathbb{V}[Z]\le \mathbb{E}[Z]$.
\end{lemma}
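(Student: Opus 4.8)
The plan is to derive the bound from the single elementary observation that $Z^2 \le Z$ holds pointwise whenever $Z \in [0,1]$: since $0 \le Z \le 1$, multiplying the inequality $Z \le 1$ by the nonnegative quantity $Z$ gives $Z^2 \le Z$ for every realization. Because $Z$ is bounded, both $\mathbb{E}[Z]$ and $\mathbb{E}[Z^2]$ are finite, so the usual variance decomposition $\mathbb{V}[Z] = \mathbb{E}[Z^2] - (\mathbb{E}[Z])^2$ is valid and there are no integrability issues to worry about.

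From here the argument is a two-step chain. First, I would discard the subtracted term $(\mathbb{E}[Z])^2 \ge 0$ to obtain $\mathbb{V}[Z] \le \mathbb{E}[Z^2]$. Second, applying monotonicity of the expectation to the pointwise bound $Z^2 \le Z$ yields $\mathbb{E}[Z^2] \le \mathbb{E}[Z]$. Combining the two gives
\[
\mathbb{V}[Z] = \mathbb{E}[Z^2] - (\mathbb{E}[Z])^2 \le \mathbb{E}[Z^2] \le \mathbb{E}[Z],
\]
which is exactly the claimed inequality.

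There is no genuine obstacle in this proof: the only ingredients are boundedness of $Z$ (which supplies both finiteness of the moments and the pointwise inequality $Z^2 \le Z$) and the nonnegativity of $(\mathbb{E}[Z])^2$. I expect the complete proof to occupy a single displayed line once the variance decomposition is written down.
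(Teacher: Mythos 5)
Your proof is correct and coincides exactly with the paper's one-line argument: both drop the nonnegative term $(\mathbb{E}[Z])^2$ in the decomposition $\mathbb{V}[Z]=\mathbb{E}[Z^2]-(\mathbb{E}[Z])^2$ and then use the pointwise bound $Z^2\le Z$ for $Z\in[0,1]$. No gaps; your additional remarks on integrability are fine but unnecessary given boundedness.
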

\begin{proof}
$\mathbb{V}[Z]=\mathbb{E}[Z^2]-(\mathbb{E}[Z])^2\le \mathbb{E}[Z^2]\le \mathbb{E}[Z].$
\end{proof}

\begin{lemma}\label{diff_of_exp}
For any $a,b\in [0,1]$ and $k\in \mathbb{N},\ a^k - b^k\le k \max\{a-b,\ 0\}$.
\end{lemma}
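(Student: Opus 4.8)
The plan is to establish the inequality by a simple case distinction on the sign of $a-b$, exploiting throughout that both $a$ and $b$ lie in $[0,1]$ and hence all powers are bounded by $1$.

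First I would dispose of the case $a \le b$. Here $\max\{a-b,0\} = 0$, so the right-hand side is zero; since the map $x \mapsto x^k$ is non-decreasing on $[0,1]$ for every $k \in \mathbb{N}$, we have $a^k \le b^k$, hence $a^k - b^k \le 0$, which is exactly the claimed bound. In this case the non-negativity of the right-hand side does all the work.

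The remaining (and only substantive) case is $a > b$, where $\max\{a-b,0\} = a-b > 0$. Here I would invoke the standard factorization
\[
a^k - b^k = (a-b)\sum_{i=0}^{k-1} a^{k-1-i}\, b^{i}.
\]
Since $a,b \in [0,1]$, each of the $k$ summands $a^{k-1-i} b^{i}$ is at most $1$, so the sum is at most $k$; multiplying by the strictly positive factor $a-b$ then gives $a^k - b^k \le k(a-b) = k\max\{a-b,0\}$, as desired.

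There is no genuine obstacle in this argument: it is an elementary technical lemma. The only point requiring a moment's care is the presence of the $\max$, which is precisely what dictates the two-case split — in the first case the monotonicity of the power map suffices, and in the second the telescoping/geometric-type factorization converts the difference of powers into a linear bound via the crude estimate that each term in the sum is at most $1$.
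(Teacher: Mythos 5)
Your proof is correct and uses essentially the same argument as the paper: the factorization $a^k - b^k = (a-b)\sum_{i=0}^{k-1} a^{k-1-i}b^i$ with each summand bounded by $1$. The paper compresses the two cases into a single inequality (bounding $a-b$ by $\max\{a-b,0\}$ directly, which covers the case $a\le b$ since the sum is non-negative), whereas you split them explicitly; this is a purely cosmetic difference.
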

\begin{proof}
$a^k - b^k = (a - b) \sum_{i=0}^{k-1} a^i b^{k-1 - i}\le \max\{a - b,\ 0\} \cdot \sum_{i=0}^{k-1} 1 = k \max\{a-b,\ 0\}$.
\end{proof}

\begin{lemma}\label{quadratic_ineq_bound}
For $a,b,x\ge 0,\ x\le a\sqrt{x}+b$ implies $x\le (a+\sqrt{b})^2$.
\end{lemma}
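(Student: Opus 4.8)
The plan is to reduce the statement to a one-variable quadratic inequality via the substitution $u := \sqrt{x} \ge 0$. Under this substitution the hypothesis $x \le a\sqrt{x} + b$ becomes $u^2 - a u - b \le 0$, a quadratic inequality in $u$ with non-negative coefficients $a,b$. Since $x = u^2$ and both $a + \sqrt{b}$ and $u$ are non-negative, the target conclusion $x \le (a+\sqrt{b})^2$ is equivalent to the scalar bound $u \le a + \sqrt{b}$, so it suffices to establish the latter.

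First I would complete the square in the quadratic inequality, rewriting $u^2 - au - b \le 0$ as $(u - a/2)^2 \le b + a^2/4$. Because $u \ge 0$ and the right-hand side is non-negative, taking square roots is legitimate and yields $u - a/2 \le \sqrt{b + a^2/4}$, that is, $u \le a/2 + \sqrt{b + a^2/4}$.

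The only step requiring a small observation is the sub-additivity of the square root: I would note that $\sqrt{b + a^2/4} \le \sqrt{b} + a/2$, which follows immediately by squaring the right-hand side to get $b + a\sqrt{b} + a^2/4$ and discarding the non-negative cross term $a\sqrt{b}$. Combining this with the previous bound gives $u \le a/2 + \sqrt{b} + a/2 = a + \sqrt{b}$, and squaring back (both quantities being non-negative) yields $x = u^2 \le (a + \sqrt{b})^2$, as desired. There is no genuine obstacle here — the argument is a routine manipulation of a quadratic, and the only point to be careful about is ensuring all quantities under the square roots are non-negative so that the root operations preserve the inequalities.
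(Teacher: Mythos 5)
Your proof is correct and follows essentially the same route as the paper: both treat the hypothesis as a quadratic inequality in $\sqrt{x}$, bound $\sqrt{x}$ by the positive root $\frac{a+\sqrt{a^2+4b}}{2}$ (your completed square $a/2+\sqrt{b+a^2/4}$ is this same quantity), and then apply subadditivity of the square root to reach $a+\sqrt{b}$. No gap; the argument is sound as written.
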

\begin{proof}
$x\le a\sqrt{x}+b \Rightarrow x\le \left(\frac{a+\sqrt{a^2+b}}{2}\right)^2 \le (a+\sqrt{b})^2$. 
\end{proof}

\section{Computational Complexity of \ALGO}  \label{app_comput_complex}

Here we complement Remarks \ref{remark_comput_complexity_main} and \ref{rmk_comput_complexity_contraction} on the computational complexity of \ALGO (Alg.\,\ref{algo}).

The computational complexity of a \VISGO procedure can be bounded as $O(\frac{S^2 A}{1 - \rho} \log(B_{\star} / \epsVI))$ (assuming for simplicity that $B_{\star} \geq 1$, otherwise replace $\max\{B_{\star}, 1\} \leftarrow B_{\star}$). By the fact that total number of \VISGO procedure is bounded by $O(S A \log T)$, we derive $\log(B_{\star}/\epsVI) = O(S A \log(B_{\star} T))$ by choice of $\epsVI$. As a result, the total computational complexity for \ALGO is $O(T S^2 A \cdot S A \log(B_{\star} T) \cdot S A \log T)$, which is polynomially bounded and in particular near-linear in $T$. Also note that $T$ is bounded polynomially w.r.t.\,$K$ as shown in the various cases of Sect.\,\ref{subsect_bounding_T}. Indeed, in the case of positive costs lower bounded by $c_{\min} > 0$, Cor.\,\ref{thm_regret_bound_cmin_positive} entails that $T \leq c_{\min}^{-1} K V^{\star}(s_0) + c_{\min}^{-1} \wt{O}\big( B_{\star} \sqrt{ S A K } + B_{\star} S^2 A\big)$. In the general cost case, the cost perturbation trick is applied and the minimum cost becomes $K^{-n}$ for Cor.\,\ref{thm_regret_bound_cmin_zero_unknown_Tstar} or $(\ov T_{\star} K)^{-1}$ for Cor.\,\ref{thm_regret_bound_cmin_zero_priorknowledge_Tstar}, i.e., $c_{\min}^{-1}$ depends polynomially on $K$.

We note that the analysis of the computational complexity of \ALGO may likely be refined. Indeed, we see that i) on the one hand, if $n(s,a)$ is small, then the optimistic skewing of $\wt P_{s,a}$ is not too small so the probability of reaching the goal from $(s,a)$ is not too small (so the associated contraction modulus is bounded away from $1$) and ii) on the other hand, if $n(s,a) \rightarrow + \infty$, then $\wt P_{s,a} \rightarrow \wh P_{s,a} \rightarrow P_{s,a}$, so to the limit we should recover the convergence properties of VI of the optimal Bellman operator under the true model, which by assumption admits a proper policy in~$P$. Thus we see that studying further the ``intermediate regime'' may bring into the picture the computational complexity of running VI in the true model, yet this is not our main focus here, as our complexity analysis is sufficient to ensure the computational efficiency of \ALGO. 



\section{Unknown $B_{\star}$: Parameter-Free \ALGO}\label{sect_unknown_Bstar}

In this section, we relax the assumption that (an upper bound of) $B_{\star}$ is known to \ALGO. In Alg.\,\ref{algobstar} we propose a parameter-free \ALGO that bypasses the requirement $B \geq B_{\star}$ (line \ref{line-B-Bstar} of Alg.\,\ref{algo}) to tune the exploration bonus. As in Sect.\,\ref{sect_main_results} we consider for ease of exposition that $B_{\star} \geq 1$. We structure the section as follows: App.\,\ref{app_unknownBstar_scheme} presents our algorithm and provides intuition, App.\,\ref{app_unknownBstar_result} spells out its regret guarantee, and App.\,\ref{app_unknownBstar_proof} gives its proof.

\subsection{Algorithm and Intuition } \label{app_unknownBstar_scheme}

Parameter-free \ALGO (Alg.\,\ref{algobstar}) initializes an estimate $\wt B = 1$ and decomposes the time steps into \textit{phases}, indexed by $\phi$. The execution of a phase is reported in the subroutine \PHASE (Alg.\,\ref{algophase}). Given any estimate $\wt B$, a subroutine \PHASE has the same structure as Alg.\,\ref{algo}, up to two key differences:
\begin{itemize}[leftmargin=.2in,topsep=-1.5pt,itemsep=1pt,partopsep=0pt, parsep=0pt]
    \item \textit{\textbf{Halting due to exceeding cumulative cost.}} \PHASE tracks the cumulative cost within the current phase, and terminates whenever it exceeds a threshold $C_{\textup{bound}}$ (Eq.\,\ref{c_bound}) that depends on $\wt B$, $S$, $A$, $\delta$ and the current episode and time indexes $k$ and $t$, which are all computable quantities to the agent.
    \item \textit{\textbf{Halting due to exceeding \textup{\VISGO} range.}} During each \VISGO procedure, \PHASE tracks the range of the value function $V^{(i)}$ at each \VISGO iteration $i$, and terminates if $\norm{V^{(i)}}_{\infty} > \wt B$.
\end{itemize}
The estimate $\wt B$ can be incremented in two different ways and speeds:
\begin{itemize}[leftmargin=.2in,topsep=-1.5pt,itemsep=1pt,partopsep=0pt, parsep=0pt]
    \item \textit{\textbf{Doubling increment of $\wt B$.}} On the one hand, whenever a phase ends (i.e., one of the two halting conditions above is met), $\wt B$ is doubled ($\wt B \leftarrow 2 \wt B$). 
    \item \textit{\textbf{Episode-driven increment of $\wt B$.}} On the other hand, at the beginning of each new episode $k$, the estimate is automatically increased to $\wt B \leftarrow \max\{ \wt B, \, \sqrt{k} / (S^{3/2} A^{1/2}) \}$.
\end{itemize}
We now explain the rationale behind our scheme:
\begin{itemize}[leftmargin=.2in,topsep=-1.5pt,itemsep=1pt,partopsep=0pt, parsep=0pt]
    \item \textit{Reason for episode-driven increment of $\wt B$.} The fact that $\wt B$ grows as a function of $k$ implies that at some (unknown) point it will hold that $\wt B \geq B_{\star}$ for large enough $k$. This will enable us to recover the analysis and the regret bound of Thm.\,\ref{key_lemma_Tdependent}. 
    \item \textit{Reason for doubling increment of $\wt B$.} The doubling increment comes into play whenever a phase terminates due to an exceeding cumulative cost or \VISGO range. At this point, the agent becomes aware that $\wt B$ is too small and thus it doubles it. It is crucial to allow intra-episode increments of $\wt B$ to avoid getting \textit{stuck} in an episode with an underestimate $\wt B < B_{\star}$.
    \item \textit{Reason for cumulative cost halting.} The cost threshold $C_{\textup{bound}}$ is designed so that (w.h.p.) it can be exceeded at most once in the case of $\wt B \geq B_{\star}$, and so that it can serve as a tight enough bound on the regret in the case of $\wt B < B_{\star}$.
    \item \textit{Reason for \textup{\VISGO} range halting.} The threshold $\wt B$ on the range of the \VISGO value functions is chosen so that (w.h.p.) it is never exceeded in the case of $\wt B \geq B_{\star}$, and so that it can serve as a guarantee of finite-time near-convergence of a \VISGO procedure (i.e., the contraction property) in the case of $\wt B < B_{\star}$.
\end{itemize}

\subsection{Regret Guarantee of Parameter-Free \ALGO} \label{app_unknownBstar_result}

Parameter-free \ALGO satisfies the following guarantee (which extends Thm.\,\ref{key_lemma_Tdependent} to unknown $B_{\star}$).

\newtheorem*{T0}{Restatement of Theorem \ref{thm_key_unknownBstar}}
\begin{T0}
  \thmkeyunknownBstar
\end{T0}

As a result, parameter-free \ALGO is able to circumvent the knowledge of $B_{\star}$ at the cost of only logarithmic and lower-order terms.

\subsection{Proof of Theorem \ref{thm_key_unknownBstar}} \label{app_unknownBstar_proof}

We begin by defining notations and concepts exclusively used in this section:
\begin{itemize}[leftmargin=.2in,topsep=-1.5pt,itemsep=1pt,partopsep=0pt, parsep=0pt]
\item $C_t$ denotes the cumulative cost up to time step $t$ (included) that is accumulated in the execution of the subroutine \PHASE in which time step $t$ belongs. Importantly, note that the cumulative cost $C_t$ is initialized to $0$ at the beginning of each \PHASE (line \ref{line_reinitialize_cost_tracker} of Alg.\,\ref{algophase}). Also note that re-planning (i.e., a \VISGO procedure) occurs whenever the estimate $\wt B$ is changed.
\item Denote by $t_m$ the time step at the end of the current interval $m$, and by $k_m$ the episode in which the time step $t_m$ belongs. $\wt{B}_m$ denotes the value of $\wt B$ at time step $t_m$. $C_m$ denotes $C_{t_m}$, i.e., the cumulative cost up to interval $m$ (included) in the execution of the \PHASE in which interval $m$ belongs.
\end{itemize}

Unlike \ALGO of Alg.\,\ref{algo}, the parameter-free version has an increasing $\wt B$ throughout the process. To utilize the regret bounds (Thm.\,\ref{key_lemma_Tdependent} and Eq.\,\ref{regret_bound_interval}) in the case of $\wt B \geq B_{\star}$, slight modifications are needed to be applied to the algorithm and some lemmas.


\noindent \textbf{\emph{Modification to \textup{\ALGO}.}} Previously, \ALGO accepted a single value $B \ge \max\{B_{\star},\ 1\}$ to compute the bonuses in Eq.\,\ref{update_bonus}. To satisfy the same regret bound when $\wt B$ changes, we require \ALGO to accept a series of $B_k$ for $k \in \mathbb{N}^+$, such that $\max\{B_{\star},\ 1\} \le B_k \le B$ for any $k$. In any episode $k$, the analysis simply substitutes $B_k$ for $B$ in Eq.\,\ref{update_bonus}.

\noindent \textbf{\emph{Modifications to the proofs of Lem.\,\ref{lemma_optimism}, \ref{lemma_wtcalL_convergent} and \ref{lemma_bounding_bellman_error}.}} In the original version of the proofs, we proved the lemmas for any update of value functions, without mentioning any time relevant variables. Now since $B$ relies on episode $k$, the modified proofs need to incorporate the changes. Suppose that we are examining $Q(s,a),\ V(s),\ b(s,a)$ and $\beta(s,a)$ for any state-action pair $(s,a) \in \cS \times \cA$ in episode $k$. Lem.\,\ref{lemma_optimism} and Lem.\,\ref{lemma_wtcalL_convergent} utilize the property stated in Lem.\,\ref{lemma_properties_f}, and the $B$ in Lem.\,\ref{lemma_properties_f} is a parameter that is able to vary each time step we utilize Lem.\,\ref{lemma_properties_f}. Thus, in the proofs of Lem.\,\ref{lemma_optimism}, \ref{lemma_wtcalL_convergent} and \ref{lemma_bounding_bellman_error}, all the $B$'s are substituted with $B_k$'s to ensure that these lemmas are compatible with our modified setting. 

\noindent \textbf{\emph{Modification to the proof of bounding $\beta^m$ in App.\,\ref{full_proof_bound_X2}.}} Suppose that interval $m$ is in episode $k$ and recall that $B_k \le B$, then
\begin{align*}
    b^m(s,a) &= \max \left\{ c_1 \sqrt{ \frac{\mathbb{V}(\wt P_{s,a}, V^{(l)}) \iota_{s,a} }{n^m(s,a)}} ,\ c_2 \frac{B_k \iota_{s,a}}{n^m(s,a)}  \right\} + c_3 \sqrt{\frac{\wh{c}^m(s,a) \iota_{s,a}}{n^m(s,a)}} + c_4 \frac{B_k \sqrt{S' \iota_{s,a}}}{n^m(s,a)} \\
    &\le O \left( \sqrt{ \frac{\mathbb{V}(\wt P_{s,a}, V^{(l)}) \iota_{s,a} }{n^m(s,a)}} + \frac{B \iota_{s,a}}{n^m(s,a)} + \sqrt{\frac{\wh{c}^m(s,a) \iota_{s,a}}{n^m(s,a)}} + \frac{B \sqrt{S \iota_{s,a}}}{n^m(s,a)} \right).
\end{align*}
Combining the above bound of $b^m(s,a)$ with Lem.\,\ref{lemma_bounding_bellman_error}, we get that the bound of $\beta^m$ in App.\,\ref{full_proof_bound_X2} is unchanged.


Equipped with the slight modifications mentioned above, we now derive two key properties on which the analysis of parameter-free \ALGO relies:



\noindent \textbf{\emph{Property 1: Optimism avoids the first halting condition.}} Let us study any phase starting with estimate $\wt B \ge B_{\star}$. From Eq.\,\ref{regret_bound_interval} (which is the interval-generalization of Thm.\,\ref{key_lemma_Tdependent}), for a fixed initial state $s_0$ and a fixed interval $m$, the cumulative cost can be bounded with probability $1 - \delta$ by 
\begin{align}
    k_m V^{\star}(s_0) + x \left( B_{\star} \sqrt{ S A k_m } \log_2 \left( \frac{B_{\star} t_m S A}{\delta} \right) + \wt{B}_m S^2 A \log_2^2\left( \frac{B_{\star} t_m S A}{\delta} \right) \right),  \label{eqeqeq_cost}
\end{align}
where $x > 0$ is a large enough absolute constant (which can be retraced in the analysis leading to Eq.\,\ref{regret_bound_interval}). By scaling $\delta \gets \delta / (2 S t_m^2)$ for each $m \le M$, we have the following cumulative cost bound that holds for any initial state in $\cS$ and any interval $m \le M$, with probability $1 - \delta$,
\begin{align*} 
    C_m &\le k_m V^{\star}(s_0) + x \left( B_{\star} \sqrt{ S A k_m } \log_2 \left( \frac{B_{\star} t_m S A \cdot 2 S t_m^2}{\delta} \right) + \wt{B}_m S^2 A \log_2^2\left( \frac{B_{\star} t_m S A \cdot 2 S t_m^2}{\delta} \right) \right) \\
    &\le k_m B_{\star} + 3 x \left( B_{\star} \sqrt{ S A k_m } \log_2 \left( \frac{B_{\star} t_m S A}{\delta} \right) + \wt{B}_m S^2 A \log_2^2\left( \frac{B_{\star} t_m S A}{\delta} \right) \right).
\end{align*}
Since we are in the case of $\wt{B}_m \ge B_{\star}$, we have
\begin{align} \label{eq_bound_C_unknownB}
    C_m \le k_m \wt{B}_m + 3 x \left( \wt{B}_m \sqrt{S A k_m } \log_2 \left( \frac{\wt{B}_m t_m S A}{\delta} \right) + \wt{B}_m S^2 A \log_2^2\left( \frac{\wt{B}_m t_m S A}{\delta} \right) \right).
\end{align}
Since costs are non-negative, for any $t \le t_m$, we have $C_t \le C_m$ hence $C_t$ must also satisfy the bound of Eq.\,\ref{eq_bound_C_unknownB}. There remains to predict the values of $k_m,\ t_m,\ \wt{B}_m$, given the current $k_{\textup{cur}},\ t_{\textup{cur}},\ \wt{B}_{\textup{cur}}$. The upper bounds for $k_m$ and $\wt{B}_m$ are $k_{\textup{cur}}$ and $\wt{B}_{\textup{cur}}$ respectively, since they can only be incremented when reaching the goal $g$, which is a condition for ending the current interval. The upper bound for $t_m$ can be derived using the pigeonhole principle: since $t_{\textup{cur}} = \sum_{(s,a) \in \cS \times \cA} n(s,a)$, we know that $2t_{\textup{cur}} > \sum_{(s,a) \in \cS \times \cA} (2n(s,a) - 1)$. Thus by time step $2t_{\textup{cur}}$ there must exist a trigger condition, which is a condition for ending the current interval. Hence, by replacing $k_m \leftarrow k_{\textup{cur}}$, $\wt B_m \leftarrow \wt{B}_{\textup{cur}}$ and $t_m \leftarrow 2 t_{\textup{cur}}$ in Eq.\,\ref{eq_bound_C_unknownB}, we get, with probability at least $1-\delta$, that the cumulative cost within a phase that starts with $\wt B \geq B_{\star}$ has the following anytime upper bound 
\begin{align*}
    C_{t_{\textup{cur}}} \le k_{\textup{cur}} \wt{B}_{\textup{cur}} + 3 x \left( \wt{B}_{\textup{cur}} \sqrt{S A k_{\textup{cur}}} \log_2 \left( \frac{2 \wt{B}_{\textup{cur}} t_{\textup{cur}} S A}{\delta} \right) + \wt{B}_{\textup{cur}} S^2 A \log_2^2 \left( \frac{2 \wt{B}_{\textup{cur}} t_{\textup{cur}} S A}{\delta} \right) \right).
\end{align*}
Note that this bound corresponds exactly to the cumulative cost threshold $C_{\textup{bound}}$ in Eq.\,\ref{c_bound}. 
This means that with probability at least $1-\delta$, the first halting condition cannot be met in a phase that starts with $\wt B \geq B_{\star}$. 


\noindent \textbf{\emph{Property 2: Optimism avoids the second halting condition.}} Let us consider the case of $\wt B \ge B_{\star}$ whenever the algorithm re-plans (i.e., running \VISGO procedure). The proof of Lem.\,\ref{lemma_optimism} ensures that at any iteration, $\norm{V^{(i)}}_{\infty}\le B_{\star} \le \wt B$, so the second halting condition is never met under the same high-probability event as above.

\noindent \textbf{\emph{Implications.}} The two properties above indicate that, if a phase starts with estimate $\wt B \geq B_{\star}$, with probability at least $1 - \delta$, this phase will never halt due to the two halting conditions (it can only terminate if it completes the final episode $K$), and Alg.\,\ref{algobstar} will thus never enter a new phase. Due to the doubling increment of $\wt B$ every time a phase ends, we can therefore bound the total number of phases as $\Phi \le \lceil \log_2 (B_{\star}) \rceil + 1$.

\noindent \textbf{\emph{Analysis.}} We now split the analysis of the regret contributions of the episodes in two \textit{regimes}. To this end, let $\kappa_{\star} := \lceil B_{\star}^2 S^3 A \rceil$ denote a special episode (note that it is unknown to the learner since it depends on $B_{\star}$). We consider that the high-probability event mentioned above holds (which is the case with probability at least~$1-\delta$). Recall that at the beginning of each episode $k$, the algorithm sets $\wt B \gets \max\{\wt B,\ \sqrt{k} / (S^{3/2} A^{1/2}) \}$. 

\paragraph{\ding{172} Regret contribution in the first regime (i.e., episodes $k < \kappa_{\star}$).}~{}

\vspace{0.05in}
\noindent We denote respectively by $R_{1 \rightarrow \kappa_{\star}}$ and $C_{1 \rightarrow \kappa_{\star}}$ the cumulative regret and the cumulative cost incurred by the algorithm before episode $\kappa_{\star}$ begins. For any phase $\phi$, we denote by 
\begin{itemize}[leftmargin=.2in,topsep=-1.5pt,itemsep=1pt,partopsep=0pt, parsep=0pt]
    \item $C^{(\phi)}_{1 \to \kappa_{\star}}$ the cumulative cost incurred during the time steps that are \textit{both} in phase $\phi$ and in an episode~$k < \kappa_{\star}$;
    
    \item $k^{(\phi)}$ the episode when phase $\phi$ ends;
    
    \item $t^{(\phi)}$ the time step when phase $\phi$ ends;
    
    \item $\wt{B}^{(\phi)}$ the value of $\wt B$ at the end of phase $\phi$.

\end{itemize}
Observe that 
\vspace{-0.1in}
\begin{align*}
    C_{1 \rightarrow \kappa_{\star}} = \sum_{\phi=1}^{\Phi} C^{(\phi)}_{1 \to \kappa_{\star}}.
\end{align*}

\vspace{-0.16in}
Now, by definition of $\kappa^{\star}$, the episode-driven increment of $\wt B$ never exceeds $B_{\star}$, unless $\wt B$ is already larger or equal to $B_\star$ at the beginning of the phase. But Property 1 ensures that if $\wt B \ge B_{\star}$ in the beginning of a phase, then $\wt B$ will never be doubled afterwards. Hence, we are guaranteed that within the episodes $k < \kappa_{\star}$, the final value of the estimate $\wt B$ is at most $2 B_{\star}$.

Since \PHASE tracks the cumulative cost at each step using the threshold in Eq.\,\ref{c_bound} and since $c_t \le 1$, by the fact that $C_{\textup{bound}}$ is monotonously increasing with respect to $t$, we have that for any phase $\phi$,
\makeatletter
\newcommand*\mysizeu{%
   \@setfontsize\mysizeu{8.85}{9}%
}
\makeatother
\begin{mysizeu}
\begin{align*}
    C^{(\phi)}_{1 \to \kappa_{\star}} &\le k^{(\phi)} \wt{B}^{(\phi)} + 3 x \left( \wt{B}^{(\phi)} \sqrt{S A k^{(\phi)}} \log_2 \left( \frac{2 \wt{B}^{(\phi)} t^{(\phi)} S A}{\delta} \right) + \wt{B}^{(\phi)} S^2 A \log_2^2 \left( \frac{2 \wt{B}^{(\phi)} t^{(\phi)} S A}{\delta} \right) \right) + 1 \\
    &\le \kappa_{\star} (2 B_{\star}) + 3 x \left( (2 B_{\star}) \sqrt{S A \kappa_{\star}} \log_2 \left( \frac{2 (2 B_{\star}) T S A}{\delta} \right) + (2 B_{\star}) S^2 A \log_2^2 \left( \frac{2 (2 B_{\star}) T S A}{\delta} \right) \right) + 1 \\
    &\le O\left( B_{\star}^3 S^3 A +  B_{\star}^2 S^2 A \log \left( \frac{ B_{\star} T S A}{\delta} \right) + B_{\star} S^2 A \log^2 \left( \frac{B_{\star} T S A}{\delta} \right)  \right).
\end{align*}%
\end{mysizeu}%
In addition, we recall that $\Phi \le \lceil \log_2 (B_{\star}) \rceil + 1$. Hence, by plugging in the definition of $\kappa^{\star}$, we can bound the cost (and thus the regret) accumulated over the episodes $k < \kappa_{\star}$ as follows
\makeatletter
\newcommand*\mysizev{%
   \@setfontsize\mysizev{9.5}{9.5}%
}
\makeatother
\begin{mysizev}%
\begin{align*}
    R_{1 \to \kappa_{\star}} \leq C_{1 \to \kappa_{\star}}  &\leq \sum_{\phi=1}^{\lceil \log_2 (B_{\star}) \rceil + 1} O\left( B_{\star}^3 S^3 A +  B_{\star}^2 S^2 A \log \left( \frac{ B_{\star} T S A}{\delta} \right) + B_{\star} S^2 A \log^2 \left( \frac{B_{\star} T S A}{\delta} \right)  \right)  \\
    &\le O\Big( B_{\star}^{3} S^3 A \log(B_{\star}) + B_{\star}^{2} S^2 A \log \left( \frac{B_{\star} T S A}{\delta} \right) \log(B_{\star}) \\ 
    &\quad \quad + B_{\star} S^2 A \log^2 \left( \frac{B_{\star} T S A}{\delta} \right) \log(B_{\star}) \Big) \\
    &\le O\left( B_{\star}^{3} S^3 A \ov \iota + B_{\star}^{2} S^2 A \ov{\iota}^2 + B_{\star} S^2 A \ov{\iota}^3 \right).
\end{align*}%
\end{mysizev}%
\paragraph{\ding{173} Regret contribution in the second regime (i.e., episodes $k \geq \kappa_{\star}$).}~{}

\vspace{0.05in}

\noindent We denote respectively by $R_{\kappa_{\star} \to K}$ and $C_{\kappa_{\star} \to K}$ the cumulative regret and the cumulative cost incurred during the episodes $k \geq \kappa^{\star}$. By definition of $\kappa^{\star}$, the episode-driven increment of $\wt B$ ensures that $\wt B \ge B_{\star}$. During this second regime there may be at most two phases: one that started at an episode $k < \kappa_{\star}$ (i.e., in the first regime) and that overlaps the two regimes, and one starting after that (note that properties 1 and 2 ensure that at this point neither halting condition can end this phase since it started with estimate $\wt B \geq B_{\star}$, thus it lasts until the end of the learning interaction). In addition, we can upper bound $\wt B$ as follows
\vspace{-0.07in}
\begin{align*}
    \wt{B} \le \max\Big\{2B_{\star},\ \frac{2 \sqrt{K}}{S^{3/2} A^{1/2}} \Big\}.
\end{align*}
We now introduce a fourth condition of stopping an interval to the analysis performed in Sect.\,\ref{subsect_intervaldecomp_notation}: (4) an interval ends when a subroutine \PHASE ends. This implies that the policy always stays the same within an interval when running Alg.\,\ref{algobstar}. Condition (4) is met at most once in the second regime.

We now focus on only the second regime: we re-index intervals by $1,2,\ldots,M'$ and let $T_m$ denote the time step counting from the beginning of $\kappa_{\star}$ to the end of interval $m$. To bound $R_{\kappa_{\star} \to K}$, we need to adapt the proofs in App.\,\ref{subsect_regret_decomposition} and App.\,\ref{full_proof_bound_X2} to be compatible with our new interval decomposition. Concretely, there are two slight modifications in the analysis of the second regime:
\begin{itemize}[leftmargin=.2in,topsep=-1.5pt,itemsep=1pt,partopsep=0pt, parsep=0pt]
    
    \item Statistics: For any statistic (i.e., $N(s,a,s'),\ \theta(s,a)$ and $\wh c(s,a)$ for any $(s,a,s')\in \cS \times \cA \times \cS'$), instead of learning from scratch, \PHASE reuses all samples collected thus far. This difference does not affect the regret bound and the probability, since it can be viewed by taking a partial sum of terms in $\wt{R}_{M'}$.
    
    \item The regret decomposition: In the proof of Lem.\,\ref{lemma_bound_sum_diff_Vm}, we need to incorporate condition (4) which is met at most once during the second regime. It falls into case (ii) in the proof of Lem.\,\ref{lemma_bound_sum_diff_Vm}, which thus happens at most $2 S A \log_2 (T_{M'}) + 1$ times, and the regret decomposition should be
    \begin{align*}
        \wt{R}_{M'} \le X_1(M') + X_2(M') + X_3(M') + 2 B_{\star} S A \log_2 (T_{M'}) + B_{\star}.
    \end{align*}
    
\end{itemize}
Hence by incorporating these slight modifications in the proof of Thm.\,\ref{key_lemma_Tdependent}, we get probability at least~$1 - \delta$, 
\begin{align*}
    R_{\kappa_{\star} \to K} &\le O \left( B_{\star} \sqrt{S A K} \log\left( \frac{B_{\star} T S A}{\delta} \right) + S^2 A \wt{B}_{M'} \log^2\left( \frac{B_{\star} T S A}{\delta} \right) \right)\\
    &\le O \left( B_{\star} \sqrt{S A K} \log\left( \frac{B_{\star} T S A}{\delta} \right) + S^2 A \frac{\sqrt{K}}{S^{3/2} A^{1/2}} \log^2\left( \frac{B_{\star} T S A}{\delta} \right) \right)\\
    &\le O \left( B_{\star} \sqrt{S A K} \ov \iota + \sqrt{S A K} \ov{\iota}^2 \right).
\end{align*}

\paragraph{\ding{174} Combining the regret contributions in the two regimes.}~{}

\vspace{0.05in}

\noindent The overall regret is bounded with probability at least $1 - \delta$ by
\begin{align*}
    R_K = R_{1 \to \kappa_{\star}} + R_{\kappa_{\star} \to K} \le O\left( B_{\star} \sqrt{S A K} \ov \iota + \sqrt{S A K} \ov{\iota}^2 + B_{\star}^{3} S^3 A \ov \iota + B_{\star}^{2} S^2 A \ov{\iota}^2 + B_{\star} S^2 A \ov{\iota}^3 \right).
\end{align*}
There remains to plug in the definition of $\ov{\iota}$. Denote by $T$ the cumulative time within the $K$ episodes and by $R^{\star}_K$ the regret after $K$ episodes of~\textup{\ALGO} in the case of known $B_{\star}$ (i.e., the bound of Thm.\,\ref{key_lemma_Tdependent} with $B = B_{\star}$). Then with probability at least $1 - \delta$ the regret of parameter-free \ALGO can be bounded as 
\begin{align*}
    R_K &= O \left( R^{\star}_K + \sqrt{S A K} \log^{2}\left( \frac{B_{\star} S A T }{\delta} \right) + B_{\star}^3 S^3 A \log^{3}\left( \frac{B_{\star} S A T }{\delta} \right) \right) \\
    &= O \left( R^{\star}_K \log\left( \frac{B_{\star} S A T }{\delta} \right) + B_{\star}^3 S^3 A \log^{3}\left( \frac{B_{\star} S A T }{\delta} \right) \right).
\end{align*}

This concludes the proof of Thm.\,\ref{thm_key_unknownBstar}.

\vspace{0.1in}

\begin{remark}
    At a high level, our analysis to circumvent the knowledge of $B_{\star}$ boils down to the following argument: if the estimate is too small, we bound the regret by the cumulative cost; otherwise if it is large enough, we recover the regret bound under a known upper bound on $B_\star$. Interestingly, this somewhat resembles the reasoning behind the schemes for unknown SSP-diameter $D$ in the adversarial SSP algorithms of \citet[][App.\,I]{rosenberg2020stochastic} and \citet[][App.\,E]{chen2021finding} (recall that $D := \max_{s \in \cS} \min_{\pi \in \Pi_{\textup{proper}}} T^{\pi}(s)$ and that $B_{\star} \leq D \leq T_{\star}$). Note however that these schemes change their algorithms' structure: whenever the agent is in a state that is insufficiently visited, it executes the Bernstein-SSP algorithm of \citet{rosenberg2020near} with unit costs until the goal is reached. In other words, these schemes first learn to reach the goal (regardless of the costs) and then focus on minimizing the costs to goal. In contrast, our scheme for unknown $B_{\star}$ targets the original SSP objective from the start and it does \textit{not} fundamentally alter our algorithm \ALGO with known $B_{\star}$. Indeed, the only addition of parameter-free \ALGO is a \textit{dual tracking} of the cumulative costs and \VISGO ranges, and a \textit{careful increment} of the estimate $\wtB$ in the bonus. Finally, our scheme only adds ``horizon-free'' lower-order terms (i.e., $B_{\star}, S, A$) as shown in Thm.\,\ref{thm_key_unknownBstar}, as opposed to the aforementioned schemes that introduce a lower-order dependence on the SSP-diameter $D$, which may be much larger than $B_{\star}$. 
\end{remark}

\begin{algorithm}[!t]
        \small
        \DontPrintSemicolon
        \textbf{Input:} $\cS,\ s_0 \in \cS,\ g \not\in \cS,\ \cA,\ \delta$. \\
        \textbf{Optional input:} cost perturbation $\eta \in [0, 1]$. \\
        Set up \textbf{global constants:} $\cS,\ \cA,\ s_0 \in \cS,\ g \not\in \cS,\ \eta$.\\
        Set up \textbf{global variables:} $t,\ j,\ N(),\ n(),\ \wh P,\ \theta(),\ \wh c(),\ Q(),\ V()$. \\
        Set estimate $\wt B \gets 1$. \\
        Set current starting state $s_{\text{start}} \gets s_0$. \\
        Set $t\gets 1,\ k\gets 1,\ j \gets 0$. \\
        For $(s,a,s') \in \cS \times \cA \times \cS'$, set $N(s,a) \leftarrow 0; ~ n(s,a) \leftarrow 0; ~ N(s,a,s') \leftarrow 0; ~ \wh P_{s,a,s'} \leftarrow 0; ~ \theta(s,a) \leftarrow 0; ~ \wh c (s,a) \leftarrow 0; ~ Q(s,a) \leftarrow 0; ~ V(s) \leftarrow 0$. \\
        Set phase counter $\phi \gets 1$. \\
        \While{\textup{True}} {
            Set $s_{\textup{cur}},\ \wt{B}_{\textup{cur}},\ k_{\textup{cur}} \gets$ \PHASE($s_{\textup{start}},\ \wt B,\ k$) ~ (Alg.\,\ref{algophase}). \\
            \algocomment{~\textup{\PHASE} halts because of $B_{\star}$ underestimation, entering a new \textbf{phase}} \\
            Set $s_{\text{start}} \gets s_{\textup{cur}},\ k \gets k_{\textup{cur}},\ \wt B \gets 2 \wt{B}_{\text{cur}}$, and increment phase index $\phi \gets \phi + 1$. \\
        }
        \caption{Algorithm for unknown $B_{\star}$: Parameter-free \ALGO}
        \label{algobstar}
\end{algorithm}

\begin{algorithm}
        \small
        \DontPrintSemicolon
        \textbf{Input:} $s_{\text{start}} \in \cS,\ \wt B,\ k$. \\
        \textbf{Global constants:} $\cS,\ \cA,\ s_0 \in \cS,\ g \not\in \cS,\ \eta$.\\
        \textbf{Global variables:} $t,\ j,\ N(),\ n(),\ \wh P,\ \theta(),\ \wh c(),\ Q(),\ V()$. \\
        \textbf{Specify:} Trigger set $\mathcal{N} \leftarrow \{ 2^{j-1}\ :\ j=1,2,\ldots\}$. Constants $c_1 = 6,\ c_2 = 36,\ c_3 = 2 \sqrt{2},\ c_4 = 2 \sqrt{2}$. Large enough absolute constant $x > 0$ (so that Eq.\,\ref{eqeqeq_cost} holds, see App.\,\ref{app_unknownBstar_proof}). \\ 
        Set $C \gets 0$. \label{line_reinitialize_cost_tracker}  \algocomment{~Reinitialize cumulative cost tracker} \\
        \For{\textup{episode} $k_{\textup{cur}}=k, k + 1, \ldots$}
        {
            \If{$\sqrt{k_{\textup{cur}}} / (S^{3/2} A^{1/2}) > \wt B$} {
                Set $\wt B \gets \sqrt{k_{\textup{cur}}} / (S^{3/2} A^{1/2})$, and set $j\gets j + 1,\ \epsVI \gets 2^{-j} / (S A)$. \\
                Info, $Q,\ V \gets$ \VISGO($\wt{B},\ \epsVI$). \\
                \If{\textup{Info = Fail}} {
                    \algocomment{~Second halting condition: \textup{\VISGO} range exceeds threshold} \\
                    \Return $s_t,\ \wt B,\ k_{\textup{cur}}$.
                }
            }
            \vspace{-0.1in}
            Set $s_t \gets \left\{\begin{array}{ll}
                s_{\text{start}}, & k_{\textup{cur}} = k,\\
                s_0, & \textup{otherwise}.
            \end{array}\right.$ \\
        \While{$s_t \neq g$}{
        Take action $a_t = \argmin_{a \in \cA} Q(s_t, a)$, incur cost $c_t$ and observe next state $s_{t+1} \sim P(\cdot \vert s_t, a_t)$.\\
        Set $(s,a,s',c) \leftarrow (s_t, a_t, s_{t+1}, \max\{c_t, \eta\})$ and $t \leftarrow t + 1$.\\
        Set $N(s,a) \leftarrow N(s,a) + 1$, $\theta(s,a) \leftarrow \theta(s,a) + c$, $C \gets C + c$, $N(s,a,s') \leftarrow N(s,a,s') + 1$, and set
        \vspace{-0.1in}
        \begin{align}
            C_{\textup{bound}} \gets k_{\textup{cur}} \wt B + 3 x \left( \wt B \sqrt{S A k_{\textup{cur}}} \log_2 \left( \frac{2 \wt B t S A}{\delta} \right) + \wt B S^2 A \log_2^2 \left( \frac{2 \wt B t S A}{\delta} \right) \right). \label{c_bound}
        \end{align}\\
        \vspace{-0.1in}
        \If{$C > C_{\textup{bound}}$} {\label{first_hald_cond}
            \algocomment{~First halting condition: cumulative cost exceeds threshold} \\
            \Return $s_t,\ \wt B,\ k_{\textup{cur}}$.
        }
        \If{$N(s,a) \in \mathcal{N}$}{
        Set $\wh{c}(s,a) \leftarrow \mathds{I}[N(s,a) \geq 2] \frac{2 \theta(s,a)}{N(s,a)} + \mathds{I}[N(s,a) = 1] \theta(s,a)$ and $\theta(s,a) \leftarrow 0$. \\
        For all $s' \in \mathcal{S}$, set $\wh P_{s,a,s'} \leftarrow N(s,a,s') / N(s,a)$, $n(s,a) \leftarrow N(s,a)$, and set $j \leftarrow j + 1,\ \epsVI \leftarrow 2^{-j} / (S A)$. \\
        Info, $Q,\ V \gets$ \VISGO($\wt{B},\ \epsVI$). \\
        \If{\textup{Info = Fail}} {
            \algocomment{~Second halting condition: \textup{\VISGO} range exceeds threshold} \\
            \Return $s_t,\ \wt B,\ k_{\textup{cur}}$.
        }
        }
        }
        }
        \caption{Subroutine \PHASE}
        \label{algophase}
\end{algorithm}

\begin{algorithm}[b!]
        \small
        \DontPrintSemicolon
        \textbf{Inputs:} $\wt{B},\ \epsVI$. \\
        \textbf{Global constants:} $\cS,\ \cA,\ s_0 \in \cS,\ g \not\in \cS,\ \eta$.\\
        \textbf{Global variables:} $t,\ j,\ N(),\ n(),\ \wh P,\ \theta(),\ \wh c(),\ Q(),\ V()$. \\
        For all $(s,a,s') \in \cS \times \cA \times \cS'$, set
        \begin{align*} 
            \wt P_{s,a,s'} &\leftarrow \frac{n(s,a)}{n(s,a)+1} \wh P_{s,a,s'} + \frac{\mathds{I}[s'=g]}{n(s,a)+1}.
        \end{align*}\\
        For all $(s,a) \in \cS \times \cA$, set $n^+(s,a) \leftarrow \max\{n(s,a), 1\},\ \iota_{s,a} \gets \ln \left( \frac{12 S A S' [n^+ (s,a)]^2}{\delta} \right)$.\\
        Set $i \leftarrow 0$, $V^{(0)} \leftarrow 0$, $V^{(-1)} \leftarrow +\infty$. \\
        \While{$\norm{ V^{(i)} - V^{(i-1)}}_{\infty} > \epsVI$}
        { 
        For all $(s,a) \in \cS \times \cA$, set 
        \begin{align}
            &b^{(i+1)}(s,a) \, \leftarrow \, \max \Big\{ c_1 \sqrt{ \frac{\mathbb{V}(\wt P_{s,a}, V^{(i)}) \iota_{s,a} }{n^+(s,a)}} , \, c_2 \frac{\wt B \iota_{s,a}}{n^+(s,a)}  \Big\} + c_3 \sqrt{\frac{\wh{c}(s,a) \iota_{s,a}}{n^+(s,a)}} + c_4 \frac{\wt B \sqrt{S' \iota_{s,a}}}{n^+(s,a)}, \\
            &Q^{(i+1)}(s,a) \, \leftarrow \, \max\big\{ \wh{c}(s,a) \,+ \, \wt P_{s,a} V^{(i)}  ~ - \,b^{(i+1)}(s,a), ~ 0 \big\}, \\
            &V^{(i+1)}(s) \, \leftarrow \, \min_a Q^{(i+1)}(s,a).
        \end{align}\\
        Set $V^{(i + 1)}(g) \gets 0$ and $i \leftarrow i+1$. \\
        \If{$\norm{V^{(i)}}_{\infty} > \wt B$} {\label{second_hald_cond}
            \algocomment{~Second halting condition: \textup{\VISGO} range exceeds threshold} \\
            \Return Fail, $Q^{(i)},\ V^{(i)}$.
        }
        }
        \Return Success, $Q^{(i)},\ V^{(i)}$.
        \caption{Subroutine \VISGO}
        \label{plan}
\end{algorithm}

\end{document}